\DeclareMathOperator*{\argmin}{arg\,min}
\newcommand{\parhead}[1]{{\bf{#1}}\hspace{3pt}}
\newcommand{\at}[2][]{#1|_{#2}}
\newacronym{ntl}{NeuTraL AD}{neural transformation learning for anomaly detection}
\newacronym{ntlC}{NeuTraL AD}{Neural Transformation Learning for Anomaly Detection}
\newacronym{dcl}{DCL}{deterministic contrastive loss}
\newacronym{ocsvm}{OC-SVM}{one-class SVM}
\newacronym{if}{IF}{isolation forest}
\newacronym{lof}{LOF}{local outlier factor}
\newacronym{svdd}{Deep SVDD}{}
\newacronym{dagmm}{DAGMM}{}
\newacronym{goad}{GOAD}{}
\newacronym{drocc}{DROCC}{}
\newacronym{mi}{MI}{mutual information}
\newacronym{rnn}{RNN}{RNN-based model}
\newacronym{lstm}{LSTM-ED}{}
\newacronym{sad}{SAD}{spoken Arabic digits}
\newacronym{natops}{NATOPS}{naval air training and operating procedures standardization}
\newacronym{ct}{CT}{character trajectories}
\newacronym{rs}{RS}{Racket sports}
\newacronym{epilepsy}{EPSY}{Epilepsy}
\newtheorem{claim}{Proposition}
\newtheorem{prop}{Proposition}
\newtheorem{prt}{Proposition 3, Part}
\newtheorem{requirement}{Requirement}
\crefname{requirement}{Requirement}{Requirements}
\crefname{claim}{Proposition}{Propositions}
\icmltitlerunning{\acrfull{ntlC}}
\begin{document}

\twocolumn[
\icmltitle{Neural Transformation Learning for Deep Anomaly Detection Beyond Images}



\icmlsetsymbol{equal}{*}

\begin{icmlauthorlist}
\icmlauthor{Chen Qiu}{bcai,tuk}
\icmlauthor{Timo Pfrommer}{bcai}
\icmlauthor{Marius Kloft}{tuk}
\icmlauthor{Stephan Mandt}{uci}
\icmlauthor{Maja Rudolph}{bcai}
\end{icmlauthorlist}

\icmlaffiliation{bcai}{Bosch Center for AI}
\icmlaffiliation{tuk}{TU Kaiserslautern}
\icmlaffiliation{uci}{UC Irvine}

\icmlcorrespondingauthor{Maja Rudolph}{majarita.rudolph@de.bosch.com}

\icmlkeywords{Machine Learning, anomaly detection, deep learning, contrastive losses, transformation learning}

\vskip 0.3in
]



\printAffiliationsAndNotice{} 

\begin{abstract}
Data transformations (e.g. rotations, reflections, and cropping) play an important role in self-supervised learning. Typically, images are transformed into different views, and neural networks trained on tasks involving these views produce useful feature representations for downstream tasks, including anomaly detection. However, for anomaly detection beyond image data, it is often unclear which transformations to use. 
Here we present a simple end-to-end procedure for anomaly detection with \emph{learnable} transformations. The key idea is to embed the transformed data into a semantic space such that the transformed data still resemble their untransformed form, while different transformations are easily distinguishable. 
Extensive experiments on time series show that our proposed method outperforms existing approaches in the one-vs.-rest setting and is competitive in the more challenging $n$-vs.-rest anomaly-detection task.
On medical and cyber-security tabular data, our method learns domain-specific transformations and detects anomalies more accurately than previous work.
\end{abstract}

\section{Introduction}
Many recent advances in anomaly detection rely on the paradigm of data augmentation. In the self-supervised setting, especially for image data, predefined transformations such as rotations, reflections, and cropping are used to generate varying {\em views} of the data. This idea has led to strong anomaly detectors based on either transformation prediction \citep{golan2018deep,wang2019effective,hendrycks2019using} or using representations learned using these views \citep{chen2020simple} for downstream anomaly detection tasks  \citep{sohn2020learning, tack2020csi}.

Unfortunately, for data other than images, such as time series or tabular data, it is much less well known which transformations are useful, and it is hard to design these transformations manually. This paper studies self-supervised anomaly detection for data types beyond images. We develop \gls{ntl}: a simple end-to-end procedure for anomaly detection with {\em learnable} transformations. Instead of manually designing data transformations to construct auxiliary prediction tasks that can be used for anomaly detection, we derive a single objective function for jointly learning useful data transformations and anomaly thresholding.  
As detailed below, the idea is to learn a variety of transformations such that the transformed samples share semantic information with their untransformed form, while different views are easily distinguishable.  

\Gls{ntl} has only two components: a fixed set of learnable transformations and an encoder model. Both elements are jointly trained on a noise-free, \gls{dcl} designed to learn faithful transformations. Our \gls{dcl} is different from other contrastive losses in representation learning \citep{gutmann2010noise,gutmann2012noise,mnih2013learning,oord2018representation,bamler2020extreme,chen2020simple} and image anomaly detection \citep{tack2020csi,sohn2020learning}, all of which use negative samples from a noise distribution. In contrast, our approach leads to a non-stochastic objective that neither needs any additional regularization or adversarial training  \citep{tamkin2020viewmaker} and can be directly used as the anomaly score.

Our approach leads to a new state of the art in anomaly detection beyond images. For time series and tabular data, \gls{ntl} significantly improves the anomaly detection accuracy. For example, in an epilepsy time series dataset, we raised the state-of-the-art from an AUC of {\bf$82.6\%$} to {\bf$92.6\%$} ($+10\%$). On an Arrhythmia tabular dataset, we raised the F1 score by $+2.9$ percentage points to an accuracy of $60.3$.



Our paper is structured as follows. We first discuss related work (\Cref{sec:related}) and present \gls{ntl} in \Cref{sec:ntl}. In \Cref{sec:ntl_obs}, we discuss its advantages for neural transformation learning in comparison to other self-supervised learning objectives. Experimental results are presented in \Cref{sec:empirical}.  \Cref{sec:conclusion} concludes this paper.

\begin{figure*}[t!]
\begin{center}
	\captionsetup[subfigure]{labelformat=empty}
	\centering
	\resizebox{0.8\textwidth}{!}{
	\LARGE
	\begin{tabular}{@{}c@{}c@{}l@{}l@{}c@{}}
	\includegraphics[height=0.4\textheight]{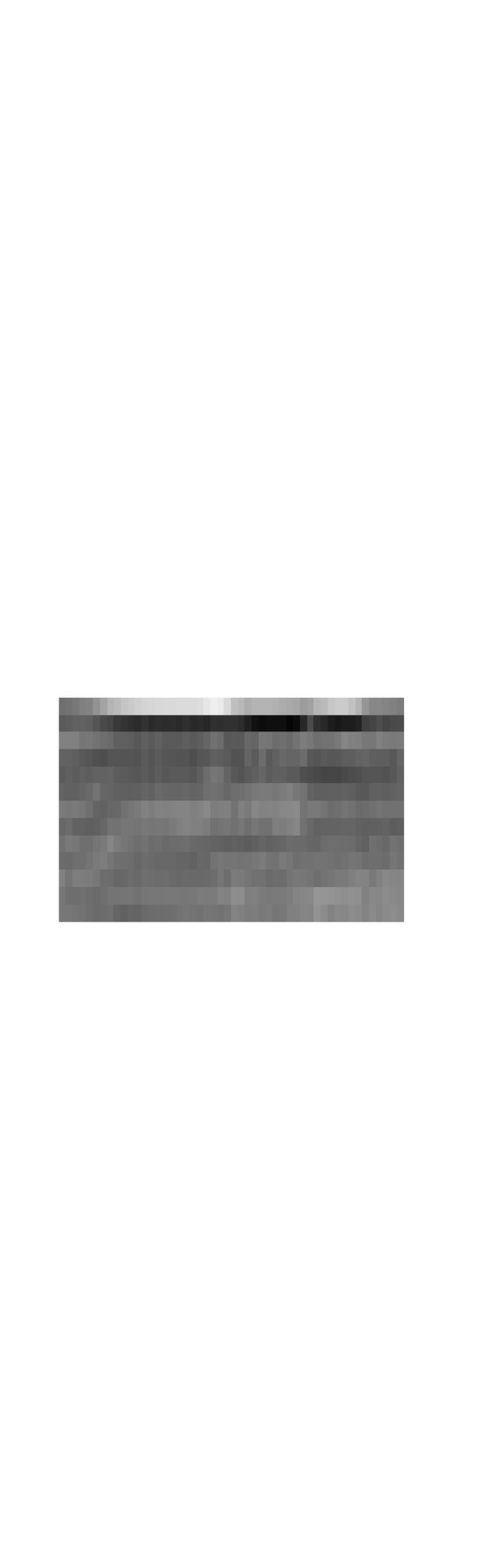}&
	\includegraphics[height=0.4\textheight]{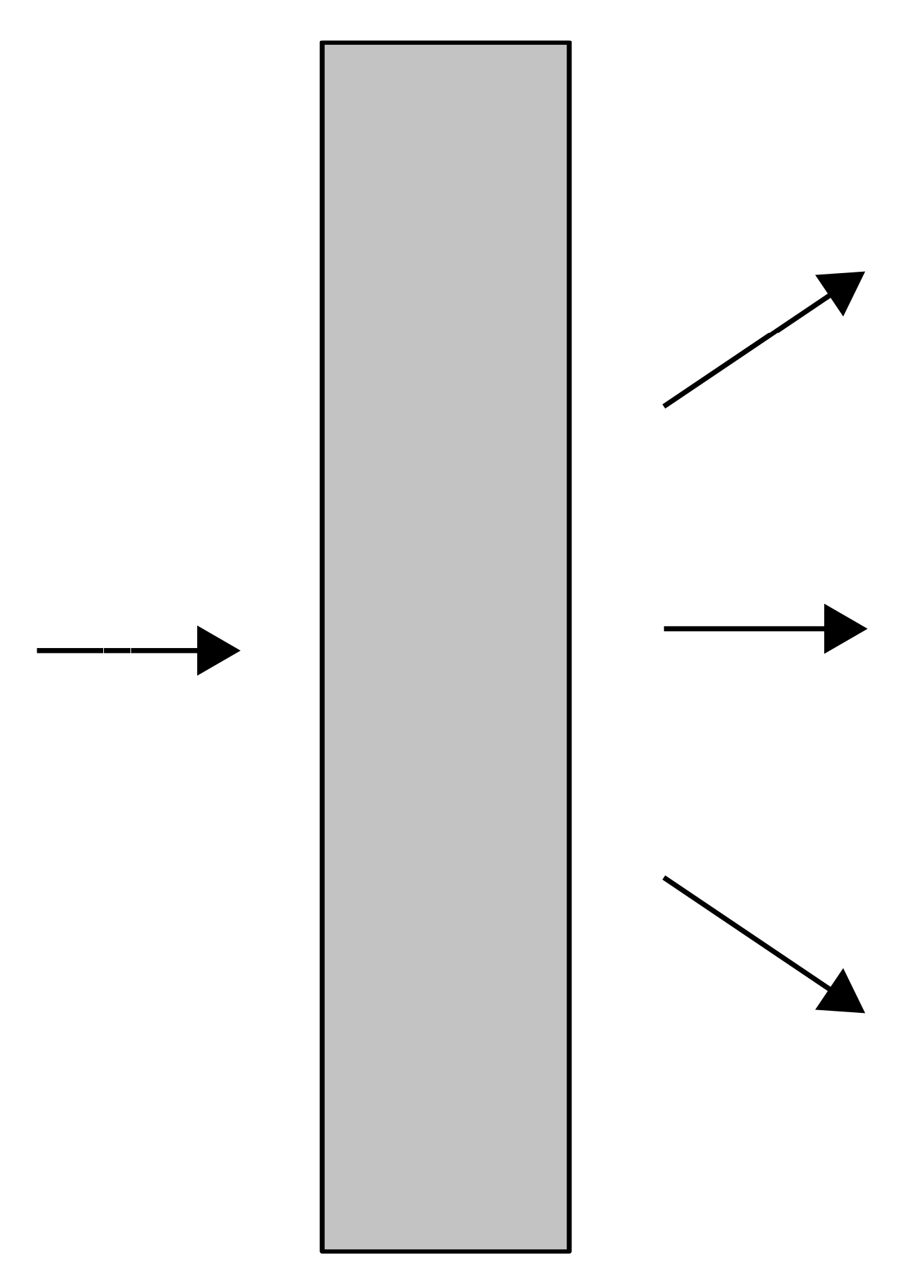}&
	\includegraphics[height=0.4\textheight]{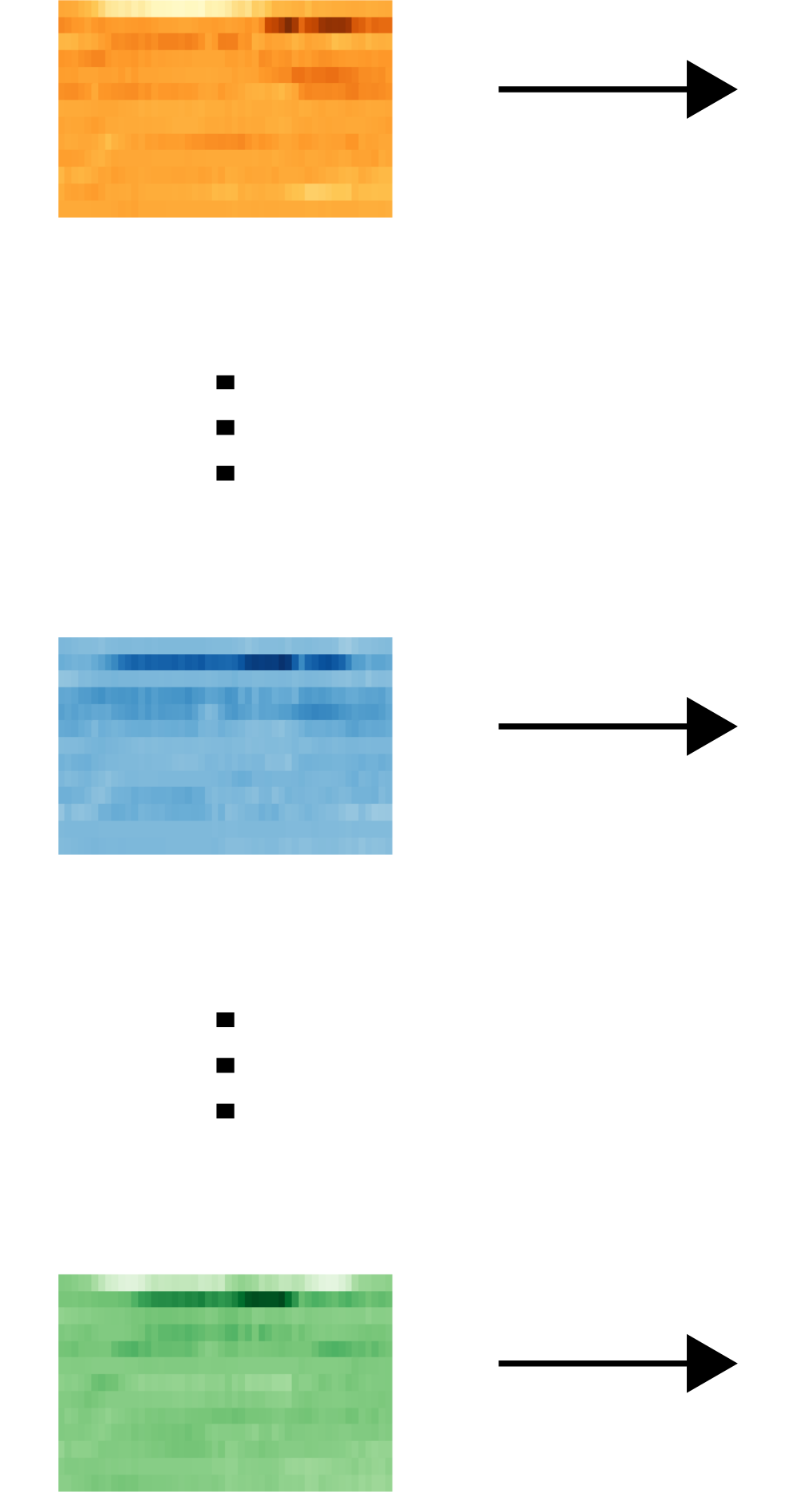}&
	\includegraphics[height=0.4\textheight]{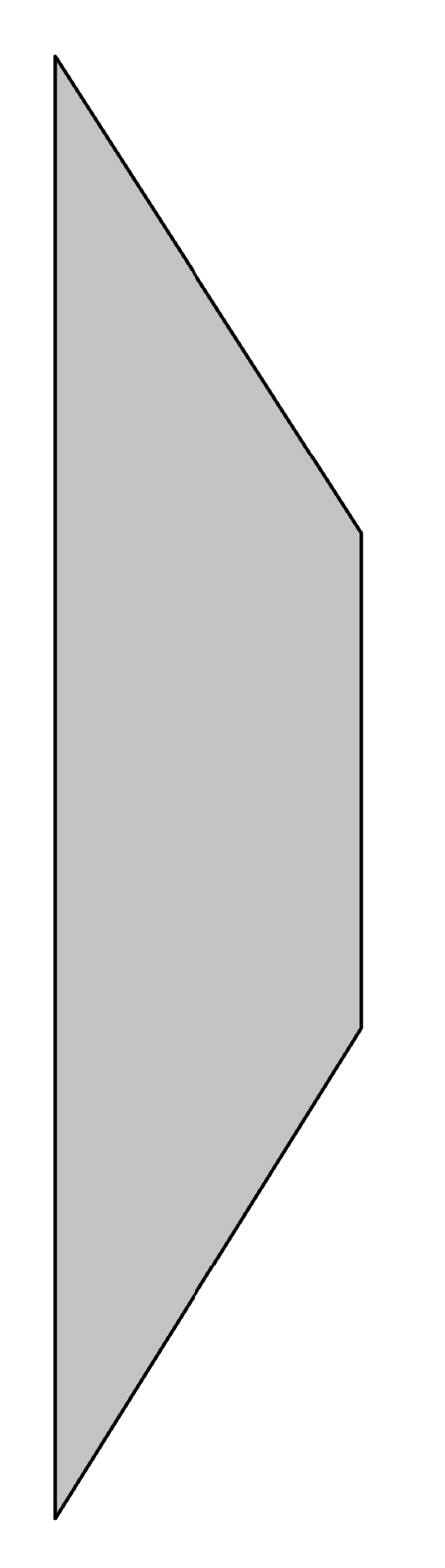}&
	\includegraphics[height=0.4\textheight]{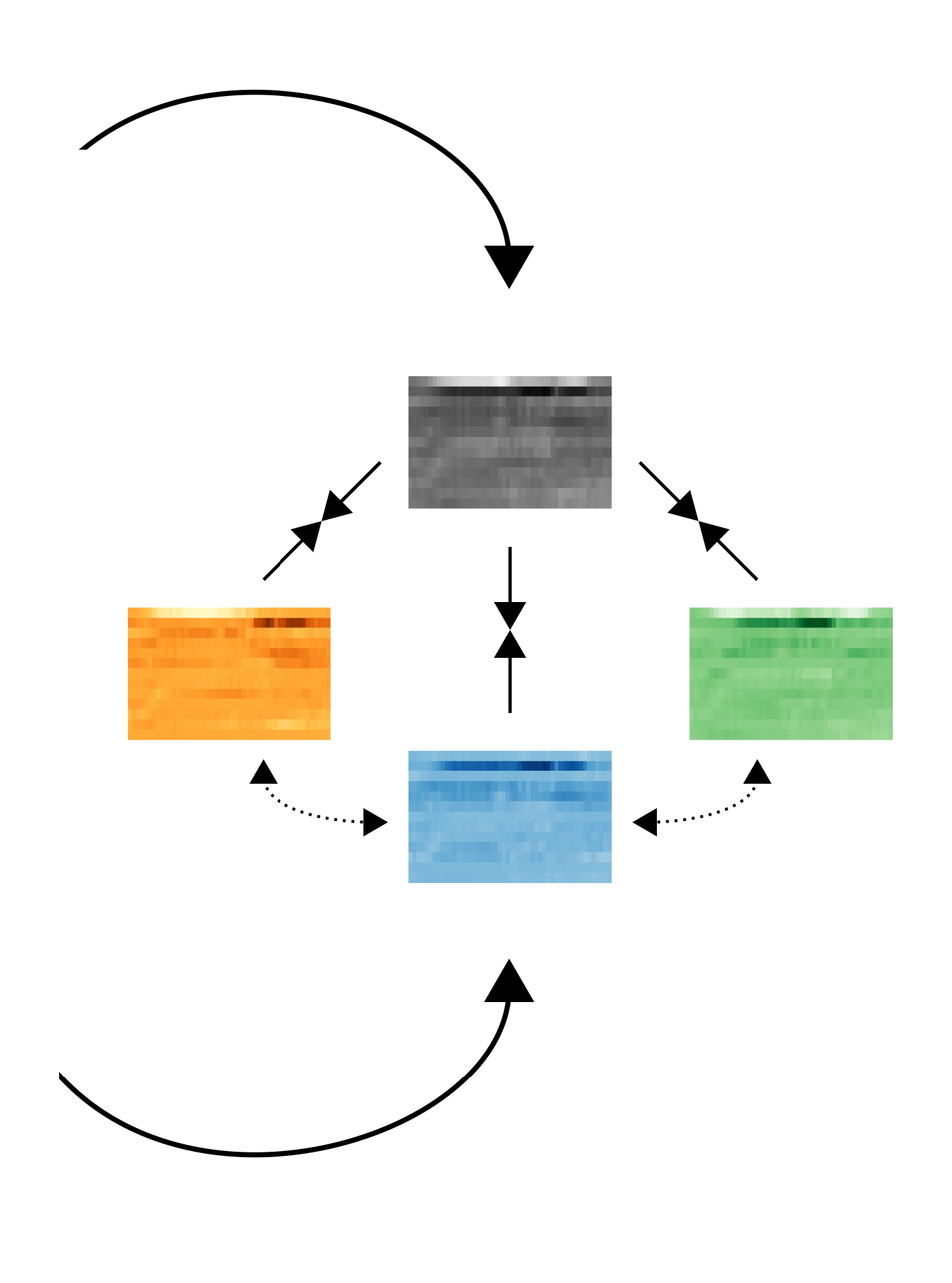}\\
	data & neural & transformed & encoder & deterministic contrastive \\
	(e.g. spectograms) & transformations &data (views) &  & loss/anomaly score
\end{tabular}}
		\caption{\gls{ntl} is a end-to-end procedure for self-supervised anomaly detection with {\em learnable} neural transformations. Each sample is transformed by a set of learned transformations and then embedded into a semantic space. The transformations and the encoder are trained jointly on a contrastive objective (\Cref{eqn:contrastive_loss}), which is also used to score anomalies. }  
		\label{fig:schema}
\end{center}
\end{figure*}

\section{Related Work}
\label{sec:related}
\parhead{Deep Anomaly Detection.} Recently, there has been a rapidly growing interest in developing deep-learning approaches for anomaly detection \citep{ruff2021unifying}. While deep learning---by removing the burden of manual feature engineering for complex problems---has brought about tremendous technological advances, its application to anomaly detection is rather recent. Related work on deep anomaly detection includes deep autoencoder variants \citep{principi2017acoustic,zhou2017anomaly,chen2018unsupervised}, deep one-class classification \citep{erfani2016high, ruff2019deep,ruff2019self}, deep generative models \citep{schlegl2017unsupervised, deecke2018image}, and outlier exposure \citep{hendrycks2018deep,goyal2020drocc}.  

Self-supervised anomaly detection has led to drastic improvements in detection accuracy \citep{golan2018deep,hendrycks2019using,wang2019effective,sohn2020learning,tack2020csi}. 
For instance, \citet{golan2018deep} and \citet{wang2019effective} augment the data and learn to predict which transformation was applied. 
After training, the resulting classifier is used for anomaly detection. 

An alternative approach to self-supervised anomaly detection is to train a classifier on a contrastive loss to tell if two views are of the same original image. This leads to strong representations \citep{chen2020simple}, which can be used for anomaly detection \citep{sohn2020learning,tack2020csi}.

\citet{bergman2020classification} study how to extend self-supervised anomaly detection to other domains beyond images.
Similar to \citet{golan2018deep} and \citet{wang2019effective}, their approach is based on transformation prediction, but they consider the open-set setting.
For tabular data, they use random affine transformations. We study the same datasets, but our method \emph{learns} the transformations and achieves consistently higher performance (see \Cref{sec:empirical}).

\parhead{Self-Supervised Learning.}
Self-supervised learning typically relies on data augmentation  for auxiliary tasks \citep{doersch2015unsupervised,noroozi2016unsupervised,misra2016shuffle,zhang2017split,gidaris2018unsupervised}.
The networks trained on these auxiliary tasks (e.g. patch prediction \citep{doersch2015unsupervised}, solving jigsaw-puzzles \citep{noroozi2016unsupervised}, cross-channel prediction \citep{zhang2017split}, or rotation prediction \citep{gidaris2018unsupervised}) are used as feature extractors for downstream tasks. While many of these methods are developed for images, \citet{misra2016shuffle} propose temporal order verification as an auxiliary task for self-supervised learning of time series representations. %

\parhead{Contrastive Representation Learning.}
Many recent self-supervised methods have relied on the InfoMax principle \citep{linsker1988self,hjelm2018learning}. These methods are trained on the task to maximize the \gls{mi} between the data and their context \citep{oord2018representation} or between different ``views'' of the data \citep{bachman2019learning}. 
Computing the mutual information in these settings is often intractable and various approximation schemes and bounds have been introduced \citep{tschannen2019mutual}. By using noise contrastive estimation \citep{gutmann2010noise,gutmann2012noise} to bound \gls{mi}, \citet{oord2018representation} bridge the gap between contrastive losses for \gls{mi}-based representation learning and the use of contrastive losses in discriminative methods for representation learning \citep{hadsell2006dimensionality,mnih2013learning,dosovitskiy2015discriminative,bachman2019learning,chen2020simple}.
We also use a contrastive loss. But while the contrastive loss of \citet{chen2020simple} (which is used for anomaly detection of images in \citet{sohn2020learning,tack2020csi},) 
contrast two views of the same sample with views of other samples in the minibatch, \gls{ntl} is tasked with determining the original version of a sample from different views of the same sample. The dependence on only a single sample is advantageous for scoring anomalies at test time and enables us to learn the data transformations (discussed further in \Cref{sec:ntl_obs}).

\parhead{Learning Data Augmentation Schemes.}
The idea of learning data augmentation schemes is not new. ``AutoAugmentation'' has usually relied on composing hand-crafted data augmentations \citep{ratner2017learning, Cubuk_2019_CVPR, zhang2019adversarial, ho2019population,lim2019fast}.
\citet{tran2017bayesian} learn Bayesian augmentation schemes for neural networks, and \citet{wong2020learning} learn perturbation sets for adversarial robustness. Though their setting and approach are different, our work is most closely related to \citet{tamkin2020viewmaker}, who study how to generate views for representation learning in the framework of \citet{chen2020simple}. They parametrize their ``viewmakers'' as residual perturbations, which are trained adversarially to avoid trivial solutions where the views share no semantic information with the original sample (discussed in \Cref{sec:ntl_obs}). 



\gls{ntl} falls into the area of deep, self-supervised anomaly detection, with the core novelty of learning the transformations so that we can effectively use them for anomaly detection beyond images. Our method receives whole time series or tabular data as input. For time series, this is a remarkable difference to prevalent work on anomaly detection {\em within}  time series \citep[e.g.][]{shen2020timeseries}, which output anomaly scores per time point, but not for the sequence as a whole. Additional approaches to time series anomaly detection include shallow \citep{hyndman2015large, baragona2007outliers} and deep-learning approaches based on modeling \cite{munir2018deepant}, on autoencoders \citep{kieu2018outlier}, or one-class classification \citep{shen2020timeseries}.

\glsresetall
\section{Neural Transformation Learning for Deep Anomaly Detection}
\label{sec:ntl}
We develop \gls{ntl}, a deep anomaly detection method based on contrastive learning for general data types. We first describe the approach in Section~\ref{sec:neutralad}. In Section~\ref{sec:ntl_obs}, we provide theoretical arguments why alternative contrastive loss functions are less suited for transformation learning.


\subsection{Proposed Method: \acrshort{ntl}}
\label{sec:neutralad}
Our method, \Gls{ntl}, is a simple pipeline with two components: a set of learnable transformations, and an encoder. Both are trained jointly on a \gls{dcl}. The objective has two purposes. During training, it is optimized to find the parameters of the encoder and the transformations. During testing, the \gls{dcl} is also used to score each sample as either an inlier or an anomaly.


\parhead{Learnable Data Transformations.}
We consider a data space $\mathcal{X}$ with samples $\mathcal{D}=\{x^{(i)} \sim \mathcal{X} \}_{i=1}^{N}$. 
We also consider $K$ transformations $\mathcal{T} :=  \{T_1,...,T_K | T_k : \mathcal{X} \rightarrow \mathcal{X} \}$. We assume here that the transformations are learnable, i.e., they can be modeled by any parameterized function whose parameters are accessible to gradient-based optimization and we denote the parameters of transformation $T_k$ by $\theta_k$. In our experiments, we use feed-forward neural networks for $T_k$. Note that in Section~\ref{sec:ntl_obs}, we use the same notation also for fixed transformations (such as rotations and cropping). 
 
 \parhead{Deterministic Contrastive Loss (DCL).}
A key ingredient of \gls{ntl} is a new objective. The \gls{dcl} encourages each transformed sample $x_k = T_k(x)$ to be similar to its original sample $x$, while encouraging it to be dissimilar from other transformed versions of the same sample, $x_l=T_l(x)$ with $l\neq k$.
We define the score function of two (transformed) samples as
\begin{align}
\label{eqn:score}
    h(x_k, x_l) = \exp (\mathrm{sim}(f_{\phi}(T_k(x)), f_{\phi}(T_l(x)))/ \tau),
\end{align}
where $\tau$ denotes a temperature parameter, and the similarity is defined as the cosine similarity $\mathrm{sim}(z,z') := z^T z'/ \|z\| \|z'\|$ in an embedding space $\mathcal{Z}$. The encoder $f_{\phi}(\cdot):\mathcal{X}\rightarrow \mathcal{Z}$ serves as a features extractor. 
The \gls{dcl} is
\begin{align}
\label{eqn:contrastive_loss}
    \mathcal{L}:=&\mathbb{E}_{x \sim \mathcal{D}}\left[-\sum_{k=1}^K \log \frac{h(x_k, x)}{h(x_k, x) + \sum_{l\neq k}h(x_k, x_l)}\right].
\end{align}
The term in the nominator of \Cref{eqn:contrastive_loss} pulls the embedding of each transformed sample close
to the embedding of the original sample. This encourages the transformations to preserve relevant semantic information. The denominator pushes all the embeddings of the transformed samples away
from each other, thereby encouraging diverse transformations.
The parameters of \gls{ntl} $\theta=[\phi, \theta_{1:K}]$ consist of the parameters $\phi$ of the encoder, and the parameters $\theta_{1:K}$ of the learnable transformations. All parameters $\theta$ are optimized jointly by minimizing \Cref{eqn:contrastive_loss}.

A schematic of \gls{ntl} is in \Cref{fig:schema}. Each sample is transformed by a set of learnable transformations and then embedded into a semantic space. The transformations and the encoder are trained jointly on the \gls{dcl} (\Cref{eqn:contrastive_loss}), which is also used to score anomalies. 

\parhead{Anomaly Score.} One advantage of our approach over other methods is that our training loss is also our anomaly score. Based on \Cref{eqn:contrastive_loss}, we define an anomaly score $S(x)$ as
\begin{align}
\label{eqn:anomaly_score}
    S(x) =-\sum_{k=1}^K \log \frac{h(x_k, x)}{h(x_k, x) + \sum_{l\neq k}h(x_k, x_l)}.
\end{align}
Since the score is deterministic, it can be straightforwardly evaluated for new data points $x$ without negative samples. 
By minimizing the \gls{dcl} (\Cref{eqn:contrastive_loss}), we minimize the score (\Cref{eqn:anomaly_score}) for training examples (inliers). The higher the anomaly score, the more likely that a sample is an anomaly.

This concludes the proposed loss function. We stress that it is extremely simple and does not need any additional regularization. However, it is not trivial to see why other proposed self-supervised approaches are less well suited for anomaly detection without imposing constraints on the types of transformations. To this end, we establish theoretical requirements and desiderata for neural transformation learning. 

\subsection{A Theory of Neural Transformation Learning}
\label{sec:ntl_obs}
To learn transformations for self-supervised anomaly detection we pose two requirements.

\theoremstyle{remark}
\begin{requirement}[Semantics]
\label{req:sem}
The transformations should produce views that share relevant semantic information with the original data.
\end{requirement}

\theoremstyle{remark}
\begin{requirement}[Diversity]
\label{req:div}
The transformations should produce diverse views of each sample.
\end{requirement}

A valid loss function for neural transformation learning should avoid solutions that violate either of these requirements. 
There are plenty of transformations that would violate \Cref{req:sem} or \Cref{req:div}. For example, a constant transformation $T_k(x)=c_k$, where $c_k$ is a constant that does not depend on $x$, would violate the semantic requirement, whereas the identity $T_{1}(x) = \cdots = T_{K}(x) = x$ violates the diversity requirement. 

\begin{figure}[t!]
	\centering
	\begin{subfigure}[b]{0.49\linewidth}
		\includegraphics[width=\linewidth]{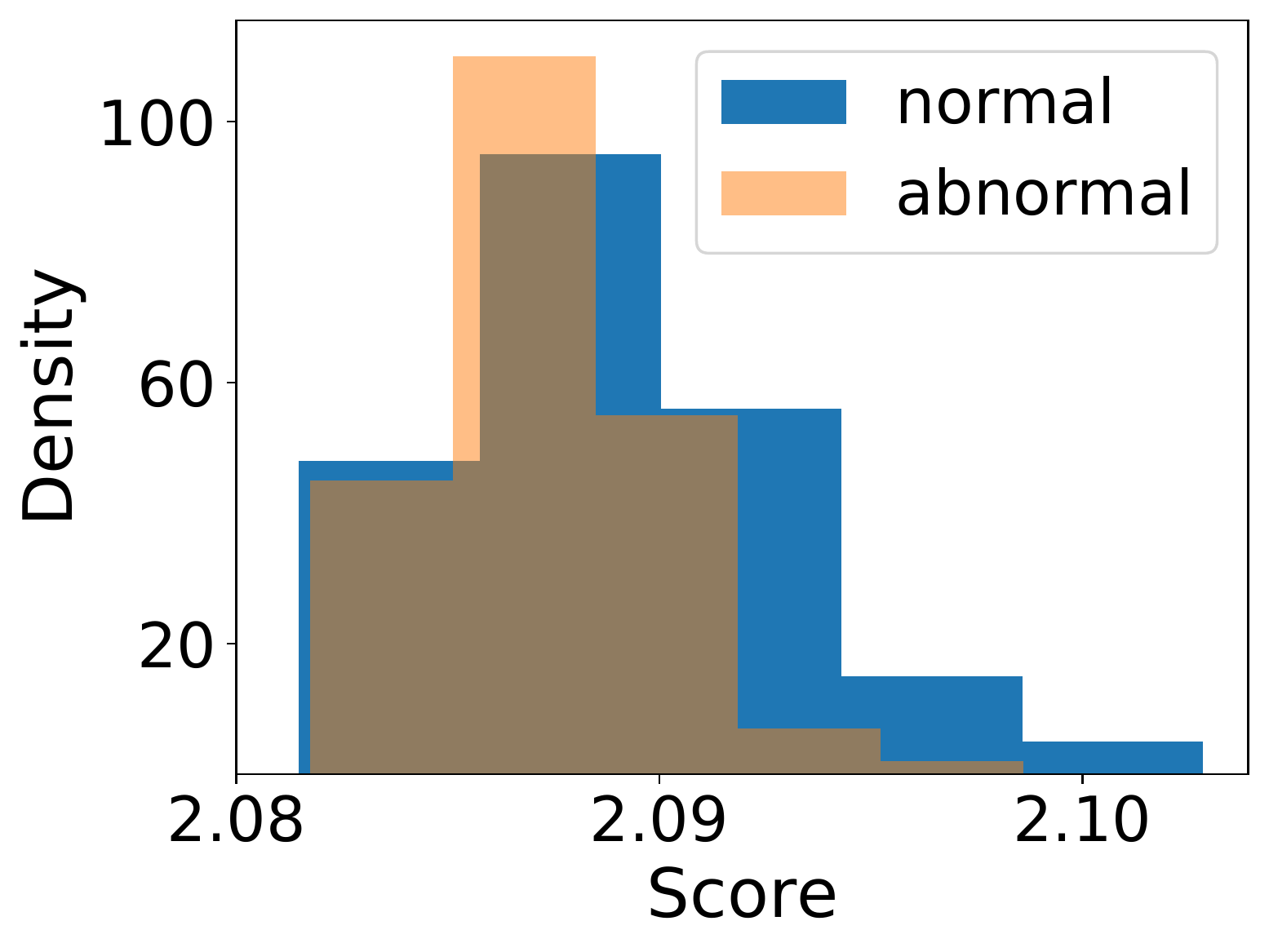}
		\caption{histogram before training}
		\label{fig:hist_initial}
	\end{subfigure}
	\begin{subfigure}[b]{0.46\linewidth}
		\includegraphics[width=\linewidth]{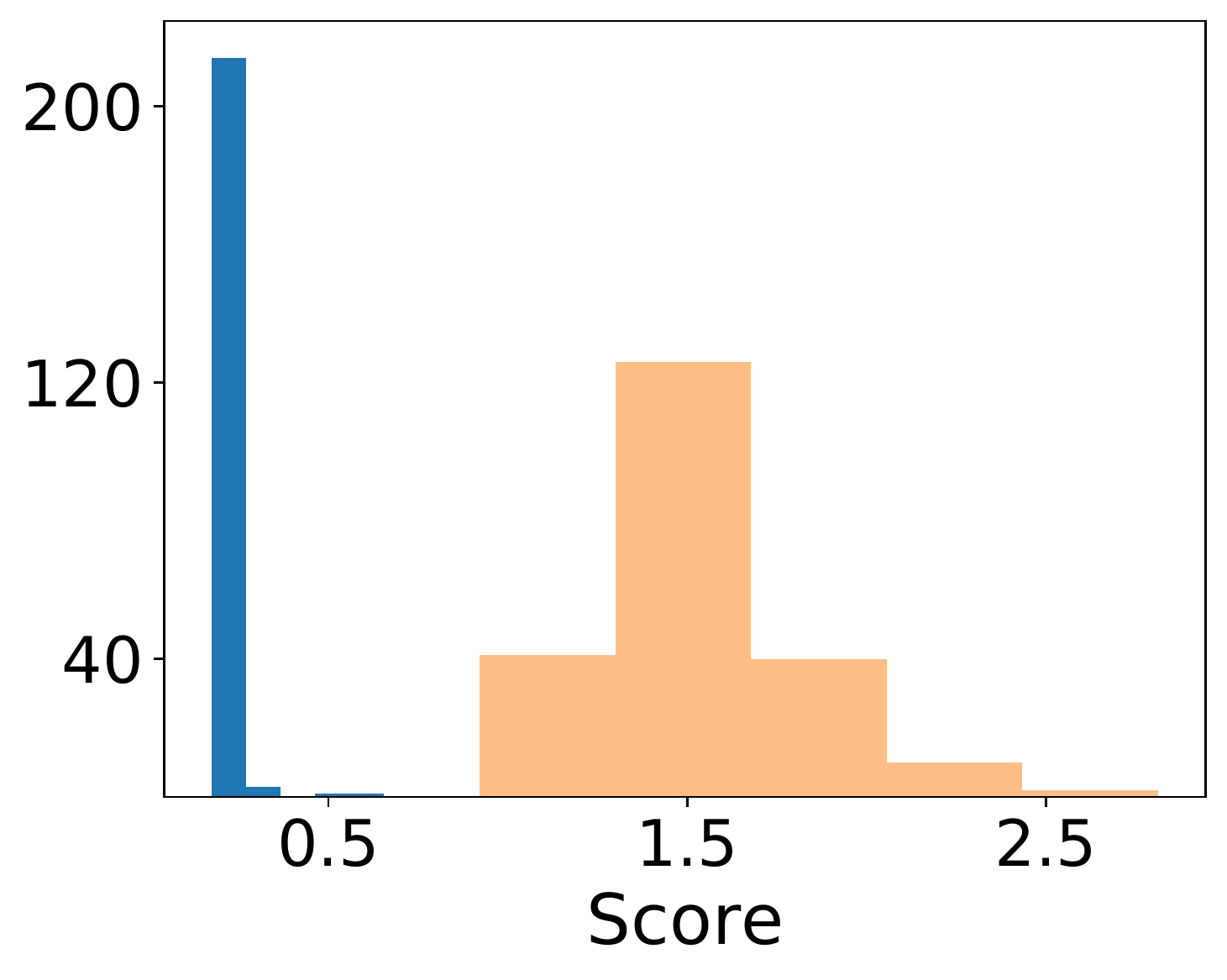}
		\caption{histogram after training}
		\label{fig:hist_trained}
	\end{subfigure}
	\caption{While the histogram of anomaly scores (computed using \Cref{eqn:anomaly_score}) is similar for inliers (blue) and anomalies (orange) before training, this changes drastically after training, and held-out inliers and anomalies become easily distinguishable. The data come from the \gls{sad} experiments described in \Cref{sec:empirical}}
	\label{fig:hist}
\end{figure}

We argue thus that for self-supervised anomaly detection, the learned transformations need to negotiate the trade-off between semantics and diversity with the two examples as edge-cases on a spectrum of possibilities. 
Without semantics, i.e. without dependence on the input data, an anomaly detection method can not decide whether a new sample is normal or an anomaly. 
And without variability in learning transformations, the self-supervised learning goal is not met. 

We now put the benefits of \gls{ntl} into perspective by comparing the approach with two other approaches that use data transformations for anomaly detection: 
\begin{enumerate}[leftmargin=*,topsep=0pt,noitemsep]
\item 
The first approach is the transformation prediction approach by \citet{wang2019effective}. Here, $f_\phi(\cdot): \mathcal{X} \rightarrow \mathbb{R}^K$ is a deep classifier\footnote{even though here $f_{\phi}$ is a classifier, we refer to it as the encoder in the discussion below.} that outputs $K$ values $f_\phi(x)_{1}\cdots f_\phi(x)_{K}$ proportional to the log-probabilities of the transformations. The transformation prediction loss is a softmax classification loss,
\begin{align}
    \mathcal{L}_P :=&\mathbb{E}_{x \sim \mathcal{D}}\left[  -\sum_{k=1}^K \log \frac{\exp{f_\phi(x_k)_k}}{\sum_{l=1:K}\exp{f_\phi(x_k)_l}}\right]. \label{eqn:lp}
\end{align}
\item We also consider \citep{chen2020simple}, who define a contrastive loss on each minibatch of data $\mathcal{M} \subset \mathcal{D}$ of size $N = |\mathcal{M}|$. For each gradient step, they sample a minibatch and two transformations $T_1, T_2 \sim \mathcal{T}$ from the family of transformations, which are applied to all the samples to produce $x^{(i)}_k = T_k(x^{(i)})$. 
The loss function is given by $\mathcal{L}_C(\mathcal{M})  := \sum_{i=1}^N  \mathcal{L}_C (x_1^{(i)}, x_2^{(i)}) +  \mathcal{L}_C  (x_2^{(i)}, x_1^{(i)})$, where
\begin{align}
\label{eqn:lc}
      &\mathcal{L}_C (x_1^{(i)}, x_2^{(i)}) := - \log h(x_1^{(i)}, x_2^{(i)}) \\\nonumber + & \log \left[ \sum_{j=1}^N h(x_1^{(i)}, x_2^{(j)})+\sum_{j=1}^N\mathds{1}_{[j\neq i]} h(x_1^{(i)}, x_1^{(j)}) \right].
\end{align}


\end{enumerate}

With hand-crafted, fixed transformations, these losses produce excellent anomaly detectors for images \citep{golan2018deep,wang2019effective,sohn2020learning,tack2020csi}. Since it is not always easy to design transformations for new application domains, we study their suitability for \emph{learning} data transformations. 

We argue that $\mathcal{L}_P$ and $\mathcal{L}_C$ are less well suited for transformation learning than $\mathcal{L}$: 

\begin{claim}
\label{c1}
 The `constant' edge-case $f_\phi(T_k(x)) = Cc_k$, where $c_k$ is a one-hot vector encoding the $k^{th}$ position (i.e. $c_{kk} = 1$), tends towards the minimum of $\mathcal{L}_P$ (\Cref{eqn:lp}) as the constant $C$ goes to infinity.
\end{claim}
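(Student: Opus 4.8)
The plan is to substitute the proposed configuration directly into $\mathcal{L}_P$ and show that the resulting value converges to the infimum of $\mathcal{L}_P$ as $C\to\infty$. First I would observe that under $f_\phi(T_k(x)) = Cc_k$ the logits decouple completely and no longer depend on $x$: the $k$th logit of the $k$th view equals $C$, while every other logit equals $0$. Plugging this into the softmax term of \Cref{eqn:lp} gives, for each $k$,
\[
-\log\frac{\exp(C)}{\exp(C)+\sum_{l\neq k}\exp(0)} \;=\; -\log\frac{\exp(C)}{\exp(C)+(K-1)} \;=\; \log\!\bigl(1+(K-1)\exp(-C)\bigr),
\]
so that $\mathcal{L}_P = K\log\!\bigl(1+(K-1)\exp(-C)\bigr)$, the outer expectation over $x\sim\mathcal{D}$ being vacuous because the integrand is constant in $x$.

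The second step is to pin down the infimum of $\mathcal{L}_P$. Since any softmax output is a probability in $(0,1]$, each summand $-\log(\cdot)$ in \Cref{eqn:lp} is nonnegative, hence $\mathcal{L}_P\ge 0$ for every choice of encoder $f_\phi$ and transformations $\mathcal{T}$. Combining this with the first step yields $0\le \inf\mathcal{L}_P \le \lim_{C\to\infty} K\log\!\bigl(1+(K-1)\exp(-C)\bigr) = 0$, so $\inf\mathcal{L}_P = 0$ and the constant configuration attains this infimal value in the limit $C\to\infty$. Note that the infimum is not attained at any finite $C$, so ``tends towards the minimum'' should be read as convergence to this infimal value.

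I do not expect a genuine obstacle here: the argument is a one-line substitution together with a nonnegativity bound. The only points worth a word of care are (i) making explicit that the relevant notion is the infimum, not an attained minimum, and (ii) noting that the computation is legitimate precisely because $f_\phi$ is assumed expressive enough to represent a map that is constant in $x$ — which is exactly what makes this edge-case a violation of \Cref{req:sem}, since the minimizing sequence of encoders carries no information about the input. It may also be worth remarking, for context, that the same computation shows $\mathcal{L}_P$ is driven to its infimum by a whole family of such degenerate configurations, so this pathology is not isolated.
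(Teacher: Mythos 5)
Your proposal is correct and follows essentially the same route as the paper: substitute $f_\phi(T_k(x)) = Cc_k$ into the softmax, observe each term reduces to $-\log\bigl(e^C/(e^C+K-1)\bigr)$ independently of $x$, invoke the nonnegativity lower bound $\mathcal{L}_P\ge 0$, and take $C\to\infty$. Your added remarks — that the infimum is not attained at finite $C$ and that the expectation over $x$ is vacuous — are accurate clarifications of the same argument rather than a different proof.
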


\begin{claim}
\label{c2}
The `identity' edge-case $T_k(x) = x$ with adequate encoder $f_\phi$ is a minimizer of $\mathcal{L}_C$ (\Cref{eqn:lc}).
\end{claim}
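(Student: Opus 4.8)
The plan is to show that the pair $(T_k = \mathrm{id},\, f_\phi = f^\star)$ attains the global minimum of $\mathcal{L}_C$ for a suitable encoder $f^\star$. First I would rewrite $\mathcal{L}_C$ (\Cref{eqn:lc}) purely in terms of the embeddings $z^{(i)}_k := f_\phi(T_k(x^{(i)}))$, which, since only the cosine similarity enters, may be taken on the unit sphere of $\mathcal{Z}$. Then each summand becomes $\mathcal{L}_C(x_1^{(i)},x_2^{(i)}) = \log D^{(1)}_i - \tfrac1\tau\,\mathrm{sim}(z^{(i)}_1, z^{(i)}_2)$, where $D^{(k)}_i$ is the softmax denominator whose anchor is $z^{(i)}_k$, and summing over the minibatch gives $\mathcal{L}_C = \sum_i\big(\log D^{(1)}_i + \log D^{(2)}_i\big) - \tfrac2\tau\sum_i \mathrm{sim}(z^{(i)}_1, z^{(i)}_2)$.

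Two observations drive the argument. First, $\mathrm{sim}(z^{(i)}_1, z^{(i)}_2) \le 1$ with equality iff the two vectors coincide, and the identity transformation forces $z^{(i)}_1 = z^{(i)}_2 = f_\phi(x^{(i)})$ for every $i$ and every $f_\phi$; hence the identity drives the alignment term $-\tfrac2\tau\sum_i \mathrm{sim}(z^{(i)}_1, z^{(i)}_2)$ to its minimum $-2N/\tau$ "for free''. Second, $\mathcal{L}_C$ is invariant under exchanging the two views of any single sample, $(z^{(i)}_1, z^{(i)}_2) \mapsto (z^{(i)}_2, z^{(i)}_1)$: the two summands indexed by $i$ are swapped, and in every other summand the logits $h(\cdot, x^{(i)}_1)$ and $h(\cdot, x^{(i)}_2)$ only trade places inside a sum that is added into one denominator, so $D^{(k)}_j$ is unchanged for $j \ne i$. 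One checks both facts directly from \Cref{eqn:lc}.

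With these in hand I would argue that the minimum is attained at a "collapsed'' configuration $z^{(i)}_1 = z^{(i)}_2 =: w_i$. Writing the positive contribution as $-\tfrac1\tau\,\mathrm{sim}(a,b) = \tfrac1{2\tau}\|a-b\|^2 - \tfrac1\tau$ for unit vectors exposes it as convex and uniquely minimized at $a = b$; combined with the view-exchange symmetry and convexity of the log-sum-exp denominators in each anchor, a symmetrization/exchange step shows that de-aligning any one positive pair cannot decrease $\mathcal{L}_C$. Once all pairs are collapsed, $\mathcal{L}_C$ reduces to $2\sum_i\big[\log\big(e^{1/\tau} + 2\sum_{j\ne i} e^{\langle w_i, w_j\rangle/\tau}\big) - \tfrac1\tau\big]$, a continuous function on the compact product of spheres $\{w_i\}$, so a minimizing placement $\{w_i^\star\}$ exists; an encoder with $f^\star(x^{(i)}) = w_i^\star$ is what we call "adequate''. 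Since the identity makes the alignment term optimal regardless of $f_\phi$ and an adequate encoder minimizes the residual spread term, $(\mathrm{id}, f^\star)$ attains $\min \mathcal{L}_C$, which is \Cref{c2}.

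I expect the delicate point to be the collapsing step: de-aligning $z^{(i)}_1$ from $z^{(i)}_2$ simultaneously \emph{lowers} the positive logit $e^{\mathrm{sim}(z^{(i)}_1,z^{(i)}_2)/\tau}$ sitting inside $D^{(1)}_i$ and $D^{(2)}_i$ (which helps) and re-shuffles the cross logits in $D^{(k)}_j$ for $j \ne i$, and that positive logit is bilinear, hence not jointly convex in $(z^{(i)}_1, z^{(i)}_2)$, so a naive Jensen argument does not close it. I would handle this by perturbing one pair at a time along the geodesic merging $z^{(i)}_1$ and $z^{(i)}_2$ and checking the sign of the directional derivative via the scalar estimate $\partial_s\big[-s/\tau + \log(e^{s/\tau} + B)\big] = -\tfrac1\tau\cdot\tfrac{B}{e^{s/\tau}+B} < 0$, using the view-exchange symmetry to control the cross terms. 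If a fully general treatment is too heavy, it already suffices to note that for $N \le 2$ every summand is bounded below by $\log\big(1 + 2(N-1)e^{-2/\tau}\big)$ and this bound is met exactly by the identity with antipodal $w_i$, and that for larger $N$ the collapsed configuration is a critical point of $\mathcal{L}_C$ strictly better than any diversity-respecting alternative — enough to conclude that $\mathcal{L}_C$ fails to rule out the diversity-violating solution (\Cref{req:div}).
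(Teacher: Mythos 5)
Your route is genuinely different from the paper's, and it contains a gap that you yourself flag but do not close: the ``collapsing step.'' You need that the global minimum of $\mathcal{L}_C$ over all placements of the $2N$ embeddings is attained at a configuration with $z_1^{(i)}=z_2^{(i)}$ for every $i$, and your scalar estimate $\partial_s\big[-s/\tau+\log(e^{s/\tau}+B)\big]<0$ only controls the summand anchored at pair $i$ with $B$ frozen. Moving $z_1^{(i)}$ along the merging geodesic also perturbs the cross logits $h(x_1^{(i)},x_2^{(j)})$ and $h(x_1^{(i)},x_1^{(j)})$ inside $D_i^{(1)}$ and, more importantly, the denominators $D_j^{(k)}$ of every other summand, which contain $h(x_k^{(j)},x_1^{(i)})$; these can increase, so the sign of the full directional derivative is not determined by your estimate. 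The view-exchange symmetry and the convexity remarks do not repair this, because (as you note) the positive logit is bilinear in $(z_1^{(i)},z_2^{(i)})$. Your fallback for larger $N$ --- that the collapsed configuration is ``a critical point strictly better than any diversity-respecting alternative'' --- is precisely the statement that needs proof, so as written the argument is incomplete.

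The paper avoids this entirely with a reduction argument rather than a characterization of the minimum. It splits $\mathcal{L}_C$ into an alignment and a uniformity term and \emph{defines} the adequate encoder $f^*_\phi$ to be one that minimizes both simultaneously for the given $T_1,T_2$; minimality of the alignment term forces $f^*_\phi(T_1(x^{(i)}))=f^*_\phi(T_2(x^{(i)}))$ for all $i$. The key step is then purely constructive: set $\tilde f_\phi := f^*_\phi\circ T_1$. With $T_1=T_2=\mathbb{I}$ and encoder $\tilde f_\phi$, every embedding coincides with the corresponding embedding under $(T_1,T_2,f^*_\phi)$, so the loss value is reproduced exactly --- no need to know what the optimal configuration looks like, nor to prove that collapse is optimal. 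If you want to keep your more ambitious ``absolute'' formulation (existence of an optimal collapsed configuration $\{w_i^\star\}$ by compactness, realized by a suitable encoder), you must either actually prove the collapsing step or, as the paper does, build the collapse into the hypothesis on the encoder.
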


The proof of these propositions is in \Cref{sec:appendix_proof}. The intuition is simple. Transformations that only encode which transformation was used make transformation prediction easy (\Cref{c1}), whereas the identity makes any two views of a sample identical, which can then be easily recognized as a positive pair by $\mathcal{L}_C$ (\Cref{c2}). 

The propositions highlight a serious issue with using $\mathcal{L}_P$ or $\mathcal{L}_C$ for transformation learning and anomaly detection. Should the optimization reach the edge-cases of \Cref{c1,c2}, $\mathcal{L}_P$ and $\mathcal{L}_C$ incur the same loss irrespective of whether the inputs are normal or abnormal data.  
Here are three remedies that can help avoid the trivial edge-cases:
Through careful {\em parametrization}, one can define $T_k(\cdot;\theta_k)$ in a way that explicitly excludes the edge cases. Beware that the transformation family might contain other members that violate  \Cref{req:sem} or \Cref{req:div}. The second potential remedy is {\em regularization} that explicitly encourages  \Cref{req:sem,req:div}, e.g. based on the InfoMax principle \citep{linsker1988self} or norm constraints. Finally, one can resort to {\em adversarial training}. 

\citet{tamkin2020viewmaker} use all three of these remedies to learn ``viewmakers'' using the contrastive loss $\mathcal{L}_C$; They parametrize the transformations as residual perturbations, which are regularized to the $\ell_p$ ball and trained adversarially. In contrast, under \gls{ntl} there are no restrictions on the architecture of the transformations, as long as \Cref{eqn:contrastive_loss} can be optimized (i.e. the gradient is well defined). The \gls{dcl} is an adequate objective for training the encoder and transformations jointly as it manages the trade-off between \Cref{req:sem,req:div}.

\begin{claim}
\label{c3}
The edge-cases of \Cref{c1,c2} do not minimize $\mathcal{L}$ (\gls{dcl}, \Cref{eqn:contrastive_loss}).
\end{claim}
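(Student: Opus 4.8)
The plan is to bound the global infimum of $\mathcal{L}$ strictly below the value $\mathcal{L}$ takes at each of the two edge-cases. Concretely, I would evaluate (or lower-bound) $\mathcal{L}$ at the edge-cases, then exhibit one explicit parameterization $(\phi,\theta_{1:K})$ with strictly smaller loss, so that neither edge-case can be a minimizer.

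For the identity edge-case $T_1=\dots=T_K=\mathrm{id}$, every $f_\phi(T_k(x))$ equals $f_\phi(x)$, so all cosine similarities in \Cref{eqn:anomaly_score} equal $1$, $h(x_k,x_l)=e^{1/\tau}$ identically, each summand equals $-\log\tfrac1K$, and $\mathcal{L}=K\log K$ for any encoder. For the constant edge-case $f_\phi(T_k(x))=Cc_k$ the transformed embeddings are pairwise orthogonal, so $h(x_k,x_l)=1$ for $k\ne l$; writing $v$ for the unit vector along $f_\phi(x)$, the $k$-th summand is $\log\!\big(1+(K-1)e^{-v_k/\tau}\big)$. Since $t\mapsto\log(1+(K-1)e^{-t/\tau})$ is convex and decreasing, Jensen's inequality together with $\tfrac1K\sum_k v_k\le\tfrac1K\|v\|\,\|\mathbf 1\|=\tfrac1{\sqrt K}$ gives
\begin{align*}
\mathcal{L}\;\ge\;K\log\!\Big(1+(K-1)e^{-1/(\tau\sqrt K)}\Big)
\end{align*}
for \emph{every} way of completing $f_\phi$ on untransformed inputs (this quantification over completions is needed because \Cref{c1} only pins down $f_\phi\circ T_k$), and this lower bound is itself strictly below $K\log K$ since $e^{-1/(\tau\sqrt K)}<1$.

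To beat both, I would place $f_\phi(x)$ at a unit vector $z_0$ and each $f_\phi(T_k(x))$ at $\cos\gamma\,z_0+\sin\gamma\,u_k$, where $u_1,\dots,u_K\perp z_0$ are unit vectors with pairwise inner product $-1/(K-1)$ (a regular simplex); this is realizable because $f_\phi$ and the $T_k$ are unconstrained function classes evaluated on the finite set $\mathcal{D}$. Then $\mathrm{sim}(f_\phi(T_k(x)),f_\phi(x))=\cos\gamma$ and $\mathrm{sim}(f_\phi(T_k(x)),f_\phi(T_l(x)))=\cos^2\gamma-\tfrac{\sin^2\gamma}{K-1}$, so this configuration has loss $K\log(1+(K-1)e^{-\delta/\tau})$ with $\delta=\cos\gamma-\cos^2\gamma+\tfrac{1-\cos^2\gamma}{K-1}$; choosing $\cos\gamma=\tfrac{K-1}{2K}$ maximizes $\delta$ at $\tfrac{K-1}{4K}+\tfrac1{K-1}$. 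Comparing exponents, this configuration undercuts the constant-edge lower bound (hence also $K\log K$) provided
\begin{align*}
\frac{K-1}{4K}+\frac{1}{K-1}\;>\;\frac{1}{\sqrt K}\qquad\text{for all integers }K\ge2,
\end{align*}
which finishes the argument.

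The real obstacle I expect is a clean verification of this last inequality: after clearing denominators it becomes a quartic in $\sqrt K$ whose only positive root is the \emph{non-integer} value $K=3+2\sqrt2\approx5.83$, so the inequality does hold at every integer $K\ge2$ but with a razor-thin margin near $K=6$ --- in particular a $K$-independent cone angle such as $\cos\gamma=\tfrac12$ fails there, so one is forced to keep the $K$-dependent optimum $\gamma=\arccos\tfrac{K-1}{2K}$. If one only wants the qualitative statement, the case $K=2$ already suffices and is immediate, since then $\tfrac{K-1}{4K}+\tfrac1{K-1}=\tfrac98>\tfrac1{\sqrt2}$.
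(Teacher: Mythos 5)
Your proposal is correct, and for the constant edge-case it takes a genuinely different route from the paper. The paper splits Proposition 3 into two parts: for the constant case it argues via first-order conditions, computing $\partial\mathcal{L}/\partial z_n$ at $z_{1:K}=Cc_{1:K}$ and showing stationarity would force $1-K\bigl(\tfrac{2}{K-1}\bigr)^2=0$, i.e.\ $K=3\pm2\sqrt2$, impossible for integer $K$; for the identity case it evaluates $\mathcal{L}=K\log K$ and then exhibits a better configuration only for $K=2$ (namely $z_1=-z_2\perp z_0$, your construction with $\cos\gamma=0$), otherwise appealing to the empirical histogram. You instead run a single value comparison: a tight lower bound $K\log\bigl(1+(K-1)e^{-1/(\tau\sqrt K)}\bigr)$ on every completion of the constant edge-case (the Jensen/Cauchy--Schwarz step is valid since $t\mapsto\log(1+(K-1)e^{-t/\tau})$ is convex and decreasing and the loss there is independent of $C$), plus an explicit simplex-cone configuration beating both that bound and $K\log K$ for every $K\ge2$. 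This buys a uniform quantitative gap and a complete identity-case argument for all $K$, at the price of only ruling out global minimizers (the paper's gradient computation also excludes stationarity), and at the price of your delicate inequality. That inequality is cleaner than you fear: clearing denominators gives $(K+1)^2>4\sqrt K\,(K-1)$, and with $s=\sqrt K$ the difference is $s^4-4s^3+2s^2+4s+1=(s^2-2s-1)^2\ge0$, a perfect square vanishing only at the irrational $K=3+2\sqrt2$ — so the inequality holds strictly at every integer $K\ge2$ with no sign change to worry about. (It is a pleasing consistency check that the same threshold $3+2\sqrt2$ appears in the paper's stationarity condition.) Two small caveats: your closing remark that ``the case $K=2$ already suffices'' is not right as stated, since the proposition concerns arbitrary $K$ and your general-$K$ argument is what actually closes it; and the simplex construction needs the embedding dimension to be at least $K$, which is harmless here because the constant edge-case already presupposes $K$ one-hot directions.
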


The proof is in \Cref{sec:appendix_proof}. The numerator of \Cref{eqn:contrastive_loss} encourages transformed samples to resemble their original version (i.e. the semantic requirement) and the denominator encourages the diversity of transformations. The result of our well-balanced objective is a heterogeneous set of transformations that model various relevant aspects of the data. The transformations and the encoder need to highlight salient features of the data such that a low loss can be achieved. After training, samples from the normal class have a low anomaly score while anomalies are handled less well by the model and as a result, have a high score.   

\Cref{fig:hist} shows empirical evidence for this. We observe that, while the histogram of anomaly scores (computed using \Cref{eqn:anomaly_score}) is similar for inliers and anomalies before training, this changes drastically after training, and held-out inliers and anomalies become easily distinguishable.


There's another advantage of using the \gls{dcl} for self-supervised anomaly detection. Unlike most other contrastive losses, the ``negative samples'' are not drawn from a noise distribution (e.g. other samples in the minibatch) but constructed deterministically from $x$. Dependence on the minibatch for negative samples would need to be accounted for at test time. In contrast, the deterministic nature of \Cref{eqn:anomaly_score} makes it a simple choice for anomaly detection.

\glsunset{goad}
\glsunset{lstm}
\glsunset{svdd}
\glsunset{dagmm}
\glsunset{drocc}
\section{Empirical Study: Deep Anomaly Detection of Time Series and Tabular Data}
\label{sec:empirical}
We developed \gls{ntl} for deep anomaly detection beyond images, so we consider various application domains involving multiple data types.
For image data, strong self-supervised baselines exist that benefit from hand-crafted transformations. We do not expect any benefit from using \gls{ntl} there. Our focus here is on time series and tabular data, which are important in many application domains of anomaly detection. Our study finds that \gls{ntl} improves detection accuracy over the state of the art.

\subsection{Evaluation Protocol}
\label{sec:eval}
We compare \gls{ntl} to prevalent shallow and deep anomaly-detection baselines using two evaluation protocols: the standard `one-vs.-rest' and the more challenging `$n$-vs.-rest' evaluation protocol. 
Both settings turn a classification dataset into a quantifiable anomaly-detection benchmark.

\parhead{one-vs-rest.} This evaluation setup has been used in virtually all papers on deep anomaly detection published at top-tier venues \citep[e.g.][]{ruff2019deep, hendrycks2019using, ruff2018deep, golan2018deep, deecke2018image, akcay2018ganomaly, abati2019latent, perera2019ocgan, wang2019multivariate, bergman2020classification, kim2019rapp}. For `one-vs.-rest', the dataset is split by the $N$ class labels, creating $N$ one class classification tasks; the models are trained on data from one class and tested on a test set with examples from all classes. The samples from other classes should be detected as anomalies.

\parhead{n-vs-rest.} We also evaluate on the more challenging $n$-vs.-rest protocol, where $n$ classes (for $1<n< N$) are treated as normal and the remaining classes provide the anomalies in the test and validation set. By increasing the variability of what is considered normal data, one-class classification becomes more challenging. 

\subsection{Shallow and Deep Anomaly Detection Baselines}
\label{sec:baselines}
We study \gls{ntl} in comparison to a number of unsupervised and self-supervised anomaly detection methods.

\parhead{Traditional Anomaly Detection Baselines.} 
We chose three popular anomaly detection baselines: The \gls{ocsvm}, a kernel-based method, \gls{if}, a tree-based model which aims to isolate anomalies \citep{liu2008isolation}, and \gls{lof}, which uses density estimation with $k$-nearest neighbors.

\parhead{Deep Anomaly Detection Baselines.} 
Next, we include three deep anomaly detection methods,
\acrshort{svdd} \citep{ruff2018deep}, which fits a one-class SVM in the feature space of a neural net, \acrshort{drocc} \citep{goyal2020drocc}, which fits a one-class classifier with artificial outlier exposure, and \acrshort{dagmm} \citep{zong2018deep}, which estimates the density in the latent space of an autoencoder.

\parhead{Self-Supervised Anomaly Detection Baselines.} 
We also choose two self-supervised baselines, which are technically also deep anomaly detection methods.
\acrshort{goad} \citep{bergman2020classification} is a distance-based classification method based on random affine transformations. 
\citet{wang2019effective} is a softmax-based classification method based on hand-crafted transformations, which show impressive performance on images. We adopt their pipeline to time series here by crafting specific time series transformations (fixed Ts, described in \Cref{sec:appendix_implement}).

\begin{table*}[t!]
	\caption{Average AUC with standard deviation for one-vs-rest anomaly detection on time series datasets.}
	\label{tab:ts_one-vs-all}
	\centering
	\resizebox{\linewidth}{!}{
	\begin{tabular}{l|ccc|ccccccc|c}
        \hline
        &\acrshort{ocsvm}&\acrshort{if}&\acrshort{lof}&\acrshort{rnn}&\acrshort{lstm} &\acrshort{svdd} &\acrshort{dagmm} &\acrshort{goad} &\acrshort{drocc}&fixed Ts&\acrshort{ntl}\\
		\hline
\acrshort{sad}  &95.3 & 88.2&98.3 &81.5$\pm$0.4 &93.1$\pm$0.5 &86.0$\pm$0.1 &80.9$\pm$1.2 & 94.7$\pm$0.1 &85.8$\pm$0.8 &96.7$\pm$0.1 &\textbf{98.9}$\pm$0.1\\
\acrshort{natops} &86.0&85.4&89.2&89.5$\pm$0.4&91.5$\pm$0.3 &88.6$\pm$0.8&78.9$\pm$3.2 &87.1$\pm$1.1&87.2$\pm$1.4 &78.4$\pm$0.4& \textbf{94.5}$\pm$0.8\\
\acrshort{ct}   &97.4 &94.3 &97.8 &96.3$\pm$0.2 &79.0$\pm$1.1 &95.7$\pm$0.5 &89.8$\pm$0.7 &97.7$\pm$0.1&95.3$\pm$0.3  &97.9$\pm$0.1 & \textbf{99.3}$\pm$0.1\\
\acrshort{epilepsy}  &61.1 & 67.7 &56.1 &80.4$\pm$1.8 &82.6$\pm$1.7 &57.6$\pm$0.7 & 72.2$\pm$1.6 &76.7$\pm$0.4&85.8$\pm$2.1  &80.4$\pm$2.2  &\textbf{92.6}$\pm$1.7 \\
\acrshort{rs} &70.0&69.3&57.4&84.7$\pm$0.7&65.4 $\pm$2.1&77.4$\pm$0.7&51.0$\pm$4.2&79.9$\pm$0.6&80.0$\pm$1.0  &\textbf{87.7}$\pm$0.8&86.5$\pm$0.6\\
        \hline
	\end{tabular}}
\end{table*}
\parhead{Anomaly Detection Baselines for Time Series.}
Finally, we also include two baselines that are specifically designed for time series data:
The \gls{rnn} directly models the data distribution $p(x_{1:T})=\prod p(x_t|x_{<t})$ and uses the log-likelihood as the anomaly score. Details on the architecture are in \Cref{sec:appendix_implement}. 
\acrshort{lstm} \citep{malhotra2016lstm} is an encoder-decoder time series model where anomaly score is based on the reconstruction error.

\subsection{Anomaly Detection of Time Series}
Our goal is to detect abnormal time series on a {\em whole-sequence level}. This is a different set-up than novelty detection within time series, but also very important in practice.

For example, one might want to detect abnormal sound or find production quality issues by detecting abnormal sensor measurements recorded over the duration of producing a batch. Other applications are sports and health monitoring; an abnormal movement pattern during sports can be indicative of fatigue or injury; whereas anomalies in health data can point to more serious issues.

We study \gls{ntl} on a selection of datasets that are representative of these varying domains. The datasets come from the UEA multivariate time series classification archive\footnote{from which we selected datasets on which supervised multi-class classification methods achieve strong results \citep{ruiz2020great}. Only datasets with separable classes can be repurposed for one-class classification} \citep{bagnall2018uea}.

\parhead{Time Series Datasets}
\begin{itemize}[leftmargin=*,topsep=0pt]
	\setlength\itemsep{.1em}
	\item \Gls{sad}: Sound of ten Arabic digits, spoken by 88 speakers. The dataset has 8800 samples, which are stored as 13 Mel Frequency Cepstral Coefficients (MFCCs). We select sequences with the length between 20 and 50. The sequences that are shorter than 50 are zero padded to have the length of 50.
	\item \Gls{natops}: The data is originally from a motion detection competition of various movement patterns used to control planes in naval air training. The data has six classes of distinct actions. Each sample is a sequence of x, y, z coordinates for eight body parts of length 51.
	\item \Gls{ct}: The data consists of 2858 character samples from 20 classes, captured using a WACOM tablet. Each instance is a 3-dimensional pen tip velocity trajectory. The data is truncated to the length of the shortest, which is 182.
	\item \Gls{epilepsy}: The data was generated with healthy participants simulating four different activities: walking, running, sawing with a saw, and seizure mimicking whilst seated. The data has 275 cases in total, each being a 3-dimensional sequence of length 203.
	\item \Gls{rs}: The data is a record of university students playing badminton or squash whilst wearing a smart watch, which measures the x, y, z coordinates for both the gyroscope and accelerometer. Sport and stroke types separate the data into four classes. Each sample is a 6-d sequence with a length of 30.
\end{itemize}

We compare \gls{ntl} to all baselines from \Cref{sec:baselines} on these datasets under the one-vs-rest setting. Additionally, we study how the methods adapt to increased variability of inliers by exploring \gls{sad} and \gls{natops} under the $n$-vs-rest setting for a varying number of classes $n$ considered normal.


\parhead{Implementation Details}
We consider the following parametrizations of the learnable transformations:
{\em feed forward} $T_k(x):=M_k(x)$,
{\em residual} $T_k(x):=M_k(x) + x$, and
{\em multiplicative} $T_k(x):=M_k(x) \odot x$, which differ in how they combine the learnable masks $M_k(\cdot)$ with the data\footnote{We use $10\%$ of the test set as the validation set to allow parameterization selection.}.
The masks $M_k$ are each a stack of three residual blocks of 1d convolutional layers with instance normalization layers and ReLU activations, as well as one 1d convolutional layer on the top. For the multiplicative parameterization, a sigmoid activation is applied to the masks. All bias terms are fixed as zero, and the learnable affine parameters of the instance normalization layers are frozen. For a fair comparison, we use the same number of $12$ transformations in \gls{ntl}, \acrshort{goad}, and the classification-based method (fixed Ts) for which we manually designed appropriate transformations. 
In \Cref{sec:ablation} we make a more detailed comparison of various design choices for \gls{ntl} and one of our findings is that its anomaly detection results are robust to the number of learnable transformations.

The same encoder architecture is used for \gls{ntl}, \gls{svdd}, \gls{drocc}, and with slight modification to achieve the appropriate number of outputs for \gls{dagmm} and transformation prediction with fixed Ts. The encoder is a stack of residual blocks of 1d convolutional layers. The number of blocks depends on the dimensionality of the data and is detailed in \Cref{sec:appendix_implement}. 

\parhead{Results.}
The results of \gls{ntl} in comparison to the baselines from \Cref{sec:baselines} on time series datasets from various fields are reported in \Cref{tab:ts_one-vs-all}. \gls{ntl} outperforms all shallow baselines in all experiments and outperforms the deep learning baselines in 4 out of 5 experiments. 
Only on the \gls{rs} data, it is outperformed by transformation prediction with fixed transformations, which we designed to understand the value of learning transformations with \gls{ntl} vs using hand-crafted transformations. The results confirm that designing the transformations only succeeds sometimes, whereas with \gls{ntl} we can learn the appropriate transformations.
The learned transformations also give \gls{ntl} a competitive advantage over the other self-supervised baseline \gls{goad} which uses random affine transformations.
The performance of the shallow anomaly detection baselines hints at the difficulty of each anomaly detection task; the shallow methods perform well on \gls{sad} and \gls{ct}, but perform worse than the deep learning based methods on other data. 

\begin{figure}[t!]
	\centering
	\begin{subfigure}[b]{0.4\linewidth}
		\includegraphics[width=\linewidth]{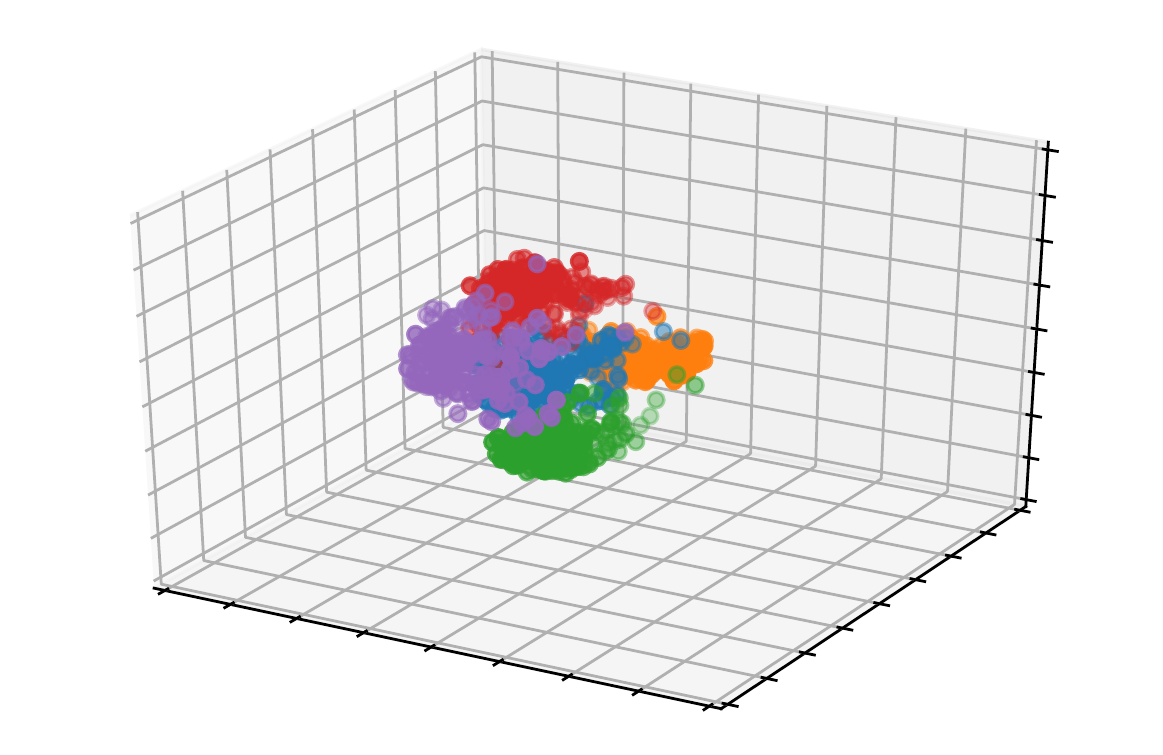}
		\caption{normal data on $\mathcal{X}$}
		\label{fig:normal_x}
	\end{subfigure}
	\begin{subfigure}[b]{0.4\linewidth}
		\includegraphics[width=\linewidth]{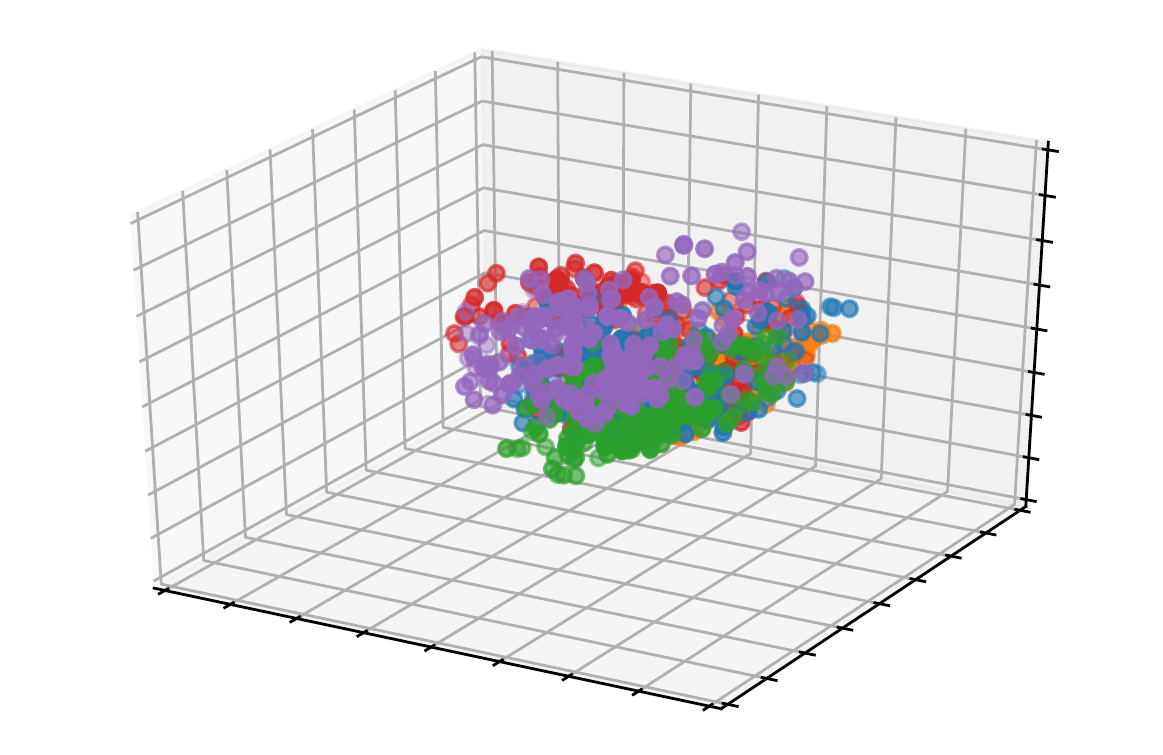}
		\caption{anomalies on $\mathcal{X}$}
		\label{fig:abnormal_x}
	\end{subfigure}\\
		\begin{subfigure}[b]{0.4\linewidth}
		\includegraphics[width=\linewidth]{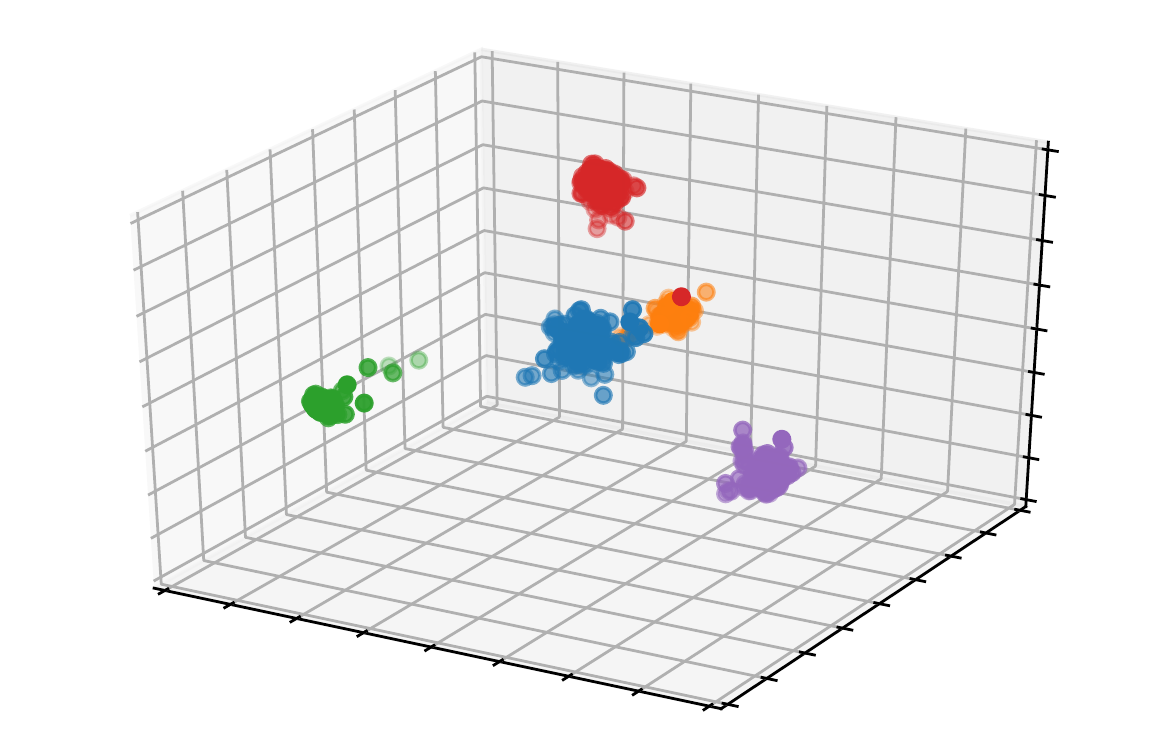}
		\caption{normal data on $\mathcal{Z}$}
		\label{fig:normal_z}
	\end{subfigure}
	\begin{subfigure}[b]{0.4\linewidth}
		\includegraphics[width=\linewidth]{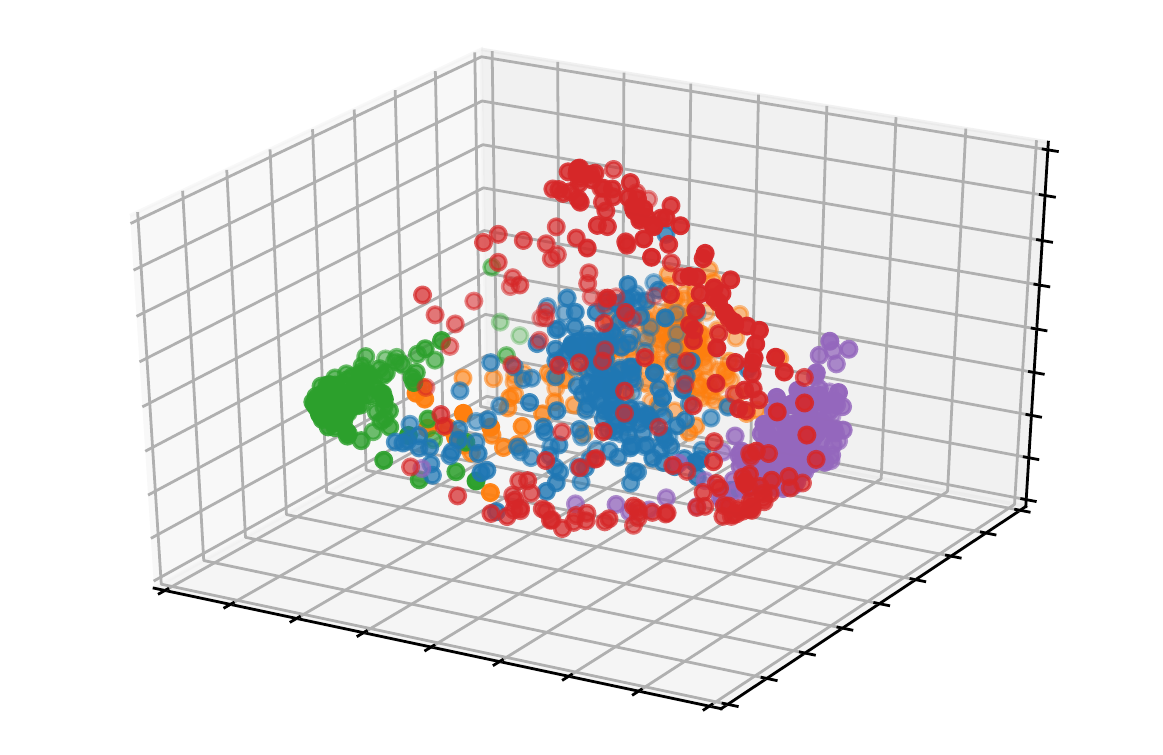}
		\caption{anomalies on $\mathcal{Z}$}
		\label{fig:abnormal_z}
	\end{subfigure}
	\caption{3D visualization (projected using PCA) of how the original samples (blue) from the \gls{sad} dataset and the different views created by the neural transformations of \gls{ntl} (one color per transformation type) cluster in data space (\Cref{fig:normal_x,fig:abnormal_x}) and in the embedding space of the encoder (\Cref{fig:normal_z,fig:abnormal_z}). The crisp separation of the different transformations of held-out inliers (\Cref{fig:normal_z}) in contrast to the overlap between transformed anomalies (\Cref{fig:abnormal_z}) visualizes how \gls{ntl} is able to detect anomalies. 
	}
	\label{fig:hypershere}
\end{figure}
\parhead{What does \gls{ntl} learn?} For visualization purposes, we train \gls{ntl} with 4 learnable transformations on the \gls{sad} data. 
\Cref{fig:hypershere} shows the structure in the data space $\mathcal{X}$ and the embedding space of the encoder $\mathcal{Z}$ after training. Held-out data samples (blue) are transformed by each of the learned transformations $T_k(x) = M_k(x) \odot x$ to produce $K=4$ different views of each sample (the transformations are color-coded by the other colors). Projection to three principal components with PCA allows for visualization in 3D. In \Cref{fig:normal_x,fig:abnormal_x}, we can see that the transformations already cluster together in the data space, but only with the help of the encoder, the different views of inliers are separated from each other \Cref{fig:normal_z}. 
In comparison, the anomalies and their transformations are less structured in $\mathcal{Z}$ (\Cref{fig:abnormal_z}), visually explaining why they incur a higher anomaly score and can be detected as anomalies.
\begin{figure}[t]
	\captionsetup[subfigure]{labelformat=empty}
	\centering
	\resizebox{\linewidth}{!}{
	\begin{tabular}{@{}c@{}c@{}c@{}c@{}}
	$M_1(x)$&$M_2(x)$&$M_3(x)$&$M_4(x)$\\
			\begin{subfigure}[b]{0.4\linewidth}
	\includegraphics[width=\linewidth]{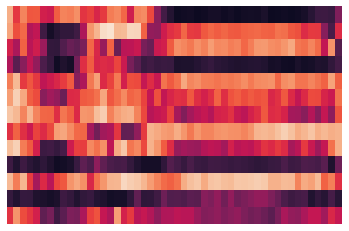}
	\end{subfigure}&
	\begin{subfigure}[b]{0.4\linewidth}
	\includegraphics[width=\linewidth]{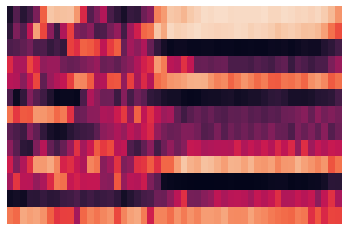}
	\end{subfigure}&
	\begin{subfigure}[b]{0.4\linewidth}
	\includegraphics[width=\linewidth]{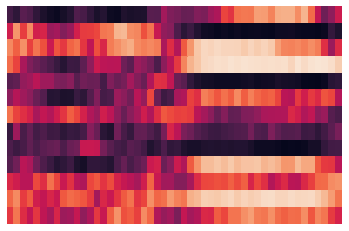}
	\end{subfigure}&
	\begin{subfigure}[b]{0.4\linewidth}
	\includegraphics[width=\linewidth]{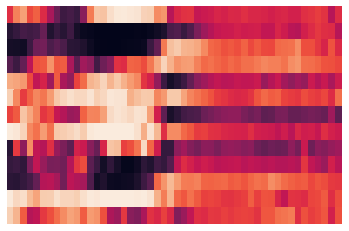}
	\end{subfigure}
\end{tabular}}
	\caption{\gls{ntl} learns dissimilar masks for \gls{sad} spectrograms. Dark horizontal lines indicate where $M_1$ and $M_2$ mask out frequency bands almost entirely, while the bright spot in the middle left part of $M_4$ indicates that this mask brings the intermediate frequencies in the first half of the recording into focus.}
		\label{fig:masks}
\end{figure}

The learned masks $M_{1:4}(x)$ of one inlier $x$ are visualized in \Cref{fig:masks}. We can see that the four masks are dissimilar from each other, and have learned to focus on different aspects of the spectrogram. The masks take values between $0$ and $1$, with dark areas corresponding to values close to $0$ that are zeroed out by the masks, while light colors correspond to the areas of the spectrogram that are not masked out. Interestingly, in $M_1$, $M_2$, and $M_3$ we can see `black lines' where they mask out entire frequency bands at least for part of the sequence. In contrast, $M_4$ has a bright spot in the middle left part of the spectrogram; it creates views that focus on the content of the intermediate frequencies at the first half of the recording.

\parhead{How do the methods cope with an increased variability of inliers?}
\begin{table*}[t!]
	\caption{Average AUC with standard deviation for $n$-vs-rest ($n=N-1$) anomaly detection on time series datasets}
	\label{tab:ts_rest-vs-all}
	\centering
	\resizebox{\linewidth}{!}{
	\begin{tabular}{l|ccc|ccccccc|c}
        \hline
        &\acrshort{ocsvm}&\acrshort{if}&\acrshort{lof}&\acrshort{rnn}&\acrshort{lstm} &\acrshort{svdd} &\acrshort{dagmm} &\acrshort{goad} &\acrshort{drocc}&fixed Ts&\acrshort{ntl}\\
		\hline
\acrshort{sad}  &60.2 & 56.9&\textbf{93.1} &53.0$\pm$0.1 &58.9$\pm$0.5 &59.7$\pm$0.5 &49.3$\pm$0.8 & 70.5$\pm$1.4 &58.8$\pm$0.5 &74.8$\pm$1.3 &85.1$\pm$0.3\\
\acrshort{natops} &57.6& 56.0 & 71.2 & 65.6$\pm$0.4 &56.9$\pm$0.7 &59.2$\pm$0.8&53.2$\pm$0.8 &61.5$\pm$0.7 &60.7$\pm$1.6 &70.8$\pm$1.3& \textbf{74.8}$\pm$0.9\\
\acrshort{ct}   &57.8 &57.9 &\textbf{90.3} &55.7$\pm$0.8 &50.9$\pm$1.2 &54.4$\pm$0.7 &47.5$\pm$2.5 &81.1$\pm$0.1&57.6$\pm$1.5 &63.0$\pm$0.6 & 87.4$\pm$0.2\\
\acrshort{epilepsy}  &50.2 & 55.3 &54.7 &74.9$\pm$1.5 &56.8$\pm$2.1 &52.9$\pm$1.4 & 52.0$\pm$1.0 &62.7$\pm$0.9&55.5$\pm$1.9  &69.8$\pm$1.6  &\textbf{80.5}$\pm$1.0 \\
\acrshort{rs} &55.9&58.4&59.4&75.8$\pm$0.9&63.1 $\pm$0.6&62.2$\pm$2.1&47.8$\pm$3.5&68.2$\pm$0.9&60.9$\pm$0.2  &\textbf{81.6}$\pm$1.2&80.0$\pm$0.4\\
        \hline
	\end{tabular}}
\end{table*}

To study this question empirically, we increase the number of classes $n$ considered to be inliers. We test all methods on \gls{sad} and \gls{natops} under the $n$-vs-rest setting with varying $n$. Since there are too many combinations of normal classes when $n$ approaches $N-1$, we only consider combinations of $n$ consecutive classes. From \Cref{fig:n-vs-all} we can observe that the performance of all methods drops as the number of classes included in the normal data increases. This shows that the increased variance in the nominal data makes the task more challenging. \gls{ntl} outperforms all baselines on \gls{natops} and all deep-learning baselines on \gls{sad}.
It is interesting that \gls{lof}, a method based on $k$-nearest neighbors, performs better than our method (and all other baselines) on \gls{sad} when $n$ is larger than three.
\begin{figure}
    \centering
	\begin{subfigure}[b]{\linewidth}
	\captionsetup[subfigure]{labelformat=empty}
		\includegraphics[width=\linewidth]{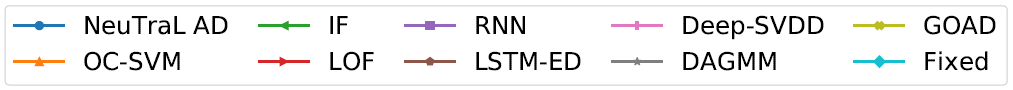}
	\end{subfigure}\\

	\begin{subfigure}[b]{0.52\linewidth}
		\includegraphics[width=\linewidth]{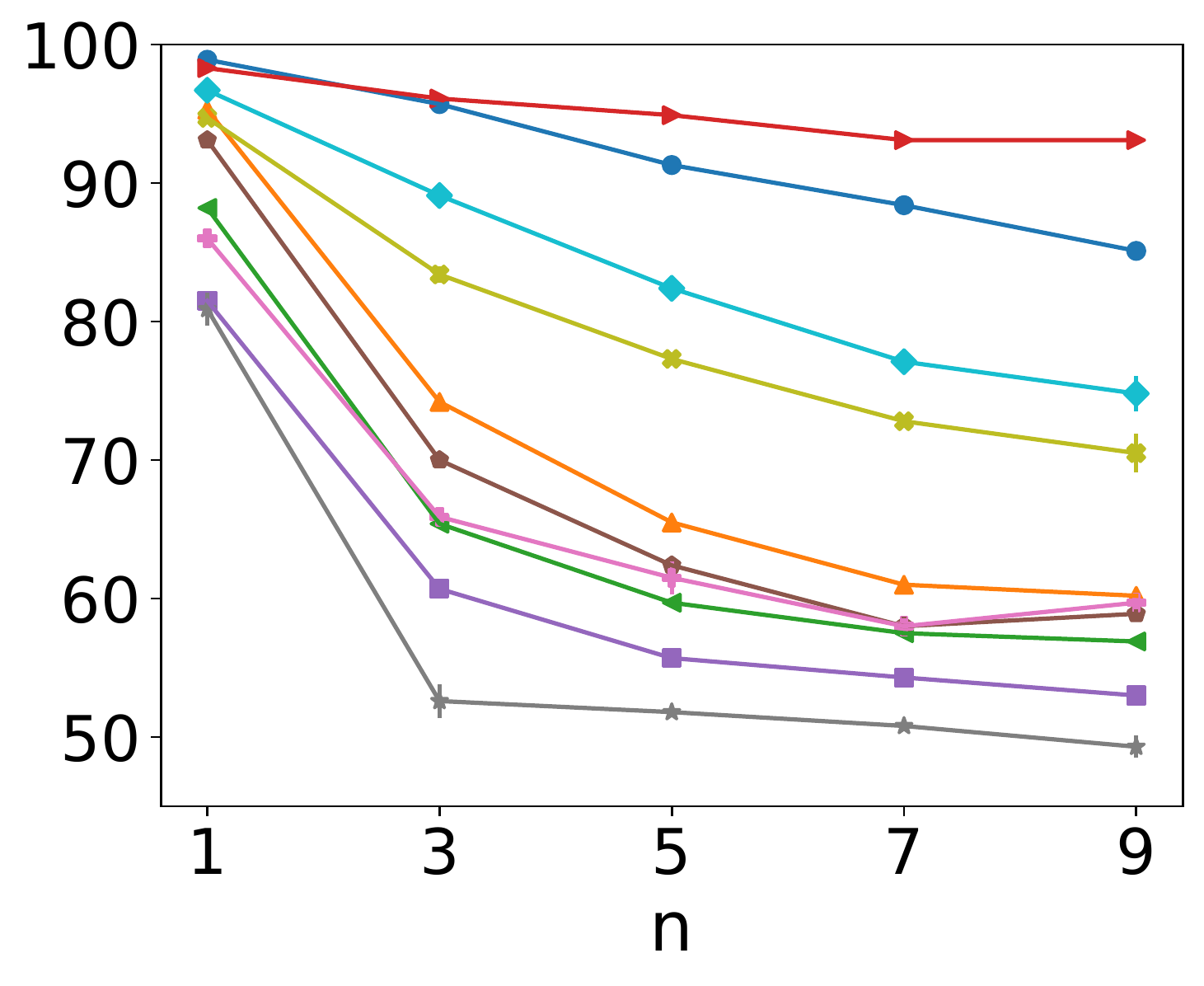}
    \caption{AUCs on SAD}
	\end{subfigure}
	\begin{subfigure}[b]{0.465\linewidth}
		\includegraphics[width=\linewidth]{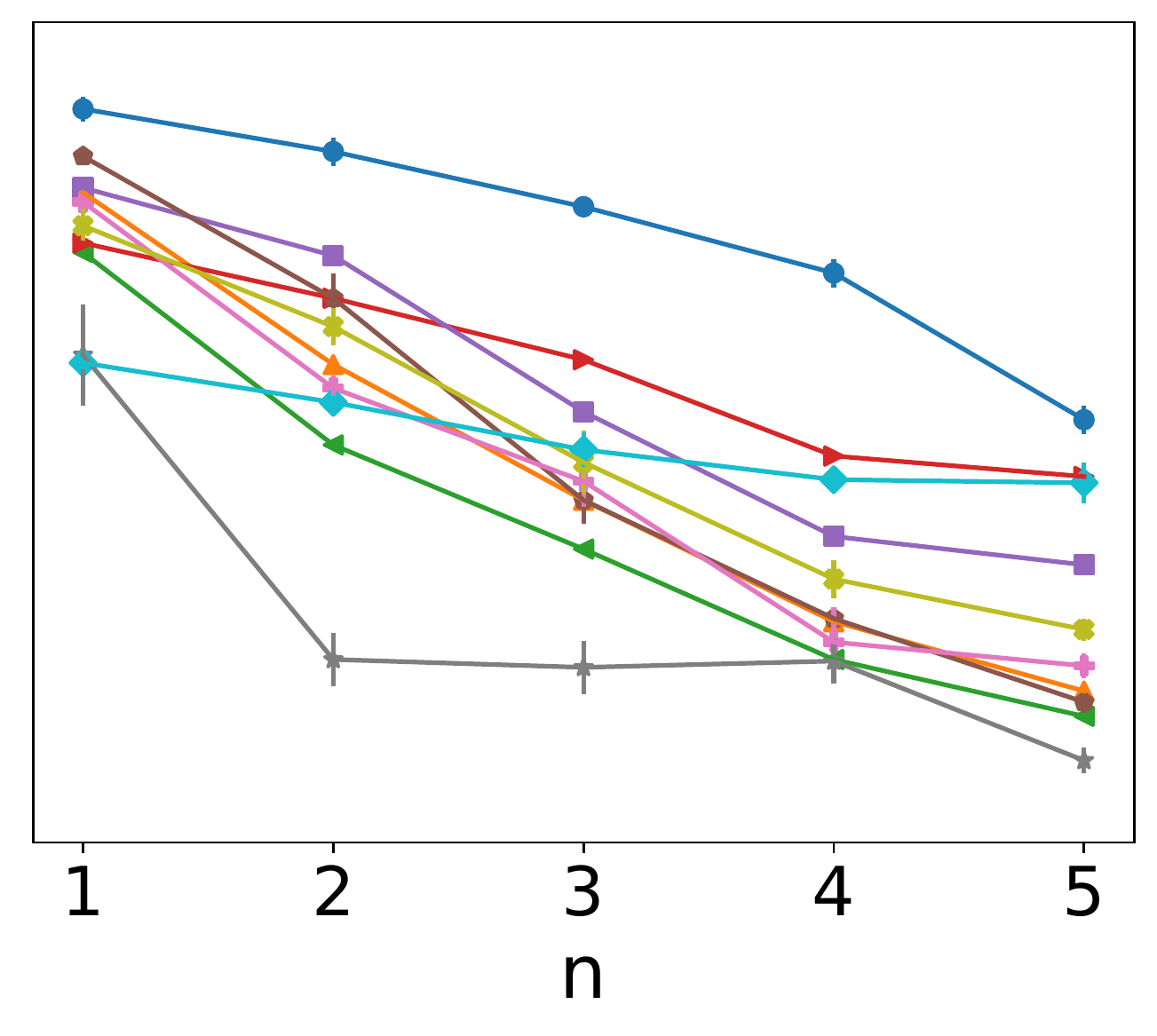}
    \caption{AUCs on NATOPS}
	\end{subfigure}

    \caption{AUC result of $n$-vs-all experiments on \gls{sad} and \gls{natops} with error bars (barely visible due to significance). \gls{ntl} outperforms all baselines on \gls{natops} and all deep learning baselines on \gls{sad}. \gls{lof}, a method based on $k$-nearest neighbors, outperforms \gls{ntl}, when $n > 3$ on \gls{sad}.}
    \label{fig:n-vs-all}
    \vspace{-5pt}
\end{figure}
We also include quantitative results for $n = N-1 $ under the $n$-vs-rest setting for all time series datasets, where only one class is considered abnormal, and the remaining $N-1$ classes are normal. The results are reported in \Cref{tab:ts_rest-vs-all}. We can see, the performance of all deep learning based methods drops as the variability of normal data increases. Our method outperforms other deep learning methods on 4 out of 5 datasets. On \gls{rs}, it is outperformed by transformation prediction with hand-crafted transformations. The results are consistent with the experiments under one-vs.-rest setting in \Cref{tab:ts_one-vs-all}. The traditional method \gls{lof} performs better than deep learning methods on \gls{ct} and \gls{sad}.

\subsection{Anomaly Detection of Tabular Data}
Tabular data is another important application area of anomaly detection.
For example, many types of health data come in tabular form.
To unleash the power of self-supervised anomaly detection for these domains, \citet{bergman2020classification} suggest using random affine transformation. Here we study the benefit of {\em learning} the transformations with \gls{ntl}. 
We base the empirical study on tabular datasets used in previous work \citep{zong2018deep,bergman2020classification} and follow their precedent of reporting results in terms of F1-scores. 

\parhead{Tabular Datasets.}
We study the four tabular datasets from the empirical studies of \citet{zong2018deep,bergman2020classification}. The datasets include the small-scale medical datasets Arrhythmia and Thyroid as well as the large-scale cyber intrusion detection datasets KDD and KDDRev (see \Cref{sec:appendix_tab_data} for all relevant details). We follow the configuration of \citep{zong2018deep} to train all models on half of the normal data, and test on the rest of the normal data as well as the anomalies.

\parhead{Baseline Models.}
We compare \gls{ntl} to shallow and deep baselines outlined in \Cref{sec:baselines}, namely \gls{ocsvm}, \gls{if}, \gls{lof}, and the deep anomaly detection methods \gls{svdd}, \gls{dagmm}, \gls{goad}, and \gls{drocc}. 

\parhead{Implementation details.} 
The implementation details of \gls{ocsvm}, \gls{lof}, \gls{dagmm}, and \gls{goad} are replicated from \citet{bergman2020classification}, as we report their results. The implementation of \gls{drocc} is from their official code.

For neural transformations, we use the residual parametrization $T_k(x) = M_k(x) +x$ on Thyroid and Arrhythmia, and use the multiplicative parametrization $T_k(x) = M_k(x) \odot x$ on KDD, and KDDRev.
The masks $M_k$ consists of two bias-free linear layers with an intermediate ReLU activation. When using the multiplicative parametrization, it has a sigmoid activation on the final layer.
The encoder consists of five bias-free linear layers with ReLU activations. The output dimensions of the encoder are $24$ for Thyroid and $32$ for the other datasets.
We set the number of neural transformations $K=11$.


\parhead{Results.}
The results of \gls{ocsvm}, \gls{lof}, \gls{dagmm}, and \gls{goad} are taken from \citep{bergman2020classification}. The results of \gls{drocc} were provided by \citet{goyal2020drocc} \footnote{Their empirical study uses a different experimental setting while the results reported here are consistent with prior works.}. \gls{ntl} outperforms all baselines on all datasets. Compared with the self-supervised anomaly detection baseline \gls{goad}, we use much fewer transformations.
\begin{table}[t!]
\vspace{-6pt}
	\caption{F1-score ($\%$) with standard deviation for anomaly detection on tabular datasets (choice of F1-score consistent with prior work).}
	\label{tab:tab_one-vs-all}
	\centering
	\resizebox{\linewidth}{!}{
	\begin{tabular}{l|cccc}
        \hline
		     & Arrhythmia   & Thyroid   &KDD &KDDRev \\
		\hline
        \acrshort{ocsvm} & 45.8 & 38.9 & 79.5 &83.2\\
        \acrshort{if} &57.4&46.9&90.7&90.6 \\
        \acrshort{lof} & 50.0 &52.7 &83.8 &81.6\\
        \hline
        \acrshort{svdd} &53.9$\pm$3.1 &70.8$\pm$1.8 &99.0$\pm$0.1 & 98.6$\pm$0.2 \\        
        \acrshort{dagmm}  &49.8 & 47.8 &93.7&93.8 \\
        \acrshort{goad}   &52.0$\pm$2.3 &74.5$\pm$1.1 &98.4$\pm$0.2 &98.9$\pm$0.3\\
        \acrshort{drocc}& 46&27&-&-\\
        \hline
        \acrshort{ntl} &\textbf{60.3}$\pm$1.1 &\textbf{76.8}$\pm$1.9 &\textbf{99.3}$\pm$0.1 &\textbf{99.1}$\pm$0.1  \\
        \hline
	\end{tabular}}
	\vspace{-5pt}
\end{table}

\subsection{Design Choices for the Transformations}
\label{sec:ablation}
In \Cref{sec:ntl_obs}, we have discussed the advantages of \gls{ntl} for neural transformation learning; no regularization or restrictions on the transformation family are necessary to ensure the transformations fulfill the semantics and diversity requirements defined in \Cref{sec:ntl_obs}.

In this section, we study the performance of \gls{ntl} under various design choices for the learnable transformations, including their parametrization, and their number $K$. We consider the following parametrizations:
{\em feed forward} $T_k(x):=M_k(x)$,
{\em residual} $T_k(x):=M_k(x) + x$, and
{\em multiplicative} $T_k(x):=M_k(x) \odot x$, which differ in how they combine the learnable masks $M_k(\cdot)$ with the data. 

In \Cref{fig:ablation_fig} we show the anomaly detection accuracy achieved with each parametrization, as $K$ varies from 2 to 15 on the time series data \gls{natops} and the tabular data Arrhythmia. For large enough $K$, \gls{ntl} is robust to the different parametrizations, since \gls{dcl} ensures the learned transformations satisfy the semantic requirement and the diversity requirement. The performance of \gls{ntl} improves as the number $k$ increases, and becomes stable when $K$ is large enough. When $K\leq 4$, the performance has a larger variance, since the learned transformations are not guaranteed to be useful for anomaly detection without the guidance of any labels. When $K$ is large enough, the learned transformations contain with high likelihood some transformations that are useful for anomaly detection.
The transformation based methods (including
\gls{ntl}) have roughly $K$ times the memory requirement as other deep learning methods (e.g. \gls{svdd}). However, the results in \Cref{fig:ablation_fig} show that even with small $K$ \gls{ntl} achieves competitive results.
\begin{figure}[t!]
    \centering
	\begin{subfigure}[b]{0.49\linewidth}
		\includegraphics[width=\linewidth]{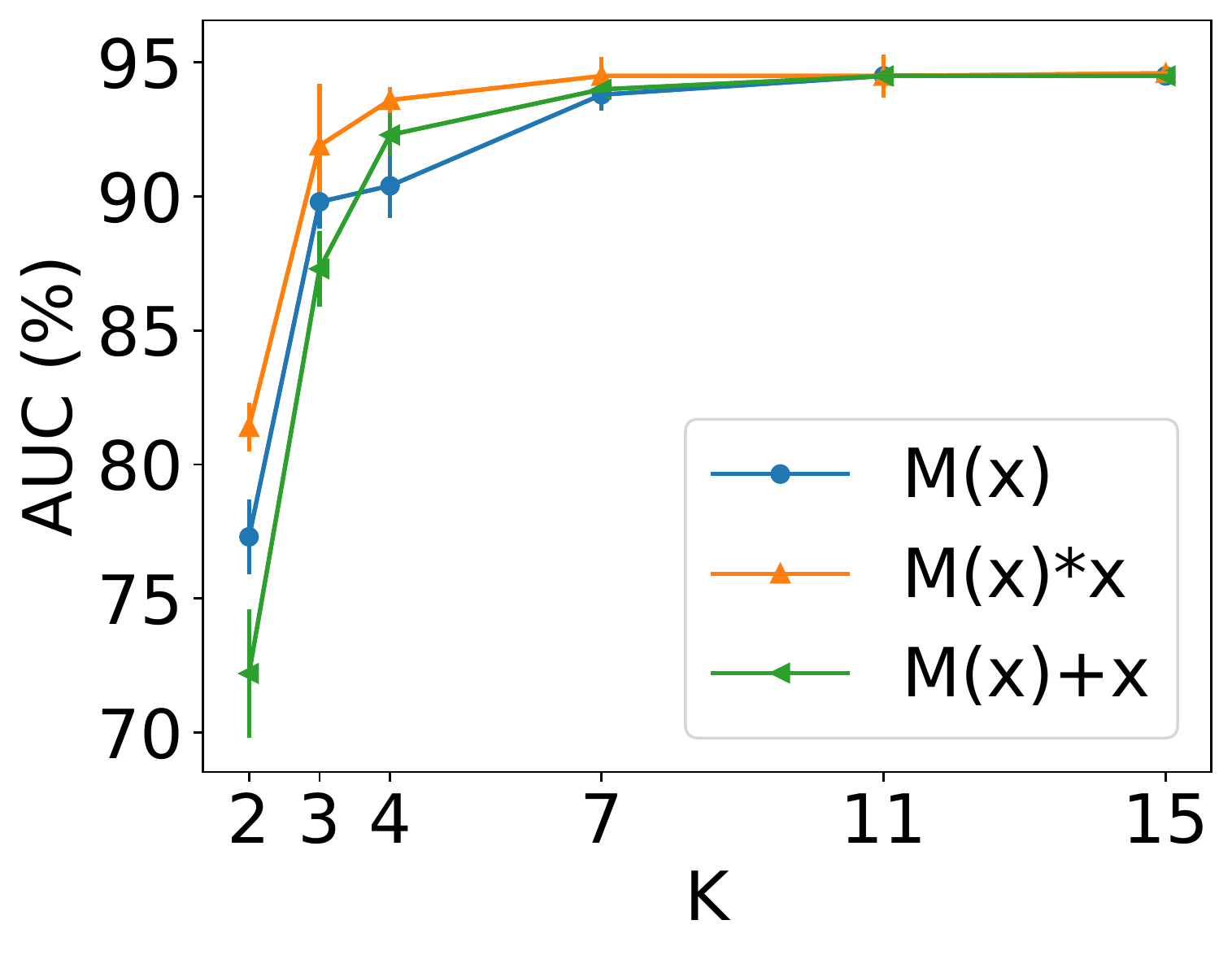}
	\caption{AUC on \gls{natops}}
	\end{subfigure}
	\begin{subfigure}[b]{0.49\linewidth}
		\includegraphics[width=\linewidth]{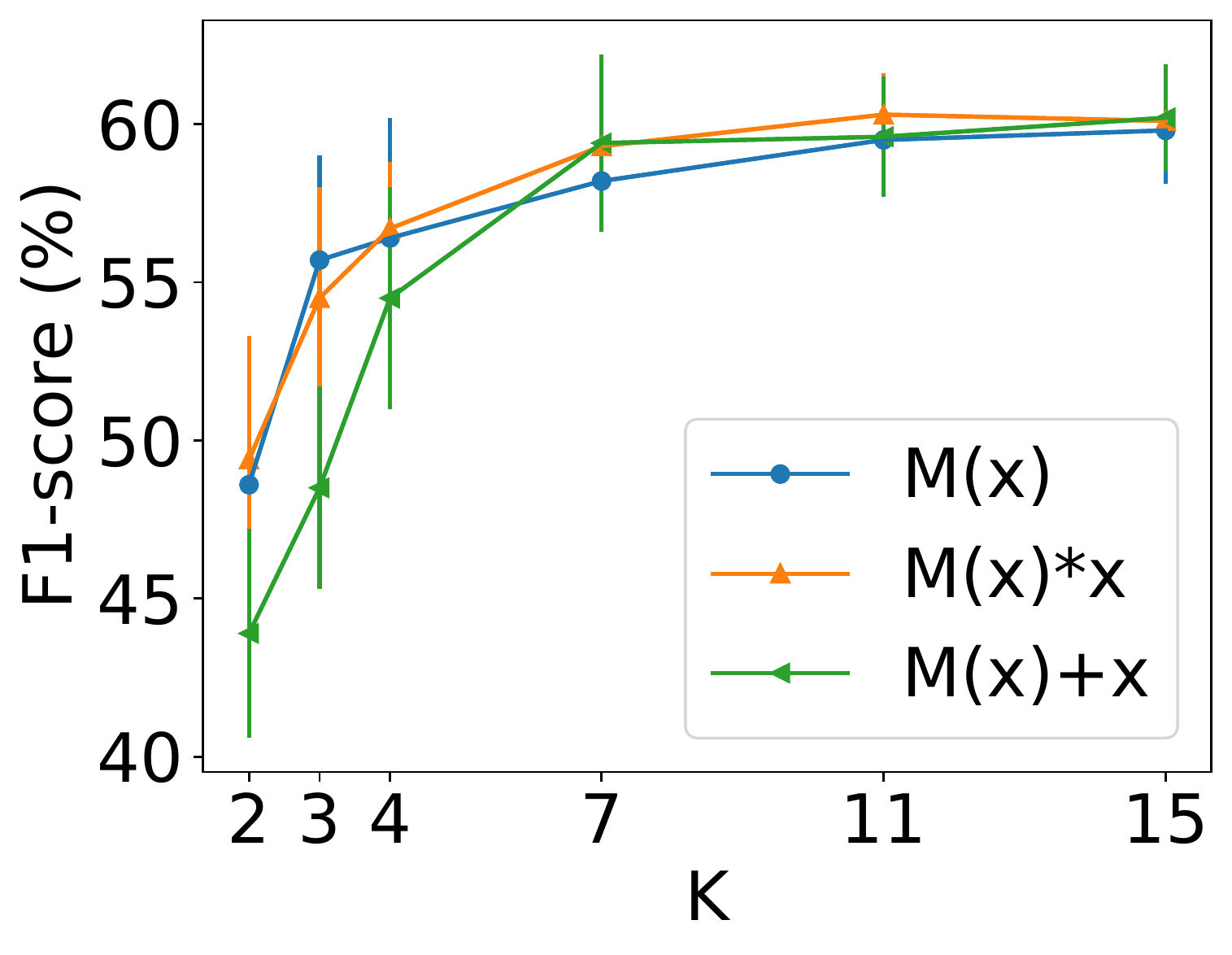}
    \caption{F1-score on Arrhythmia}
	\end{subfigure}
\vspace{-10pt}
    \caption{The outlier detection accuracy in terms of AUC of \gls{ntl} on \gls{natops} and in terms of F1-score of \gls{ntl} on Arrhythmia increases as the number of transformations $K$ increases, but stabilizes when a certain threshold is reached ($K>\approx 10$). With enough transformations, \gls{ntl} is robust to the parametrization of the transformations.}
    \label{fig:ablation_fig}
\vspace{-5pt}
\end{figure}

\section{Conclusion}\label{sec:conclusion}
We propose a self-supervised anomaly detection method with learnable transformations. The key ingredient is a novel training objective based on a \acrlong{dcl}, which encourages the learned transformations to produce diverse views that each share semantic information with the original sample, while being dissimilar. This unleashes the power of self-supervised anomaly detection to various data types including time series and tabular data. Our extensive empirical study finds, that on these data types, learning transformations and detecting outliers with \gls{ntl} improves over the state of the art.

\section*{Acknowledgements}
MK acknowledges support by the Carl-Zeiss Foundation, by the German Research Foundation (DFG) award KL 2698/2-1, and by the Federal Ministry of Science and Education (BMBF) awards 01IS18051A and 031B0770E.

SM was supported by the National Science Foundation (NSF) under Grants 1928718, 2003237 and 2007719, by Qualcomm, and by the Defense Advanced Research Projects Agency (DARPA) under Contract No. HR001120C0021. Any opinions, findings and conclusions or recommendations expressed in this material are those of the author(s) and do not necessarily reflect the views of DARPA.

We acknowledge helpful discussions with Dennis Wagner and Luis Augusto Weber Mercado’s contribution to \Cref{fig:schema} and thank Sachin Goyal and Prateek Jain for being impressively responsive by email and for updating their experimental results.

\bibliography{refs}
\bibliographystyle{icml2021}
\newpage
\appendix
\onecolumn
\section{Proofs for \Cref{sec:ntl_obs}}
\label{sec:appendix_proof}
In this Appendix, we prove the propositions in \Cref{sec:ntl_obs}.
The point is to compare various potential loss functions for neural transformation learning in their ability to produce useful transformations for self-supervised anomaly detection.

\Cref{req:sem,req:div} formalize what we consider useful transformations. The learned transformations should produce diverse views the share semantic information with the original sample.

We give two example edge-cases that violate the requirements, the `constant' edge-case in which the transformed views do no longer depend on the original sample (this violates the semantic requirement) and the `identity' edge-case in which all transformations reproduce the original sample perfectly but violate the diversity requirement.

We compare what happens to various losses for self-supervised anomaly detection under these edge cases. Specifically, we compare the loss of our method $\mathcal{L}$ (\Cref{eqn:contrastive_loss}) to the transformation prediction loss $\mathcal{L}_P$ (\Cref{eqn:lp}) of \citet{wang2019effective}, and the SimCLR loss $\mathcal{L}_C$ (\Cref{eqn:lc}, \citet{chen2020simple}), which has been used for anomaly detection in \citet{sohn2020learning}  and \citet{tack2020csi}.
In the original sources these losses have been used with fixed transformations (typically image transformations like rotations, cropping, blurring, etc.). Here we consider the same losses but with the learnable transformations parameterized as defined in \Cref{sec:neutralad}. All notation has been defined in \Cref{sec:neutralad}.

\subsection{Proof of Proposition 1}

We first investigate whether we can optimize the transformation prediction loss \Cref{eqn:lp} with respect to the transformation parameters $\theta_k$ and the parameters of $f_\phi$ and obtain learned transformations that fulfill  \Cref{req:sem,req:div}.

The proposition below states that a constant edge-case can achieve a global minimum of \Cref{eqn:lp}, which means that minimizing it can produce transformations the violate \Cref{req:sem}.

\begin{prop}
	\label{ac1}
	The `constant' edge-case $f_\phi(T_k(x)) = Cc_k$, where $c_k$ is a one-hot vector encoding the $k^{th}$ position (i.e. $c_{kk} = 1$) tends towards the minimum of $\mathcal{L}_P$ (\Cref{eqn:lp}) as the constant $C$ goes to infinity.
\end{prop}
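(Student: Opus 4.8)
The plan is to establish two facts: that $\mathcal{L}_P$ is bounded below by $0$ for every choice of encoder and transformation parameters, and that the proposed constant configuration drives $\mathcal{L}_P$ arbitrarily close to $0$ as $C \to \infty$. Together these identify $0$ as the infimum of $\mathcal{L}_P$ and exhibit the `constant' edge-case as a minimizing sequence, which is exactly the content of the proposition.

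For the lower bound I would simply observe that the fraction inside the logarithm in \Cref{eqn:lp} is a softmax output, hence a number in $(0,1]$. Therefore each summand $-\log \frac{\exp f_\phi(x_k)_k}{\sum_{l} \exp f_\phi(x_k)_l}$ is nonnegative, the sum over $k=1,\dots,K$ is nonnegative, and taking the expectation over $x \sim \mathcal{D}$ preserves nonnegativity. Hence $\mathcal{L}_P \ge 0$ unconditionally, so $\inf \mathcal{L}_P \ge 0$.

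For the matching upper bound I would substitute the edge-case $f_\phi(T_k(x)) = C c_k$. Since $c_k$ is the one-hot vector with $c_{kk}=1$ and $c_{kl}=0$ for $l \neq k$, this gives $f_\phi(x_k)_k = C$ and $f_\phi(x_k)_l = 0$ for $l \neq k$, independent of $x$. The softmax probability assigned to the correct transformation label is then $\frac{e^{C}}{e^{C} + (K-1)}$, so $\mathcal{L}_P = -K \log \frac{e^{C}}{e^{C}+(K-1)} = K \log\bigl(1 + (K-1)e^{-C}\bigr)$. Letting $C \to \infty$ sends $e^{-C} \to 0$ and hence $\mathcal{L}_P \to 0$. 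Combined with the lower bound, this shows the edge-case is a minimizing sequence for $\mathcal{L}_P$ and that $\inf \mathcal{L}_P = 0$, proving the claim.

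I do not anticipate a genuine obstacle: the computation is routine once the edge-case is plugged in. The only subtlety worth flagging is that the infimum $0$ is never attained at any finite $C$ (a softmax never outputs exactly $1$), which is precisely why the statement is phrased as a limit rather than ``is a minimizer.'' A second remark, orthogonal to the proof, is that the argument treats $f_\phi \circ T_k$ as an arbitrary map taking the stated values; whether a given parametrization of the transformations can actually realize this configuration is the separate parametrization issue discussed in \Cref{sec:ntl_obs} and is not needed here.
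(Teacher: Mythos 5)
Your proposal is correct and follows essentially the same route as the paper's proof: establish $\mathcal{L}_P \ge 0$ from the softmax/negative-log-probability structure, substitute the one-hot configuration to get $-K\log\frac{e^{C}}{e^{C}+K-1}$, and let $C\to\infty$. The extra remark that the infimum is not attained at finite $C$ is a nice clarification of why the proposition is phrased as a limit, but it does not change the argument.
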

\begin{proof}
	As a negative log probability $\mathcal{L}_P\geq 0$ is lower bounded by 0. We want to show that with $f_\phi(T_k(x)) = Cc_k$, (where $c_k$ is a one hot vector and $C$ is a constant,) $\mathcal{L}$ goes to 0 as $C$ goes to infinity. 
	Plugging $f_\phi(T_k(x)) = Cc_k$ into $\mathcal{L}_p$ and taking the limit yields
	\begin{align}
	\lim_{C \to \infty} \mathcal{L}_P &= \lim_{C \to \infty} \mathbb{E}_{x\sim\mathcal{D}}[- \sum_{k=1}^K \log \frac{\exp C}{\exp C + K - 1}] = \lim_{C \to \infty} - K \log \frac{\exp C}{\exp C + K - 1}\nonumber \\&=  \lim_{C \to \infty} - KC + K\log(\exp C + K - 1) = 0 \nonumber
	\end{align}
\end{proof}

\subsection{Proof of Proposition 2}

Next, we ask what would happen if we optimized the SimCLR loss  $\mathcal{L}_C$ (\Cref{eqn:lc}) with respect to the transformation parameters and the encoder. 

The result is, that if we allowed the encoder $f_\phi$ to be as flexible as necessary to achieve a global minimum of $\mathcal{L}_C$, then we can derive another minimum of $\mathcal{L}_C$ that relies only on identity transformations, thereby obtaining a solution to the minimization problem that violates the diversity requirement.

\begin{prop}
	\label{ac2}
	The `identity' edge-case $T_k(x) = x$ with adequate encoder $f_\phi$ is a minimizer of $\mathcal{L}_C$ (\Cref{eqn:lc}).
\end{prop}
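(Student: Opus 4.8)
The plan is to exhibit, for the identity transformations $T_1 = \cdots = T_K = \mathrm{id}$, an encoder $f_\phi$ that makes $\mathcal{L}_C$ attain its global minimum, and then argue that this minimum value is indeed the infimum of $\mathcal{L}_C$ over all choices of transformations and encoders. First I would fix a minibatch $\mathcal{M} = \{x^{(1)},\dots,x^{(N)}\}$ of distinct samples and recall that with $T_1 = T_2 = \mathrm{id}$ we have $x_1^{(i)} = x_2^{(i)} = x^{(i)}$, so each term $\mathcal{L}_C(x_1^{(i)}, x_2^{(i)})$ from \Cref{eqn:lc} becomes
\begin{align}
-\log h(x^{(i)}, x^{(i)}) + \log\left[ h(x^{(i)},x^{(i)}) + \sum_{j\neq i} h(x^{(i)}, x^{(j)}) + \sum_{j\neq i} h(x^{(i)}, x^{(j)}) \right]. \nonumber
\end{align}
The key observation is that $h(x^{(i)},x^{(i)}) = \exp(\mathrm{sim}(f_\phi(x^{(i)}), f_\phi(x^{(i)}))/\tau) = \exp(1/\tau)$, since the cosine similarity of any nonzero vector with itself is $1$; this is the \emph{largest} value $h$ can take, because $\mathrm{sim}$ is bounded above by $1$. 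So the positive pair already achieves the maximal possible score, and the only remaining freedom is to make the "negative" scores $h(x^{(i)},x^{(j)})$ for $j \neq i$ as small as possible.

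Next I would choose $f_\phi$ so that the embeddings of distinct samples are as far apart as cosine similarity allows — for instance, if $N$ is small relative to the embedding dimension, $f_\phi$ can send the $x^{(i)}$ to mutually orthogonal (or near-antipodal) unit vectors, driving each $\mathrm{sim}(f_\phi(x^{(i)}), f_\phi(x^{(j)}))$ for $j\neq i$ down to its minimum. More carefully, and to match the phrase "adequate encoder," I would argue that the infimum of $\sum_j h(x^{(i)},x^{(j)})_{j\neq i}$ over encoders is the same whether or not the transformations are the identity (the negative terms of the form $h(x_1^{(i)}, x_1^{(j)})$ and $h(x_1^{(i)}, x_2^{(j)})$ involve embeddings of different underlying samples in both cases), so the identity loses nothing on the denominator; meanwhile it strictly maximizes the numerator term $h(x_1^{(i)}, x_2^{(i)})$, which any non-identity pair of transformations can only match (when $f_\phi(T_1(x^{(i)})) = f_\phi(T_2(x^{(i)}))$) but never exceed. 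Hence for the optimal adequate $f_\phi$, the identity edge-case attains $\mathcal{L}_C \le \mathcal{L}_C$ for every other transformation choice, i.e. it is a global minimizer.

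The main obstacle — and the place where "adequate" has to be pinned down — is handling the negative terms rigorously: one must verify that using identity transformations does not \emph{force} any negative score to be larger than it would otherwise be, and that the encoder achieving the joint optimum for identity transformations can be taken to simultaneously minimize those negatives and maximize the positive. I would address this by noting that the positive-pair term decouples cleanly: for identity transformations it equals the constant $\exp(1/\tau)$ regardless of $f_\phi$, so the optimization over $f_\phi$ reduces to minimizing the denominator alone, and an encoder that spreads the $N$ sample embeddings out (in the capacity sense of \citet{chen2020simple}) achieves the unconstrained infimum of that reduced problem. Comparing this value with the general lower bound obtained by bounding each numerator term above by $\exp(1/\tau)$ and each negative term below by its encoder-infimum shows the two coincide, so the identity configuration is optimal. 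I would keep the argument at the level of these inequalities rather than grinding out the exact extremal embeddings, since the claim is existence of a minimizing configuration, not uniqueness.
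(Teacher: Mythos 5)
Your proof is correct in spirit but takes a genuinely different route from the paper's. You argue \emph{directly} for optimality: the positive score $h(x_1^{(i)},x_2^{(i)})$ is capped at $\exp(1/\tau)$, the identity attains that cap for free, and the denominator's negative terms can (with an adequate encoder) be driven to the same infimum as under any other transformation pair, so the identity configuration matches a term-by-term lower bound on $\mathcal{L}_C$. The paper instead uses a \emph{reduction}: it splits $\mathcal{L}_C$ into an alignment term and a uniformity term, notes that an adequate minimizer $f^*_\phi$ must satisfy $f^*_\phi(T_1(x^{(i)}))=f^*_\phi(T_2(x^{(i)}))$ to minimize alignment, and then defines $\tilde f_\phi = f^*_\phi\circ T_1$ so that identity transformations with $\tilde f_\phi$ reproduce \emph{exactly} the same embeddings, hence exactly the same loss, as the original minimizer. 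The paper's composition trick buys you something your argument has to work for: because alignment at the optimum already collapses the two views of each sample to a single embedding, replacing $T_1,T_2$ by the identity provably changes nothing in the negative terms either. Your version leaves a soft spot exactly there --- you assert that restricting to identity transformations does not raise the achievable infimum of the negatives, but identity transformations halve the number of free embedding vectors (from $2N$ to $N$), and ruling out that this extra freedom helps the uniformity term requires either the paper's observation that the optimum is already collapsed, or an explicit argument that any gain in the negatives from splitting views is dominated by the loss in the positive term. I'd recommend closing that gap by borrowing the composition step; otherwise the two proofs are of comparable rigor, both resting on the informal ``adequate encoder'' hypothesis.
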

\begin{proof}
	$\mathcal{L}_c(\mathcal{M})$ can be separated as the alignment term and the uniformity term.
	\begin{align}
	\mathcal{L}_c(\mathcal{M})&=\underbrace{\sum_{i=1}^N\left[-\log h(x_1^{(i)},x_2^{(i)})-\log h(x_2^{(i)},x_1^{(i)})\right]}_{\mathcal{L}_{\text{alignment}}}\\
	&+\underbrace{\sum_{i=1}^N \left[\log \left[ \sum_{j=1}^N h(x_1^{(i)}, x_2^{(j)})+\sum_{j=1}^N\mathds{1}_{[j\neq i]} h(x_1^{(i)}, x_1^{(j)}) \right]
		+\log \left[ \sum_{j=1}^N h(x_2^{(i)}, x_1^{(j)})+\sum_{j=1}^N\mathds{1}_{[j\neq i]} h(x_2^{(i)}, x_2^{(j)}) \right]\right]}_{\mathcal{L}_{\text{uniformity}}}\,.\nonumber
	\end{align}
	
	A sufficient condition of $\min(\mathcal{L}_c(\mathcal{M}))$ is both $\mathcal{L}_{\text{alignment}}$ and $\mathcal{L}_{\text{uniformity}}$ are minimized.
	\begin{align}
	\min(\mathcal{L}_c(\mathcal{M})) \geq \min(\mathcal{L}_{\text{alignment}})+ \min(\mathcal{L}_{\text{uniformity}})\,.
	\end{align}
	
	Given an adequate encoder $f^*_\phi$, that is flexible enough to minimize both  $\mathcal{L}_{\text{alignment}}$ and $\mathcal{L}_{\text{uniformity}}$ for all transformation pairs $T_1$ and $T_2$, we will show we can construct another solution to the minimization problem that relies only on identity transformations.
	
	The alignment term is only minimized for all $T_1$, $T_2$, if $f^*_{\phi}(T_1(x^{(i)})) = f^*_{\phi}(T_2(x^{(i)}))$  for all $x^{(i)}\sim \mathcal {M}$. So we know for $f^*_{\phi}$ that
	\begin{align}
	f^*_{\phi} = \argmin_{f_\phi}\mathcal{L}_{\text{alignment}} \quad &\iff \quad  \text{sim}(f^*_{\phi}(T_1(x^{(i)})),f^*_{\phi}(T_2(x^{(i)})))  = 1 \quad  \forall \, x^{(i)}\sim \mathcal {M} \\ &\iff \quad f^*_{\phi}(T_1(x^{(i)})) = f^*_{\phi}(T_2(x^{(i)})) \quad \forall \, x^{(i)}\sim \mathcal {M}.
	\end{align}
	Define $\tilde{f}_{\phi} = f^*_{\phi}\circ T_1$. Since $f^*_{\phi}(T_1(x^{(i)})) = f^*_{\phi}(T_2(x^{(i)}))$,
	\begin{align}
	\tilde{f}_{\phi}(\mathbb{I}(x^{(i)})) = f^*_{\phi}(T_1(x^{(i)})) = f^*_{\phi}(T_2(x^{(i)})) \quad  \forall \, x^{(i)}\sim \mathcal {M} .
	\end{align}
	Using only the identity transformation $\mathbb{I}(x) = x$ for $T_1$ and $T_2$, and $\tilde{f}_{\phi}$ as the encoder in $\mathcal{L}_C$ yields the same minimal loss as under $T_1$, $T_2$ and $f^*_{\phi}$.
\end{proof}

\subsection{Proof of Proposition 3}

Finally, we investigate the effect of the edge-cases from \Cref{c1,c2} on our objective \Cref{eqn:contrastive_loss}.
\begin{prop}
	\label{ac3}
	The edge-cases of \Cref{c1,c2} do not minimize $\mathcal{L}$ (\gls{dcl}, \Cref{eqn:contrastive_loss}).
\end{prop}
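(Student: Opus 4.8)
The plan is to pin down the value of $\mathcal{L}$ at each edge-case and then exhibit a single explicit embedding configuration that beats both, which shows neither edge-case is a global minimizer (I assume $K\ge 2$, since for $K=1$ we have $\mathcal{L}\equiv 0$ and the claim is vacuous). Because $\mathcal{L}$ sees the embeddings only through \emph{cosine} similarities, I work entirely with the Gram matrix of the unit vectors $u:=f_\phi(x)/\|f_\phi(x)\|$ and $w_k:=f_\phi(T_k(x))/\|f_\phi(T_k(x))\|$. The identity edge-case is immediate: if $T_k(x)=x$ for every $k$ then $u=w_1=\dots=w_K$, all similarities equal $1$, each summand of \Cref{eqn:contrastive_loss} equals $-\log\frac1K=\log K$, and hence $\mathcal{L}=K\log K$ for \emph{any} encoder.

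For the constant edge-case, $w_k=c_k$ (the constant $C$ drops out, since cosine similarity is scale invariant), so $\langle w_k,w_l\rangle=0$ for $k\neq l$ and $\langle w_k,u\rangle=u_k$; the per-sample loss is $\sum_k g(u_k)$ with $g(a):=\log\!\bigl(1+(K-1)e^{-a/\tau}\bigr)$. Here $g$ is convex and decreasing, and $\sum_k u_k^2\le\|u\|^2=1$, so Jensen's inequality together with $\frac1K\sum_k u_k\le\sqrt{\tfrac1K\sum_k u_k^2}\le\frac1{\sqrt K}$ gives $\sum_k g(u_k)\ge K\,g(1/\sqrt K)$, hence $\mathcal{L}\ge B_K:=K\log\!\bigl(1+(K-1)e^{-1/(\tau\sqrt K)}\bigr)$. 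Since $B_K<K\log K$, both edge-case losses are at least $B_K$, so it suffices to produce one configuration with loss strictly below $B_K$.

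To that end I will add one extra embedding coordinate and set $u:=e_{K+1}$ and $w_k:=\bigl(\sqrt{1-\gamma^2}\,s_k,\,\gamma\bigr)$ for a parameter $\gamma\in(0,1)$, where $s_1,\dots,s_K$ are unit vectors forming a regular simplex ($\langle s_k,s_l\rangle=-1/(K-1)$). These are genuine unit vectors — realizable by a sufficiently expressive encoder and transformations — with $\langle u,w_k\rangle=\gamma$ and $\langle w_k,w_l\rangle=\gamma^2-\frac{1-\gamma^2}{K-1}$, so each summand of $\mathcal{L}$ equals $\log\!\bigl(1+(K-1)e^{\rho(\gamma)/\tau}\bigr)$ with $\rho(\gamma):=\gamma^2-\gamma-\frac{1-\gamma^2}{K-1}$, a strictly convex quadratic minimized at $\gamma^\star=\frac{K-1}{2K}$. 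A short computation gives $\rho(1/\sqrt K)=-1/\sqrt K$, while $\gamma^\star\neq 1/\sqrt K$ for every integer $K\ge 2$ (equality would force $K=3+2\sqrt2\notin\mathbb{Z}$), so $\rho(\gamma^\star)<-1/\sqrt K$. Therefore this configuration has loss $K\log\!\bigl(1+(K-1)e^{\rho(\gamma^\star)/\tau}\bigr)<B_K$, and neither edge-case minimizes $\mathcal{L}$.

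The main obstacle is the third step: producing a concrete, model-realizable configuration that provably beats $B_K$ \emph{uniformly} over $K$ and $\tau$. The crude estimate $\langle w_k,u\rangle\le1$ for the constant edge-case is too weak — it only forces the exponent below $-1$, a target the simplex-plus-center competitor misses for large $K$ — so the norm constraint $\sum_k u_k^2\le1$ must be exploited to sharpen the bound to $B_K$, after which the competitor must be tuned (via $\gamma^\star$) and compared against $-1/\sqrt K$. Everything else (convexity and monotonicity of $g$, the identity-case value, and $B_K<K\log K$) is routine.
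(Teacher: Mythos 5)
Your proof is correct, and for the constant edge-case it takes a genuinely different route from the paper's. The paper treats the two edge-cases separately: for the identity it computes the value $K\log K$ and exhibits a better configuration only for $K=2$ (embeddings orthogonal to $z$ and antipodal to each other), while for the constant case it runs a first-order argument, showing that the gradient of the per-sample loss with respect to the embeddings cannot vanish at $z_{1:K}=Cc_{1:K}$ (the stationarity conditions collapse to $1-K(2/(K-1))^2=0$, which has no integer solution), so that configuration is not even a critical point. You instead argue globally through the Gram matrix of the normalized embeddings: you derive the sharp lower bound $B_K=K\log\bigl(1+(K-1)e^{-1/(\tau\sqrt K)}\bigr)$ for the constant case by exploiting $\sum_k u_k^2\le\|u\|^2=1$ together with convexity and monotonicity of $g$ (you correctly note that the naive bound $u_k\le 1$ is too weak for large $K$), observe $B_K<K\log K$, and then beat $B_K$ with one tuned simplex-plus-center competitor via the clean identity $\rho(1/\sqrt K)=-1/\sqrt K$, strict convexity of $\rho$, and $\gamma^\star\neq 1/\sqrt K$ for integer $K$ (equality would force $K=3+2\sqrt2$; equivalently this is AM--GM applied to $\tfrac{1}{K-1}+\tfrac{K-1}{4K}$). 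Your approach buys uniformity in $K$ and $\tau$ and a single competitor that dispatches both edge-cases at once, strengthening the paper's identity counterexample from $K=2$ to all $K\ge 2$; the paper's gradient computation buys the locally stronger conclusion that the constant configuration is not even a stationary point in embedding space, which your global comparison does not establish. Both arguments share the same realizability caveat — that an adequately flexible encoder and transformations can induce the competitor's Gram structure — which the paper itself invokes for its own counterexample.
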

We divide the proposition and its proof into two parts.
\begin{prt}
	The `constant' edge-case $f_\phi(T_k(x)) = Cc_k$, where $c_k$ is a one-hot vector encoding the $k^{th}$ position (i.e. $c_{kk} = 1$) does not minimize $\mathcal{L}$ (\gls{dcl}, \Cref{eqn:contrastive_loss}) for any $C$, also not as $C$ tends to infinity.
\end{prt}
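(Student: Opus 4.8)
The plan is to prove that the value of $\mathcal{L}$ (\Cref{eqn:contrastive_loss}) at the constant edge-case strictly exceeds $\inf_\theta\mathcal{L}(\theta)$; since this will hold for \emph{every} admissible encoder (only $f_\phi\circ T_k$ being pinned down by the edge-case), it shows the edge-case never attains the infimum. The argument has three steps: (i) reduce the edge-case value to a one-dimensional expression; (ii) lower-bound it; (iii) exhibit a different admissible configuration with strictly smaller loss.

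For step (i), I substitute $f_\phi(T_k(x))=Cc_k$ into \Cref{eqn:score}. Because the similarity is the \emph{cosine} similarity, $C$ cancels, so the edge-case value of $\mathcal{L}$ is independent of $C\neq 0$ --- this settles ``for any $C$'' and ``as $C\to\infty$'' at once. Since the $c_k$ are orthonormal, $\mathrm{sim}(c_k,c_l)=0$ for $k\neq l$, hence $h(x_k,x_l)=1$, while $h(x_k,x)=\exp(u_k(x)/\tau)$ with $u_k(x):=\langle c_k,f_\phi(x)\rangle/\|f_\phi(x)\|$ obeying $\sum_{k=1}^K u_k(x)^2\le 1$ by Bessel's inequality. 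The per-sample summand of \Cref{eqn:contrastive_loss} then collapses to $\sum_{k=1}^K g(u_k(x))$ with $g(t):=\log\bigl(1+(K-1)e^{-t/\tau}\bigr)$, which is strictly decreasing and (a short computation shows $g''>0$) strictly convex.

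For step (ii), Jensen for the convex $g$ and then Cauchy--Schwarz on $\sum_k u_k(x)^2\le 1$ give, for every $x$, $\sum_{k=1}^K g(u_k(x))\ge K\,g\bigl(\tfrac1K\sum_k u_k(x)\bigr)\ge K\,g(1/\sqrt{K})$, hence the edge-case loss is at least $K\,g(1/\sqrt{K})$ regardless of the encoder. For step (iii), I take the embeddings of the transformed samples to be $v_k=\alpha v_0+\sqrt{1-\alpha^2}\,s_k$, where $v_0$ is the fixed embedding of the untransformed sample and $s_1,\dots,s_K$ are unit vectors forming a regular simplex orthogonal to $v_0$ ($\langle s_k,s_l\rangle=-1/(K-1)$); these values are realizable by taking the $T_k$ to be constant maps to distinct points and letting $f_\phi$ interpolate the finitely many required embeddings. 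A direct computation gives $\mathrm{sim}(v_k,v_0)=\alpha$ and $\mathrm{sim}(v_k,v_l)=\alpha^2-\tfrac{1-\alpha^2}{K-1}$, so the loss of this configuration is $K\,g(\Delta(\alpha))$ with margin $\Delta(\alpha)=\alpha-\alpha^2+\tfrac{1-\alpha^2}{K-1}$; maximizing over $\alpha$ yields $\alpha=\tfrac{K-1}{2K}$ and $\Delta^\star=\tfrac{K-1}{4K}+\tfrac1{K-1}$. Since $g$ is strictly decreasing, it remains to verify $\Delta^\star>1/\sqrt{K}$; clearing denominators, this is equivalent to $\bigl(K-1-2\sqrt{K}\bigr)^2>0$, which holds for all integer $K\ge 2$ (equality would force $K=3+2\sqrt{2}\notin\mathbb{Z}$). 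Therefore $\inf_\theta\mathcal{L}(\theta)\le K\,g(\Delta^\star)<K\,g(1/\sqrt{K})\le\mathcal{L}(\text{constant edge-case})$, so the edge-case does not minimize $\mathcal{L}$, and nothing changes as $C\to\infty$.

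I expect the crux --- and the main obstacle --- to be step (iii): one needs a single comparison configuration that undercuts the edge-case lower bound for \emph{every} $K\ge 2$ and \emph{every} temperature $\tau$. The obvious candidates (e.g.\ mutually orthogonal transformed embeddings tilted slightly toward $v_0$) only produce a positive-vs-negative margin of at most $1/4$, which beats $1/\sqrt{K}$ only for $K>16$; the regular-simplex-rotated-toward-$v_0$ construction is exactly what pushes the margin above $1/\sqrt{K}$ uniformly, and the inequality $\Delta^\star>1/\sqrt{K}$ is the one genuinely non-obvious point (the factorization $(K-1-2\sqrt{K})^2>0$ makes it painless once found). A secondary, routine matter is confirming realizability of the stated embeddings, which is handled as above and parallels the ``adequate encoder'' hypothesis used in the proof of \Cref{ac2}.
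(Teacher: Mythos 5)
Your proof is correct, but it takes a genuinely different route from the paper's. The paper argues locally: it treats the embeddings $z_0,\dots,z_K$ as free variables and shows by a (fairly involved) case analysis that the gradient of the per-sample loss cannot vanish at $z_{1:K}=Cc_{1:K}$ --- the contradiction being that stationarity would force $1-K\bigl(\tfrac{2}{K-1}\bigr)^2=0$, which no integer $K$ satisfies. You argue globally: you observe that $C$ cancels in the cosine similarity (which disposes of ``for any $C$'' and the $C\to\infty$ limit more cleanly than the paper's ``as $C$ is finite'' bookkeeping), reduce the edge-case value to $\sum_k g(u_k)$ with $g$ convex and decreasing, lower-bound it by $K\,g(1/\sqrt K)$ via Jensen and Cauchy--Schwarz, and then beat that bound with the tilted-simplex configuration, the key inequality $\Delta^\star>1/\sqrt K$ reducing to $(K-1-2\sqrt K)^2>0$. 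I checked the computations ($\|v_k\|=1$, $\mathrm{sim}(v_k,v_l)=\alpha^2-\tfrac{1-\alpha^2}{K-1}$, the optimal $\alpha=\tfrac{K-1}{2K}$, and the final factorization) and they are right. Each approach buys something the other does not: the paper's shows the edge-case is not even a stationary point, a statement relevant to gradient-based training; yours shows it is not a global minimizer and quantifies the gap, is considerably more self-contained, and parallels the counterexample style the paper itself uses for the identity edge-case in Part 2. The one point to state more carefully is realizability of the comparison configuration --- constant $T_k$ plus an interpolating $f_\phi$ is fine, but you should flag it as the same ``adequate encoder'' idealization the paper invokes, since the paper's Part 1 sidesteps this entirely by working directly in embedding space.
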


\begin{proof}
	For simplicity, we define the embeddings obtained by the transformations as $z_k := f_{\phi}(T_k(x))$ for $k=1,\cdots,K$ and $z_0:=f_{\phi}(x)$.
	We prove that the gradient of $\mathcal{L}(x)$ with respect to embeddings $\nabla \mathcal{L}(x)= [\frac{\partial \mathcal{L}(x)}{\partial z_0},\cdots,\frac{\mathcal{L}(x)}{\partial z_K}]^T \neq 0$ at $z_{1:K}=Cc_{1:K}$, where $c_k$ is a one-hot vector encoding the $k^{th}$ position (i.e. $c_{k,k} = 1$). 
	Using the chain rule, the partial derivative $\frac{\partial \mathcal{L}(x)}{\partial z_{n}}\, \forall \,n\in\{0,\cdots,K\}$ can be factorized as 
	\begin{align}
	\frac{\partial \mathcal{L}(x)}{\partial z_{n}} = \sum_{k=1}^K \frac{\sum_{l\in\{1,\cdots,K\}/ \{k\}}h(z_k, z_l)(\frac{\partial \text{sim}(z_k,z_l)}{\partial z_{n}/\|z_{n}\|}-\frac{\partial \text{sim}(z_k,z_0)}{\partial z_{n}/\|z_{n}\|})}{ \sum_{l\in\{0,\cdots,K\}/ \{k\}}h(z_k, z_l)}\frac{I- z_nz_n^T/\|z_{n}\|^2}{\|z_{n}\|}.
	\end{align}
	We define $A_n:= \frac{I- z_nz_n^T/\|z_{n}\|^2}{\|z_{n}\|}$, and plug in $z_{1:K}=Cc_{1:K}$ and $\text{sim}(a,b) = a^T b/ \|a\| \|b\|$.
	\begin{align}
	\frac{\partial \mathcal{L}(x)}{\partial z_{n}} \at[\big]{z_{1:K}=Cc_{1:K}} = \sum_{k\in\{1,\cdots,K\}/ \{n\}}\frac{c_k^TA_n-z_0^TA_n/ \|z_0\|}{h(Cc_{n},z_0)+K-1}+  \frac{c_k^TA_n}{h(Cc_k,z_0)+K-1}.
	\end{align}
	As $C$ is finite, by assuming the $k$th ($k\neq n$) entry of $\frac{\partial \mathcal{L}(x)}{\partial z_{n}}$ equals zero at $z_{1:K}=Cc_{1:K}$ we get the $k$th entry of $z_0$
	\begin{align}
	\label{eqn:conflict_1}
	z_{0,k}^T = \frac{\|z_0\|}{K-1}(c_{k,k}^T+ c_{k,k}^T\frac{h(Cc_n,z_0)+K-1}{h(Cc_k,z_0)+K-1}).
	\end{align}
	Similarly, by assuming $k$th entry of $\frac{\partial \mathcal{L}(x)}{\partial z_{m}}$ equals zero at $z_{1:K}=Cc_{1:K}$ with $m\neq n$ and $m\neq k$ we have
	\begin{align}
	\label{eqn:conflict_2}
	z_{0,k}^T = \frac{\|z_0\|}{K-1}(c_{k,k}^T+ c_{k,k}^T\frac{h(Cc_m,z_0)+K-1}{h(Cc_k,z_0)+K-1}).
	\end{align}
	As \Cref{eqn:conflict_1} and \Cref{eqn:conflict_2} are equal, we have $h(Cc_n,z_0)=h(Cc_m,z_0)$. Since this should be hold by assuming any entry of any partial derivative equals zero, by induction we have 
	\begin{align}
	\label{eqn:z_0}
	h(Cc_k,z_0)=r, \quad z_{0,k}^T/\|z_0\| = \frac{2}{K-1} \quad \forall\, k\in\{1,\cdots,K\}.
	\end{align}
	By plugging in $h(Cc_k,z_0)=r$ and $z_{1:K}=Cc_{1:K}$ to $\frac{\partial \mathcal{L}(x)}{\partial z_{0}}$ we have
	\begin{align}
	\label{eqn:pd_z0}
	\frac{\partial \mathcal{L}(x)}{\partial z_{0}} \at[\big]{h(Cc_k,z_0)=r,z_{1:K}=Cc_{1:K}} =\frac{1-K}{r+K-1}\sum_{k=1}^K c_k^T\frac{I- z_0z_0^T/\|z_{0}\|^2}{\|z_{0}\|}
	\end{align}
	\Cref{eqn:pd_z0} equals zero, if and only if every entry in the resulting vector equals zero. By assuming the $k$th ($k\in\{1,\cdots,K\}$) entry of $\frac{\partial \mathcal{L}(x)}{\partial z_{0}}$ equals zero at \Cref{eqn:z_0} and $z_{1:K}=Cc_{1:K}$, we have  
	\begin{align}
	1-K(\frac{2}{K-1})^2=0\,,
	\end{align}
	which leads to a non-integral value of $K$. Since $K$ is the number of transformations and is defined as an integer, we have \Cref{eqn:pd_z0} is not zero.  
	Therefore, $\nabla \mathcal{L}(x) = [\frac{\partial \mathcal{L}(x)}{\partial z_0},\cdots,\frac{\mathcal{L}(x)}{\partial z_K}]^T \neq 0$ at $z_{1:K}=Cc_{1:K}$.
\end{proof}

\begin{prt}
	The `identity' edge-case $T_k(x) = x$ does not minimize $\mathcal{L}$ (\Cref{eqn:contrastive_loss}) for adequate encoder $f_{\phi}$.
\end{prt}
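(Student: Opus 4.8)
The plan is a direct value comparison: evaluate $\mathcal{L}$ at the identity edge-case to get a number that does not depend on the encoder, then exhibit an admissible pair (transformations, encoder) whose loss is strictly smaller; together these show the identity edge-case cannot be a global minimizer, regardless of which ``adequate'' encoder accompanies it. For the first step, set $T_1 = \cdots = T_K = \mathbb{I}$. Then $x_k = x$ for every $k$, so $f_\phi(T_k(x)) = f_\phi(x)$ and $\mathrm{sim}(f_\phi(T_k(x)), f_\phi(T_l(x))) = 1$ for all $k,l$ (and likewise against the original sample). Hence $h(x_k,x) = h(x_k,x_l) = e^{1/\tau}$, every fraction inside the sum in \Cref{eqn:contrastive_loss} equals $\tfrac{1}{K}$, and $\mathcal{L} = \mathbb{E}_{x}\big[-\sum_{k=1}^{K}\log\tfrac{1}{K}\big] = K\log K$, independently of $f_\phi$.

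For the second step (taking $K\ge 2$, the relevant regime), I would construct a competitor with loss strictly below $K\log K$. Work in $\mathcal{Z}=\mathbb{R}^{K}$ and target, for each sample, an embedding configuration in which the normalized embeddings $\hat z_1,\dots,\hat z_K$ of the transformed samples are mutually orthogonal while the normalized embedding $\hat z_0$ of the original sample satisfies $\langle\hat z_k,\hat z_0\rangle = a > 0$ for all $k$ (concretely $\hat z_k = e_k$, $\hat z_0 = \tfrac{1}{\sqrt K}\mathbf{1}$, so $a = 1/\sqrt K$). Then $h(x_k,x) = e^{a/\tau}$ and $h(x_k,x_l) = e^{0} = 1$ for $l\neq k$, so the per-sample loss is $K\log\big(1+(K-1)e^{-a/\tau}\big)$, which is strictly less than $K\log K$ because $e^{-a/\tau}<1$. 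To see that such a configuration is realizable, recall that $\mathcal{D}$ is a finite sample: choosing the $T_k$ as small distinct perturbations $T_k(x)=x+\epsilon_k v_k$ with linearly independent $v_k$ makes the finitely many points $\{x^{(i)}\}\cup\{T_k(x^{(i)})\}$ pairwise distinct, after which a sufficiently expressive encoder can send each of them to the prescribed embedding; alternatively one may use constant transformations mapping to $K$ distinct auxiliary points, as with the constant edge-case of Part 1.

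Combining the two steps, there is an admissible pair with $\mathcal{L} < K\log K$ while every identity-transformation solution has $\mathcal{L} = K\log K$, so the identity edge-case does not minimize $\mathcal{L}$. The part that needs care — and which I expect to be the main obstacle — is the realizability step: one must make sure the target embedding configuration is actually attainable by admissible transformations together with an encoder on the training distribution, which is why I would route it through the finiteness of $\mathcal{D}$ rather than through expressivity of a particular network class. It is worth remarking that, unlike the constant case in Part 1, a first-order argument does not work here: at the identity edge-case all embeddings coincide, and since cosine similarity is stationary on the diagonal $\hat z_k=\hat z_l$ one checks $\nabla_{z}\mathcal{L}=0$ there, so the value comparison is genuinely necessary.
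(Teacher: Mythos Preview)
Your proof is correct and follows the same overall strategy as the paper: compute $\mathcal{L}=K\log K$ at the identity edge-case, then exhibit a configuration of embeddings achieving a strictly smaller value. The paper does this only for $K=2$, taking $z_1\perp z$, $z_2\perp z$, $z_1=-z_2$ (with $\tau=1$) to get $\mathcal{L}(x)=2\log(1+e^{-1})\approx 0.627<2\log 2$, and otherwise appeals informally to ``adequate'' encoder flexibility and to the empirical histogram in \Cref{fig:hist_trained}. Your construction with $\hat z_k=e_k$ and $\hat z_0=K^{-1/2}\mathbf{1}$ handles all $K\ge 2$ uniformly and yields the clean bound $K\log\bigl(1+(K-1)e^{-1/(\tau\sqrt{K})}\bigr)<K\log K$; you are also more explicit than the paper about realizability, routing it through finiteness of $\mathcal{D}$ rather than leaving ``adequate'' undefined. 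Your closing remark that the identity configuration is a stationary point of $\mathcal{L}$ (so a gradient argument in the style of Part~1 would not suffice) is a nice observation the paper does not make.
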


\begin{proof}
	Plugging $T_k(x) = x$ for all $k$ into \Cref{eqn:contrastive_loss}
	\begin{align}
	\mathcal{L}=&\mathbb{E}_{x \sim \mathcal{D}}[-\sum_{k=1}^K \log \frac{h(x, x)}{K h(x, x)}] = K \log K.
	\end{align}
	This is $K$ times the cross-entropy of the uniform distribution, meaning that using the identity transformation is equivalent to random guessing for the task of the \gls{dcl}, which is to predict which sample is the original given a transformed view. 
	When $f_{\phi}$ is adequate (i.e. flexible enough) we can do better than random on $\mathcal{L}$. This can be seen in the anomaly scores in  \Cref{fig:hist_trained} which are much smaller than $K \log K$ after training (better than random).
	
	We can also construct examples which achieve better than random performance. For example, for $K=2$, taking $z_1 \perp z$, $z_2 \perp z$, and $z_1 =-z_2$, does better than random.  
	The loss values (with $\tau=1$) of the two cases are
	\begin{itemize}
		\item `Identity' edge-case: $z_1=z_2=z$, so $\mathcal{L}(x)=-2\log(0.5)=1.386$.
		\item Counterexample: 
		$z_1 \perp z$, $z_2 \perp z$, and $z_1 =-z_2$, so $\mathcal{L}(x)=-2\log(1/(1+\exp(-1)))=0.627$.
	\end{itemize}
	The counterexample achieves a lower loss value than the `identity' edge-case. So the `identity' edge-case is not the minimum of $\mathcal{L}(x)$.
\end{proof}

\section{Implementation details}
\label{sec:appendix_implement}
\subsection{Implementations of \gls{ntl} on time series datasets}
The networks in the neural transformations used in all experiments consist of one convolutional layer on the bottom, a stack of three residual blocks of 1d convolutional layers with instance normalization layers and ReLU activations, as well as one convolutional layer on the top. All convolutional layers are with the kernel size of 3, and the stride of 1. All bias terms are fixed as zero, and the learnable affine parameters of the instance normalization layers are frozen. The dimension of the residual blocks is the data dimension. The convolutional layer on the top has an output dimension as the data dimension. For the multiplicative parameterization, a sigmoid activation is added to the end.

The encoder used in all experiments consists of residual blocks of 1d convolutional layers with ReLU activations, as well as one 1d convolutional layer on the top of all residual blocks.
The detailed network structure (from bottom to top) in each time series dataset is:
\begin{itemize}[leftmargin=*]
	\item \gls{sad}: (i) one residual block with the kernel size of 3, the stride of 1, and the output dimension of 32. (ii) four residual blocks with the kernel size of 3, the stride of 2, and the output dimensions of 32, 64, 128, 256. (iii) one 1d convolutional layer with the kernel size of 6, the stride of 1, and the output dimension of 32.
	
	\item \gls{natops}: (i) one residual block with the kernel size of 3, the stride of 1, and the output dimension of 32. (ii) four residual blocks with the kernel size of 3, with the stride of 2, and the output dimensions of 32, 64, 128, 256. (iii) one 1d convolutional layer with the kernel size of 4, the stride of 1, and the output dimension of 64.
	
	\item \gls{ct}: (i) one residual block with the kernel size of 3, the stride of 1, and the output dimension of 32. (ii) six residual blocks with the kernel size of 3, the stride of 2, and the output dimensions of 32, 64, 128, 256, 512, 1024. (iii) one 1d convolutional layer with the kernel size of 3, the stride of 1, and the output dimension of 64.
	
	\item \gls{epilepsy}: (i) one residual block with the kernel size of 3, the stride of 1, and the output dimension of 32. (ii) six residual blocks with the kernel size of 3, the stride of 2, and the output dimensions of 32, 64, 128, 256, 512, 1024. (iii) one 1d convolutional layer with the kernel size of 4, the stride of 1, and the output dimension of 128.
	
	\item \gls{rs}: (i) one residual block with the kernel size of 3, the stride of 1, and the output dimension of 32. (ii) three residual blocks with the kernel size of 3, the stride of 2, and the output dimensions of 32, 64, 128. (iii) one 1d convolutional layer with the kernel size of 4, the stride of 1, and the output dimension of 64.
\end{itemize}

\subsection{Implementations of baselines}
\begin{itemize}[leftmargin=*]
	\item Traditional Anomaly Detection Baselines. \gls{ocsvm}, \gls{if}, and \gls{lof} are taken from scikit-learn library with default parameters.
	\item Deep Anomaly Detection Baselines. The implementations of \gls{svdd}, \gls{drocc}, and \gls{dagmm} are adopted from the published codes with a similar encoder as \gls{ntl}. \gls{dagmm} has a hyperparameter of the number of mixture components. We consider the number of components between 4 and 12 and select the best performing one. 
	\item Self-supervised Anomaly Detection Baselines. The implementation of \gls{goad} is taken from the published code. The results of \gls{goad} depend on the choice of the output dimension $r$ of affine transformations. We consider the reduced dimension $r\in \{2^2,2^3,...,2^6\}$, and select the best performing one. We craft specific time series transformations for the designed classification-based baseline. The hand-crafted transformations are the compositions of flipping along the time axis (true/false), flipping along the channel axis (true/false), and shifting along the time axis by 0.25 of its time length (forward/backward/none). By taking all possible compositions, we obtain a total of $2*2*3=12$ transformations.
	\item Anomaly Detection Baselines for Time Series. The \gls{rnn} is parameterized by two layers of recurrent neural networks, e.g. GRU, and a stack of two linear layers with ReLU activation on the top of it which outputs the mean and variance at each time step. The implementation of \gls{lstm} is taken from the web.
\end{itemize}

\section{Tabular datasets}
\label{sec:appendix_tab_data}
The four used tabular datasets are:
\begin{itemize}[leftmargin=*]
	\item Arrhythmia: A cardiology dataset from the UCI repository contains 274 continuous attributes and 5 categorical attributes. Following the data preparation of previous works, only 274 continuous attributes are considered. The abnormal classes include 3, 4, 5, 7, 8, 9, 14, and 15. The rest classes are considered as normal.
	\item Thyroid: A medical dataset from the UCI repository contains attributes related to hyperthyroid diagnosis. Following the data preparation of previous works, only 6 continuous attributes are considered. The hyperfunction class is treated as abnormal, and the rest 2 classes are considered as normal.
	\item KDDCUP: The KDDCUP99 10 percent dataset from the UCI repository contains 34 continuous attributes and 7 categorical attributes. Following the data preparation of previous works, 7 categorical attributes are represented by one-hot vectors. Eventually, the data has 120 dimensions. The attack samples are considered as normal, and the non-attack samples are considered as abnormal.
	\item KDDCUP-Rev: It is derived from the KDDCUP99 10 percent dataset. The non-attack samples are considered as normal, and attack samples are considered as abnormal. Following the data preparation of previous works, attack data is sub-sampled to consist of $25\%$ of the number of non-attack samples.
\end{itemize}



\section{Additional qualitative results}
\label{sec:appendix_qualitative}
\subsection{Results for time series data}

In \Cref{fig:sad_Ts}, we show the learned transformations (parameterized as $T(x) = M(x)\odot x$ and $K=4$) on spoken Arabic digits. The learned transformations given one normal example are shown in the first row. The learned transformations given one example of each abnormal class are shown in the following rows.
\begin{figure}[ht]
	\captionsetup[subfigure]{labelformat=empty}
	\centering
	\begin{tabular}{@{}c@{}c@{}c@{}c@{}c@{}}
	$x$&$T_1(x)$&$T_2(x)$&$T_3(x)$&$T_4(x)$\\
			\begin{subfigure}[b]{0.18\linewidth}
	\includegraphics[width=\linewidth]{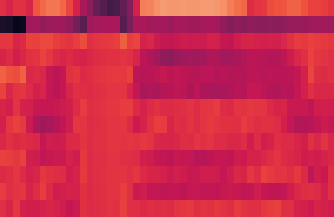}
	\end{subfigure}&
	\begin{subfigure}[b]{0.18\linewidth}
	\includegraphics[width=\linewidth]{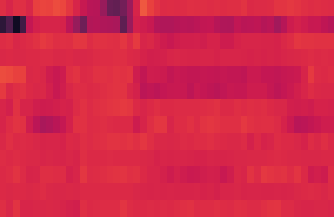}
	\end{subfigure}&
	\begin{subfigure}[b]{0.18\linewidth}
	\includegraphics[width=\linewidth]{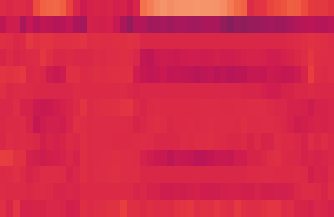}
	\end{subfigure}&
	\begin{subfigure}[b]{0.18\linewidth}
	\includegraphics[width=\linewidth]{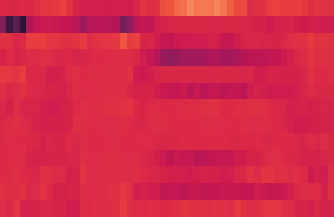}
	\end{subfigure}&
	\begin{subfigure}[b]{0.18\linewidth}
	\includegraphics[width=\linewidth]{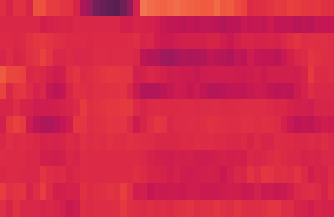}
	\end{subfigure}\\
	\hline
	\begin{subfigure}[b]{0.18\linewidth}
	\includegraphics[width=\linewidth]{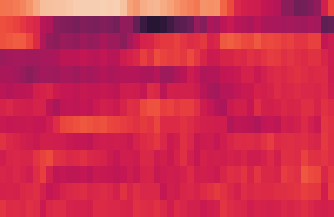}
	\end{subfigure}&
	\begin{subfigure}[b]{0.18\linewidth}
	\includegraphics[width=\linewidth]{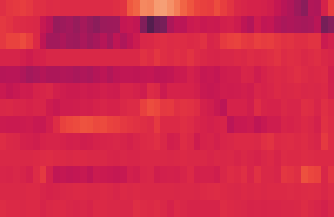}
	\end{subfigure}&
	\begin{subfigure}[b]{0.18\linewidth}
	\includegraphics[width=\linewidth]{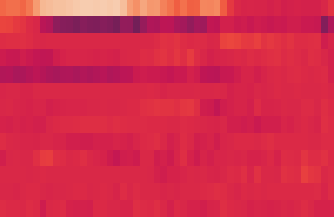}
	\end{subfigure}&
	\begin{subfigure}[b]{0.18\linewidth}
	\includegraphics[width=\linewidth]{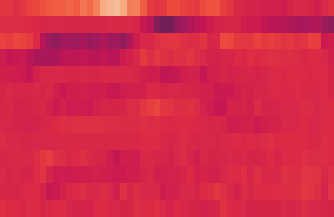}
	\end{subfigure}&
	\begin{subfigure}[b]{0.18\linewidth}
	\includegraphics[width=\linewidth]{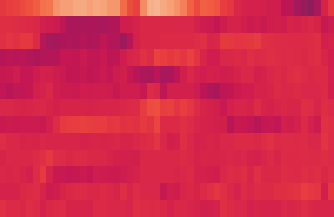}
	\end{subfigure}
\\
	\begin{subfigure}[b]{0.18\linewidth}
	\includegraphics[width=\linewidth]{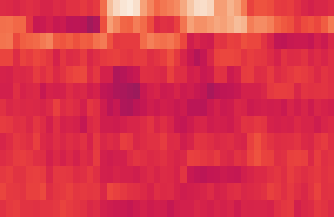}
	\end{subfigure}&
	\begin{subfigure}[b]{0.18\linewidth}
	\includegraphics[width=\linewidth]{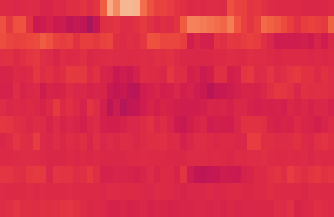}
	\end{subfigure}&
	\begin{subfigure}[b]{0.18\linewidth}
	\includegraphics[width=\linewidth]{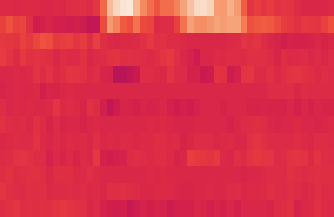}
	\end{subfigure}&
	\begin{subfigure}[b]{0.18\linewidth}
	\includegraphics[width=\linewidth]{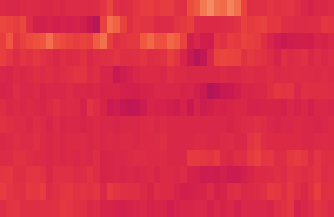}
	\end{subfigure}&
	\begin{subfigure}[b]{0.18\linewidth}
	\includegraphics[width=\linewidth]{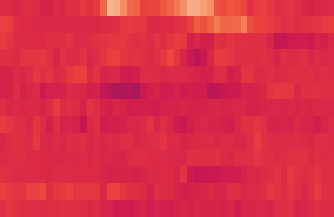}
	\end{subfigure}\\
	\begin{subfigure}[b]{0.18\linewidth}
	\includegraphics[width=\linewidth]{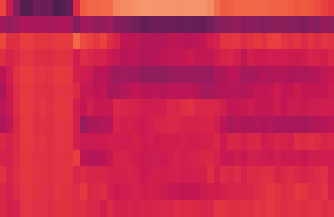}
	\end{subfigure}&
	\begin{subfigure}[b]{0.18\linewidth}
	\includegraphics[width=\linewidth]{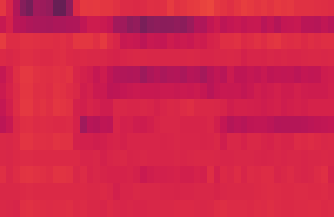}
	\end{subfigure}&
	\begin{subfigure}[b]{0.18\linewidth}
	\includegraphics[width=\linewidth]{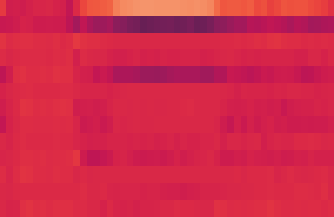}
	\end{subfigure}&
	\begin{subfigure}[b]{0.18\linewidth}
	\includegraphics[width=\linewidth]{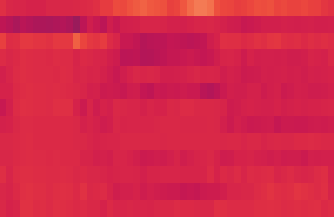}
	\end{subfigure}&
	\begin{subfigure}[b]{0.18\linewidth}
	\includegraphics[width=\linewidth]{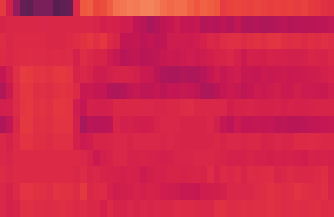}
	\end{subfigure}\\
	\begin{subfigure}[b]{0.18\linewidth}
	\includegraphics[width=\linewidth]{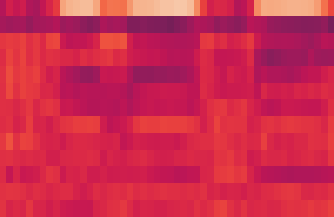}
	\end{subfigure}&
	\begin{subfigure}[b]{0.18\linewidth}
	\includegraphics[width=\linewidth]{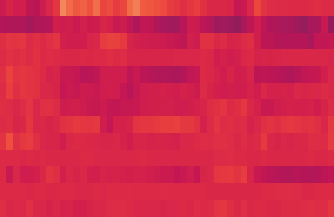}
	\end{subfigure}&
	\begin{subfigure}[b]{0.18\linewidth}
	\includegraphics[width=\linewidth]{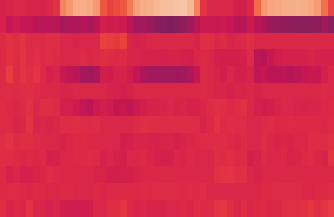}
	\end{subfigure}&
	\begin{subfigure}[b]{0.18\linewidth}
	\includegraphics[width=\linewidth]{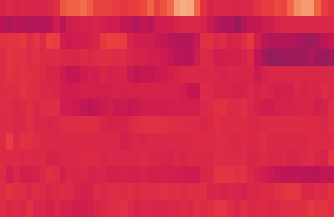}
	\end{subfigure}&
	\begin{subfigure}[b]{0.18\linewidth}
	\includegraphics[width=\linewidth]{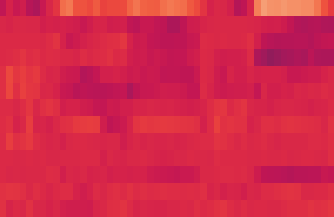}
	\end{subfigure}\\	
	\begin{subfigure}[b]{0.18\linewidth}
	\includegraphics[width=\linewidth]{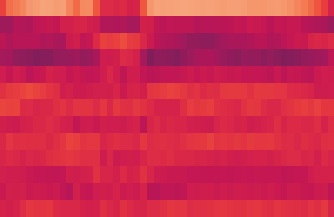}
	\end{subfigure}&
	\begin{subfigure}[b]{0.18\linewidth}
	\includegraphics[width=\linewidth]{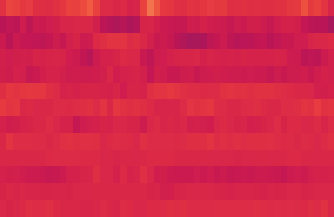}
	\end{subfigure}&
	\begin{subfigure}[b]{0.18\linewidth}
	\includegraphics[width=\linewidth]{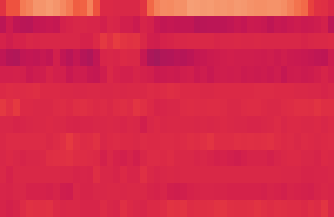}
	\end{subfigure}&
	\begin{subfigure}[b]{0.18\linewidth}
	\includegraphics[width=\linewidth]{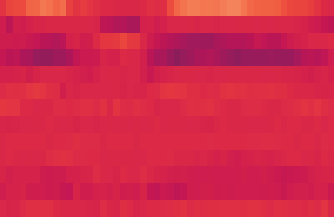}
	\end{subfigure}&
	\begin{subfigure}[b]{0.18\linewidth}
	\includegraphics[width=\linewidth]{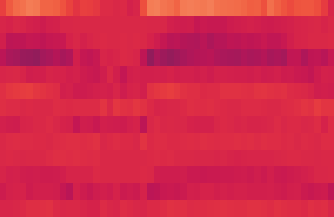}
	\end{subfigure}\\
		\begin{subfigure}[b]{0.18\linewidth}
	\includegraphics[width=\linewidth]{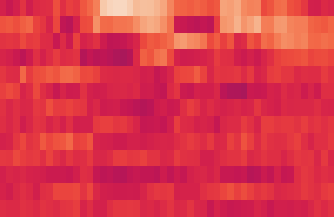}
	\end{subfigure}&
	\begin{subfigure}[b]{0.18\linewidth}
	\includegraphics[width=\linewidth]{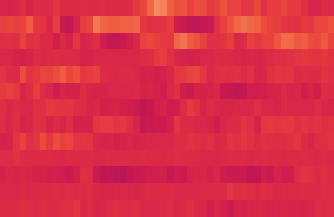}
	\end{subfigure}&
	\begin{subfigure}[b]{0.18\linewidth}
	\includegraphics[width=\linewidth]{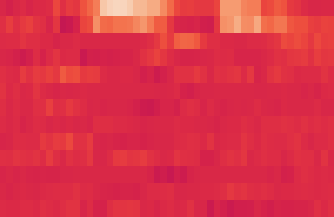}
	\end{subfigure}&
	\begin{subfigure}[b]{0.18\linewidth}
	\includegraphics[width=\linewidth]{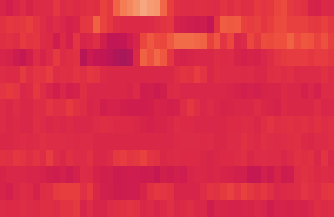}
	\end{subfigure}&
	\begin{subfigure}[b]{0.18\linewidth}
	\includegraphics[width=\linewidth]{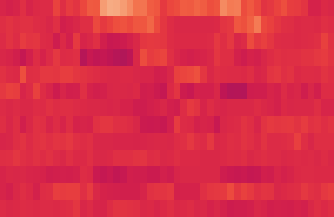}
	\end{subfigure}\\
			\begin{subfigure}[b]{0.18\linewidth}
	\includegraphics[width=\linewidth]{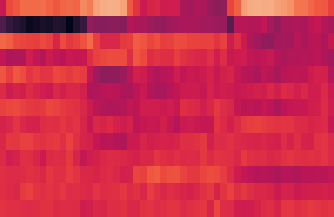}
	\end{subfigure}&
	\begin{subfigure}[b]{0.18\linewidth}
	\includegraphics[width=\linewidth]{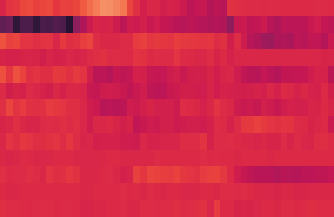}
	\end{subfigure}&
	\begin{subfigure}[b]{0.18\linewidth}
	\includegraphics[width=\linewidth]{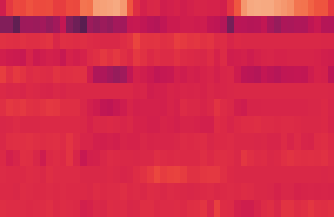}
	\end{subfigure}&
	\begin{subfigure}[b]{0.18\linewidth}
	\includegraphics[width=\linewidth]{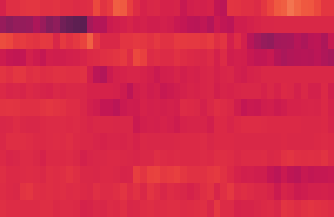}
	\end{subfigure}&
	\begin{subfigure}[b]{0.18\linewidth}
	\includegraphics[width=\linewidth]{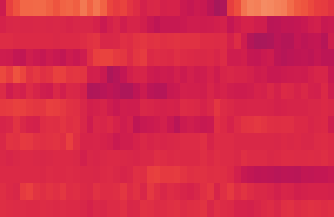}
	\end{subfigure}\\
			\begin{subfigure}[b]{0.18\linewidth}
	\includegraphics[width=\linewidth]{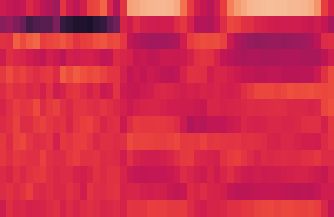}
	\end{subfigure}&
	\begin{subfigure}[b]{0.18\linewidth}
	\includegraphics[width=\linewidth]{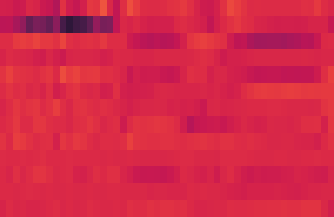}
	\end{subfigure}&
	\begin{subfigure}[b]{0.18\linewidth}
	\includegraphics[width=\linewidth]{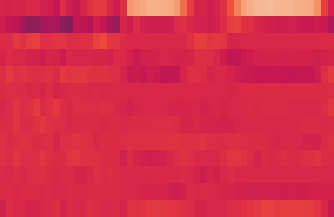}
	\end{subfigure}&
	\begin{subfigure}[b]{0.18\linewidth}
	\includegraphics[width=\linewidth]{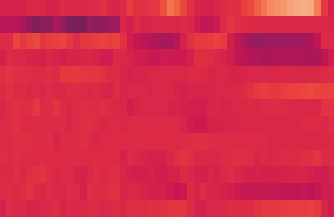}
	\end{subfigure}&
	\begin{subfigure}[b]{0.18\linewidth}
	\includegraphics[width=\linewidth]{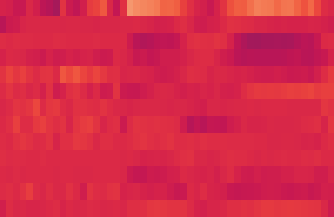}
	\end{subfigure}\\
			\begin{subfigure}[b]{0.18\linewidth}
	\includegraphics[width=\linewidth]{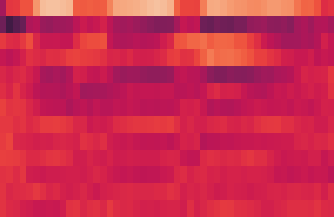}
	\end{subfigure}&
	\begin{subfigure}[b]{0.18\linewidth}
	\includegraphics[width=\linewidth]{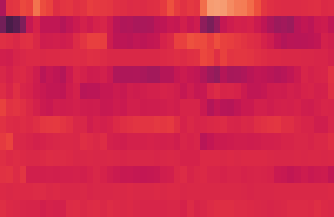}
	\end{subfigure}&
	\begin{subfigure}[b]{0.18\linewidth}
	\includegraphics[width=\linewidth]{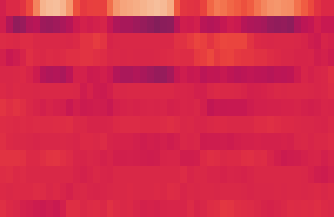}
	\end{subfigure}&
	\begin{subfigure}[b]{0.18\linewidth}
	\includegraphics[width=\linewidth]{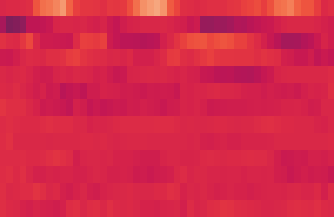}
	\end{subfigure}&
	\begin{subfigure}[b]{0.18\linewidth}
	\includegraphics[width=\linewidth]{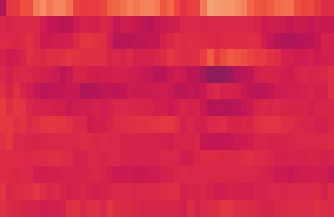}
	\end{subfigure}\\
	\end{tabular}
	\caption{The learned transformations ($T(x) = M(x)\odot x$) on \gls{sad}. The first row: the learned transformations of one given example of the normal class. The rest rows: the learned transformations of one given example of each abnormal class. }
		\label{fig:sad_Ts}
\end{figure}

In \Cref{fig:na_Ts}, we show the learned transformations (parameterized as $T(x) = M(x)$ and $K=4$) on NATOPS. The learned transformations given one normal example are shown in the first row. The learned transformations given one example of each abnormal class are shown in the following rows.
\begin{figure}[ht]
	\captionsetup[subfigure]{labelformat=empty}
	\centering
	\begin{tabular}{@{}c@{}c@{}c@{}c@{}c@{}}
	$x$&$T_1(x)$&$T_2(x)$&$T_3(x)$&$T_4(x)$\\
			\begin{subfigure}[b]{0.18\linewidth}
	\includegraphics[width=\linewidth]{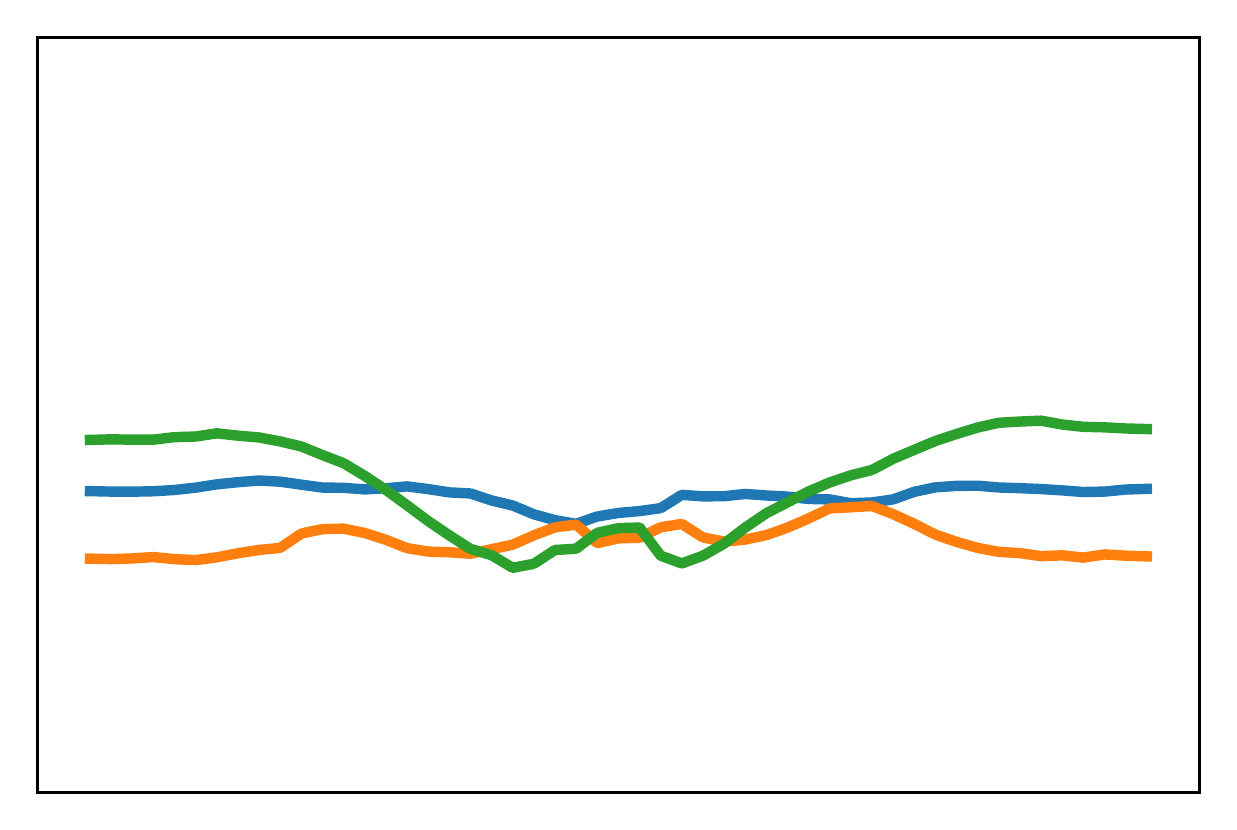}
	\end{subfigure}&
	\begin{subfigure}[b]{0.18\linewidth}
	\includegraphics[width=\linewidth]{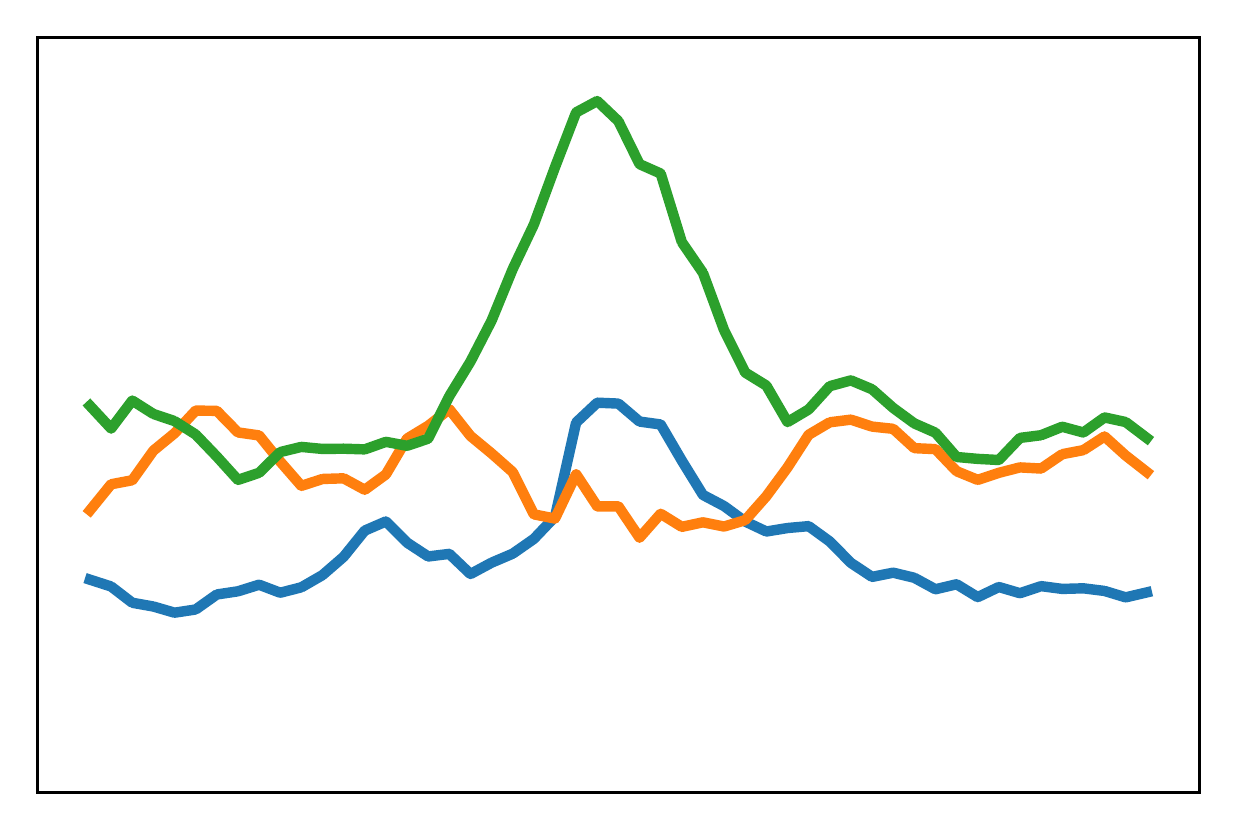}
	\end{subfigure}&
	\begin{subfigure}[b]{0.18\linewidth}
	\includegraphics[width=\linewidth]{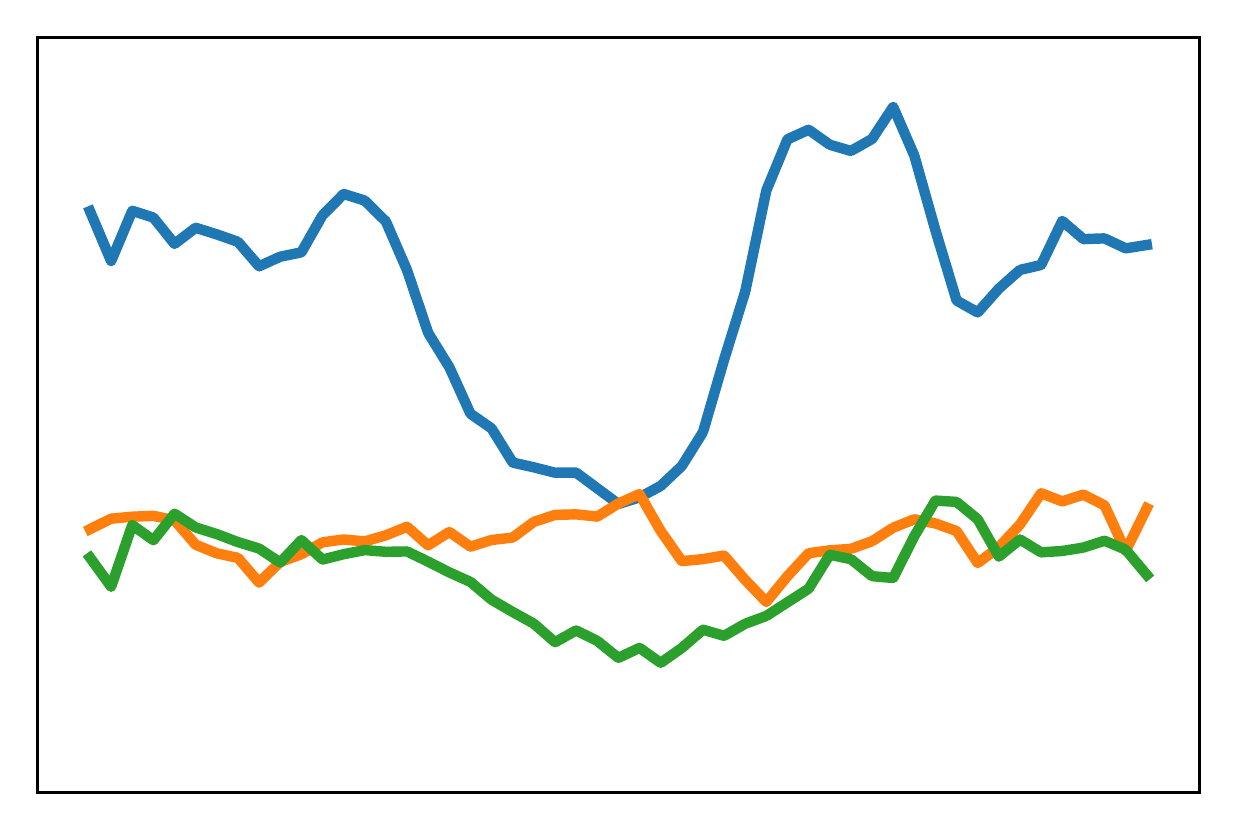}
	\end{subfigure}&
	\begin{subfigure}[b]{0.18\linewidth}
	\includegraphics[width=\linewidth]{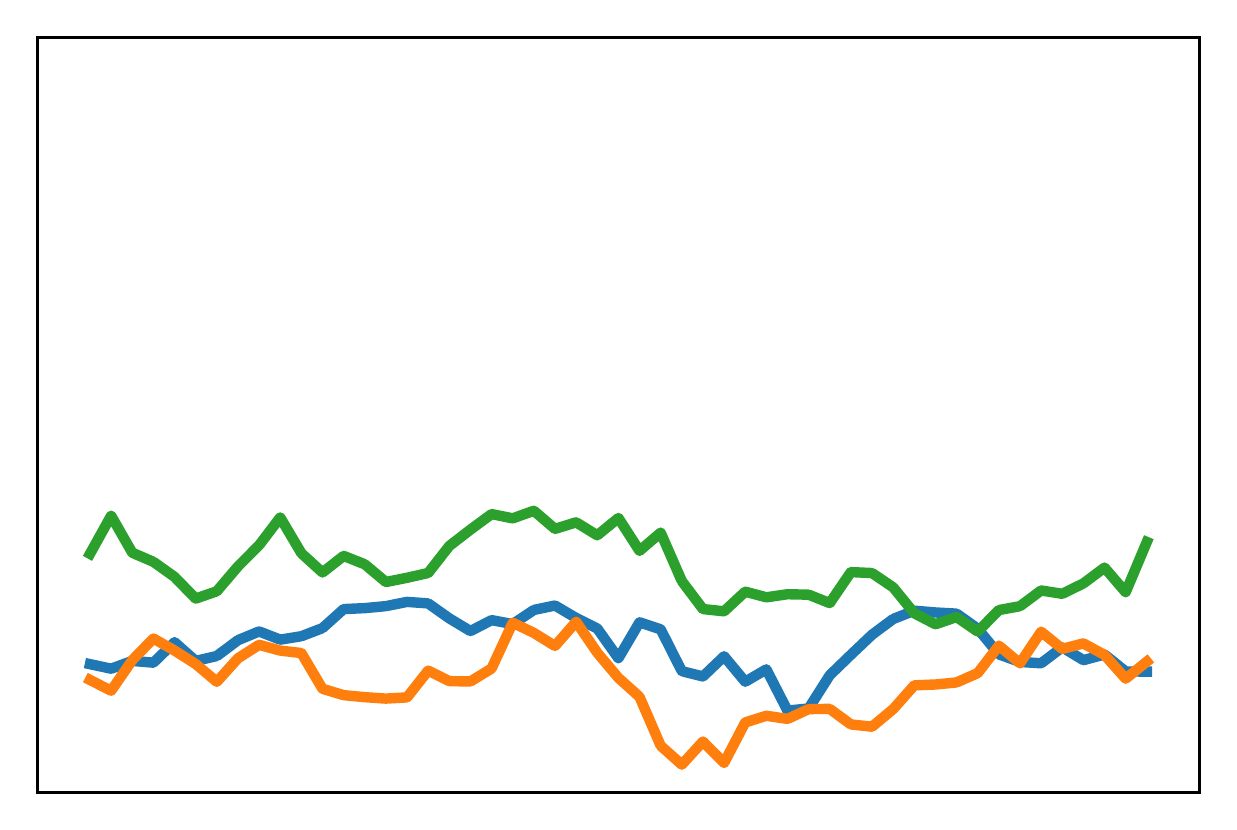}
	\end{subfigure}&
	\begin{subfigure}[b]{0.18\linewidth}
	\includegraphics[width=\linewidth]{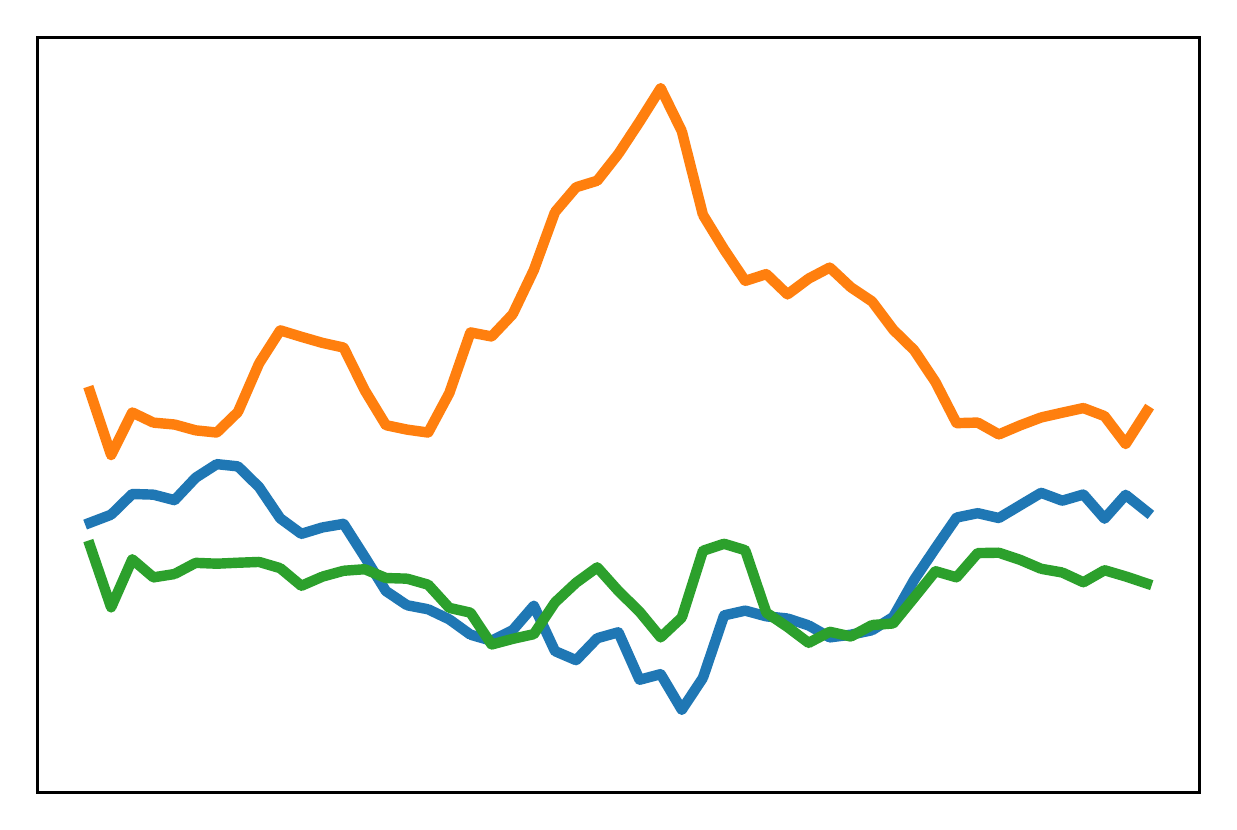}
	\end{subfigure}\\
	\hline
	\begin{subfigure}[b]{0.18\linewidth}
	\includegraphics[width=\linewidth]{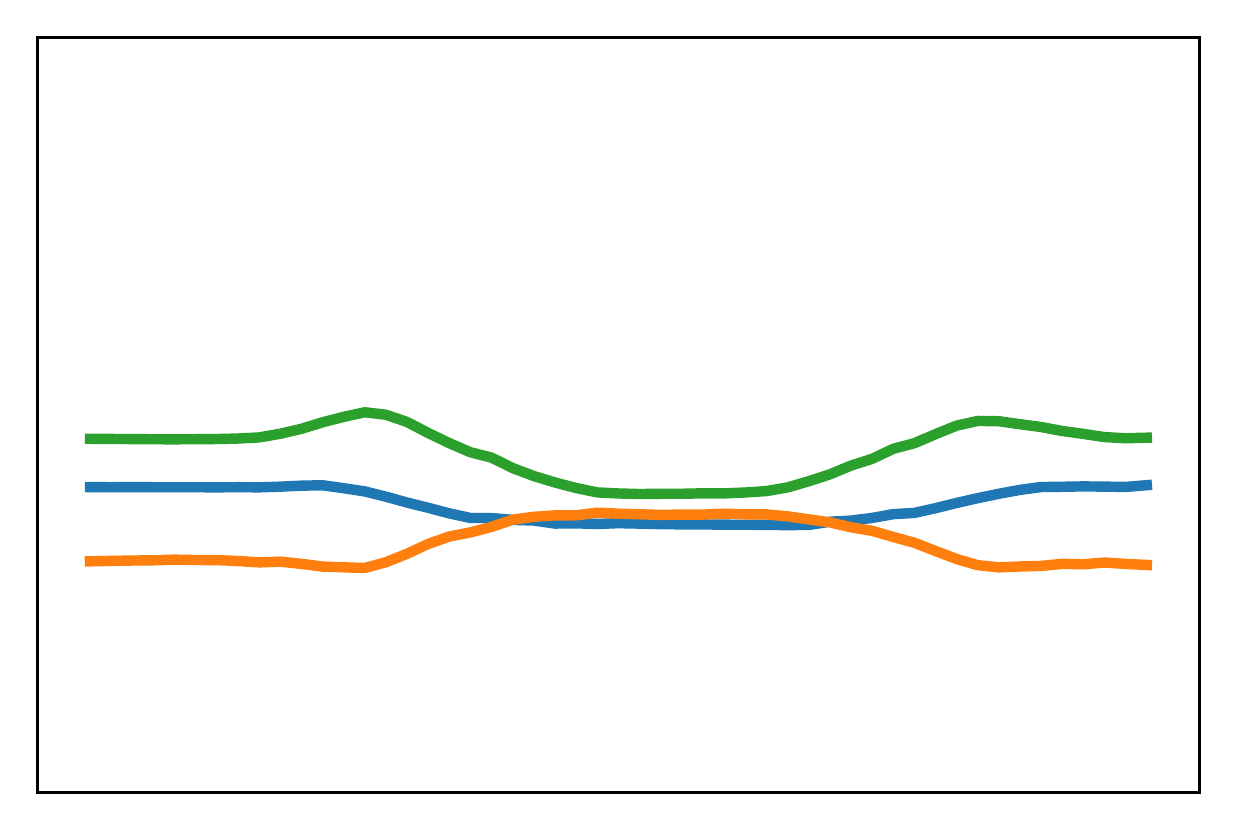}
	\end{subfigure}&
	\begin{subfigure}[b]{0.18\linewidth}
	\includegraphics[width=\linewidth]{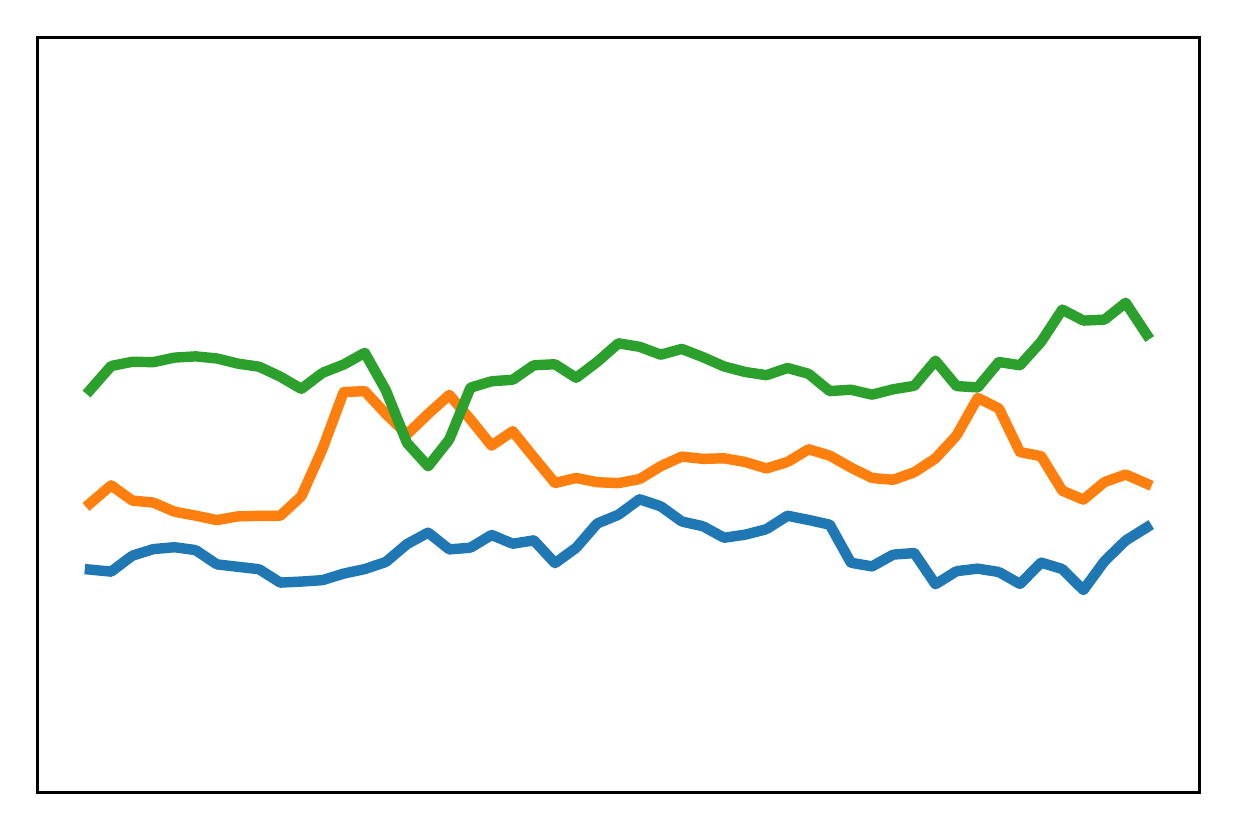}
	\end{subfigure}&
	\begin{subfigure}[b]{0.18\linewidth}
	\includegraphics[width=\linewidth]{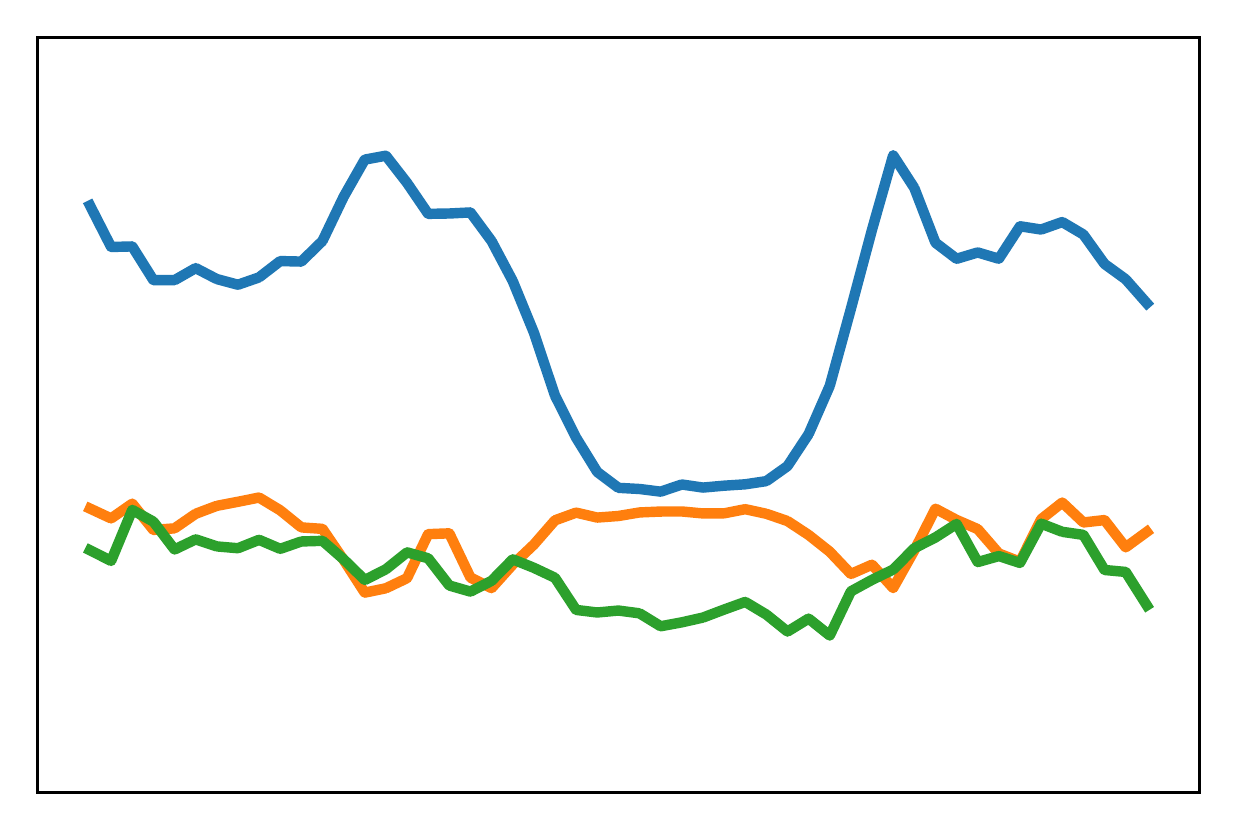}
	\end{subfigure}&
	\begin{subfigure}[b]{0.18\linewidth}
	\includegraphics[width=\linewidth]{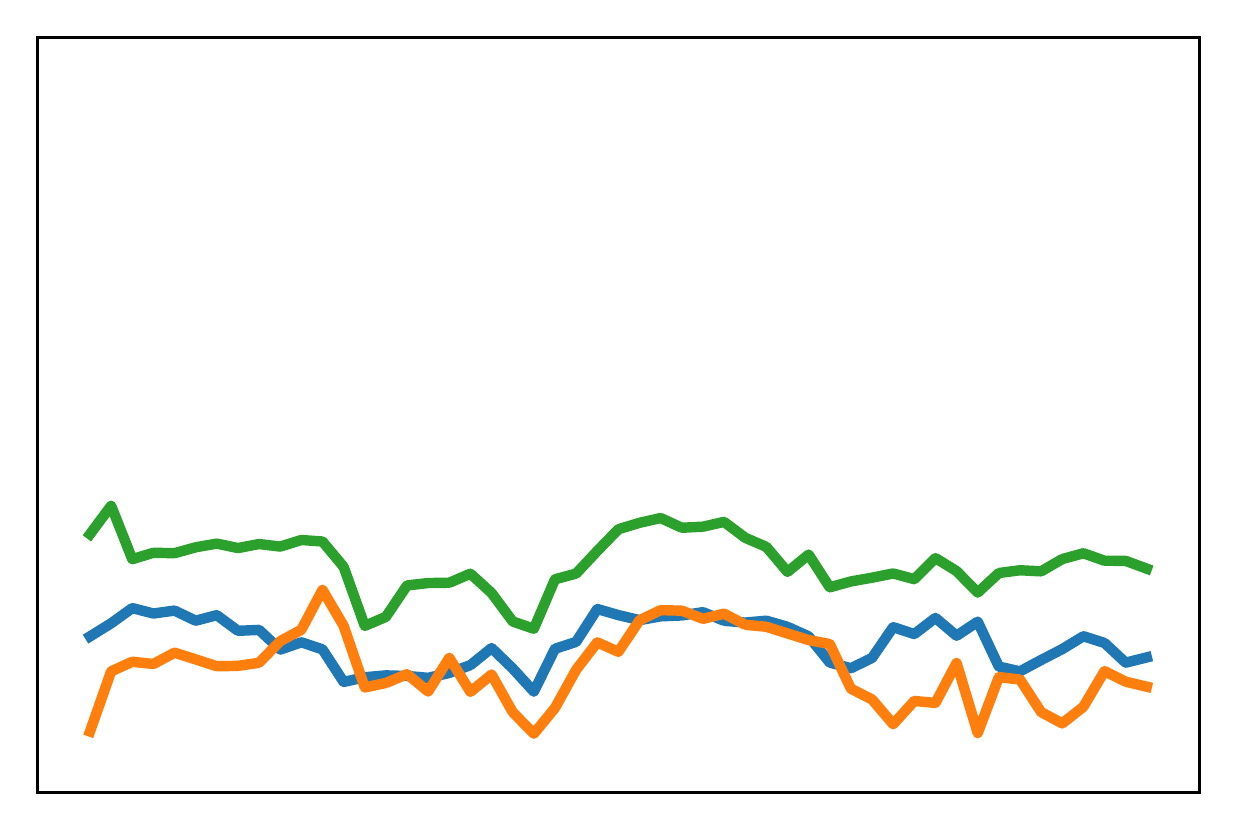}
	\end{subfigure}&
	\begin{subfigure}[b]{0.18\linewidth}
	\includegraphics[width=\linewidth]{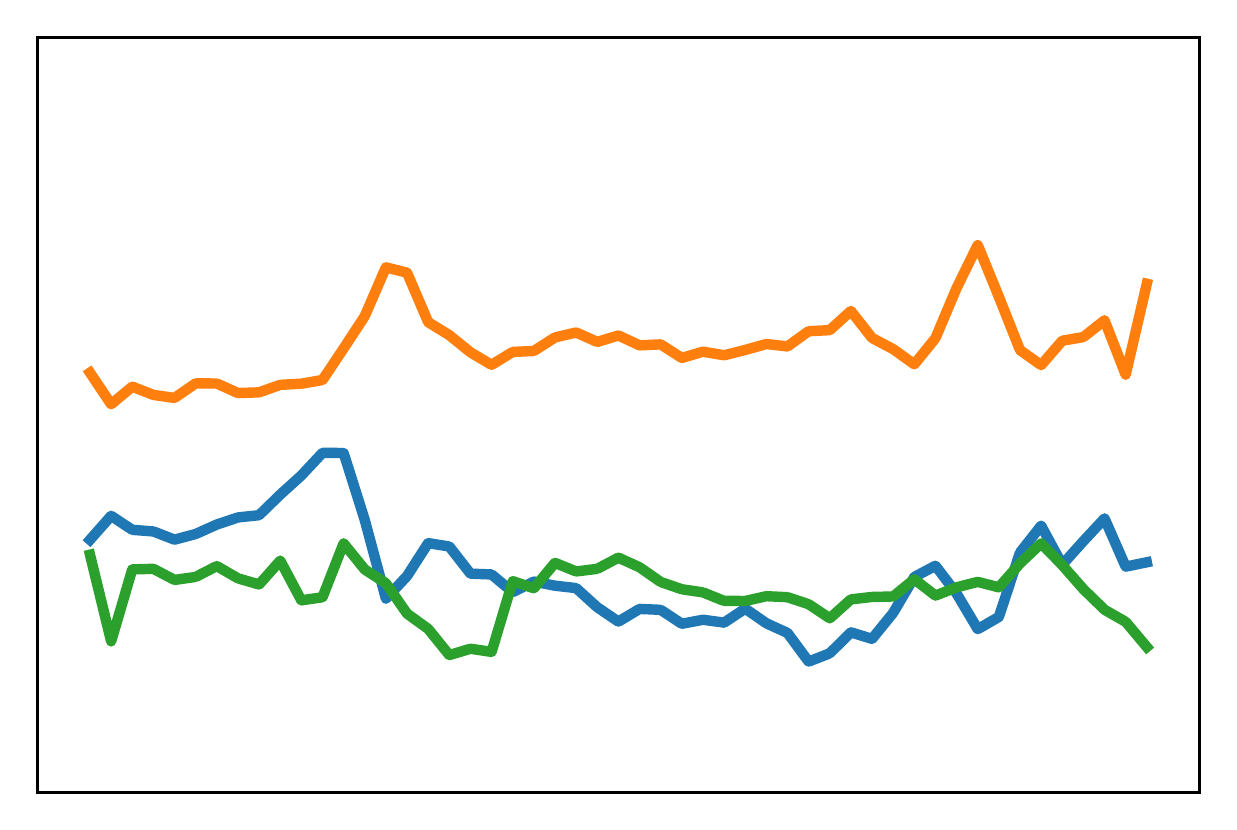}
	\end{subfigure}
\\
	\begin{subfigure}[b]{0.18\linewidth}
	\includegraphics[width=\linewidth]{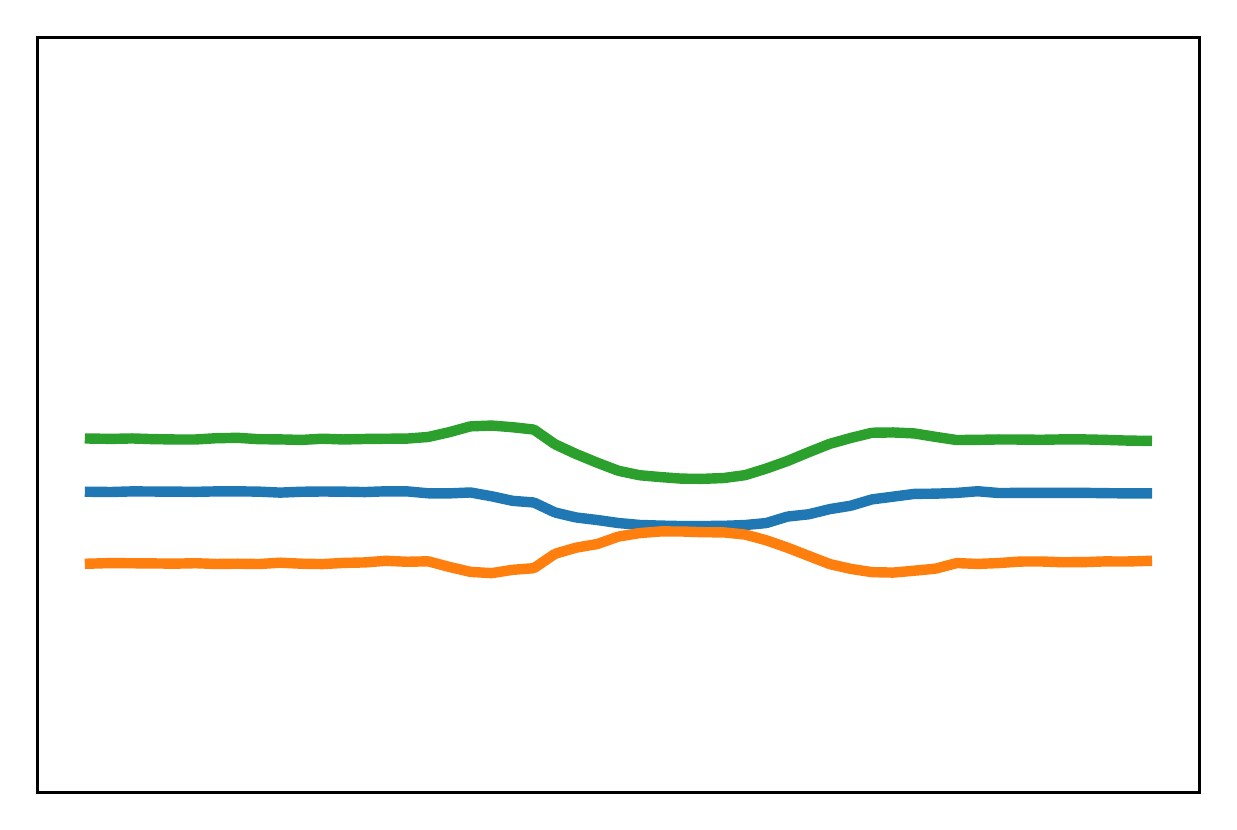}
	\end{subfigure}&
	\begin{subfigure}[b]{0.18\linewidth}
	\includegraphics[width=\linewidth]{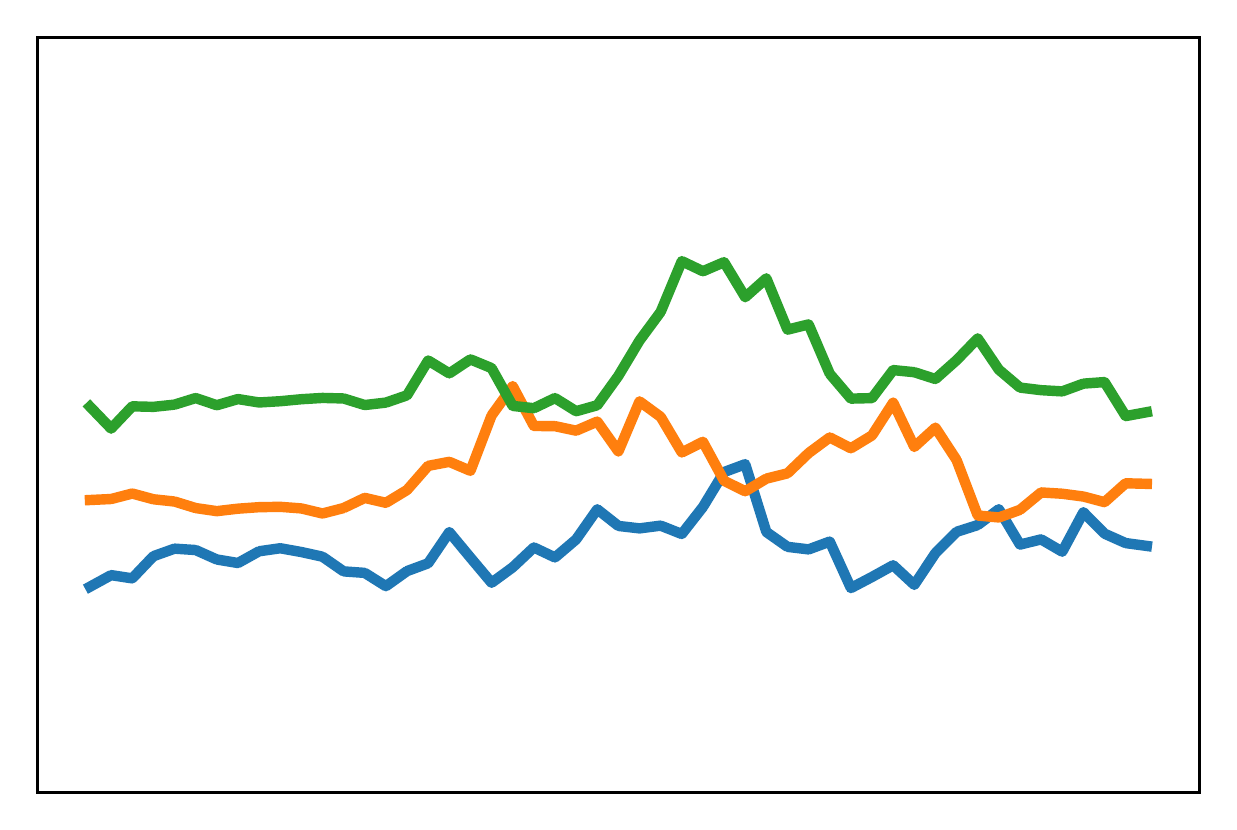}
	\end{subfigure}&
	\begin{subfigure}[b]{0.18\linewidth}
	\includegraphics[width=\linewidth]{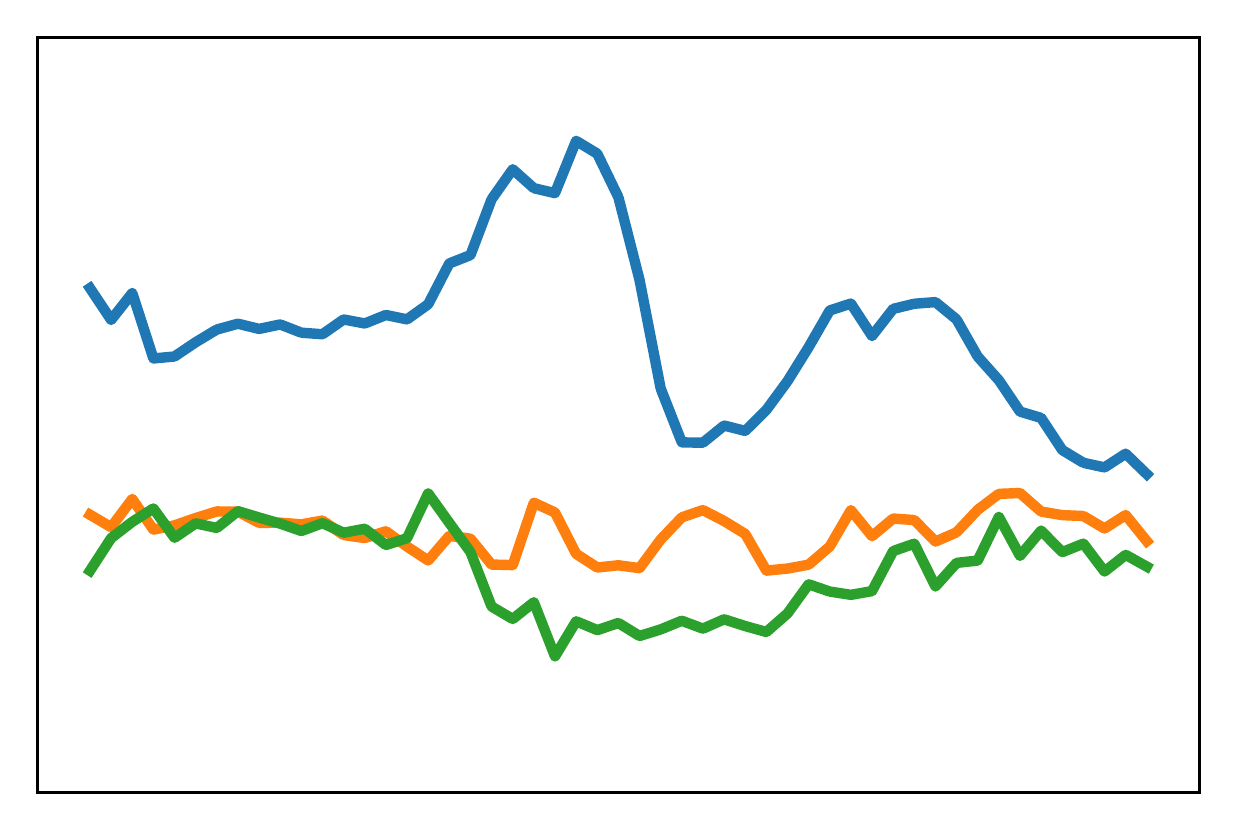}
	\end{subfigure}&
	\begin{subfigure}[b]{0.18\linewidth}
	\includegraphics[width=\linewidth]{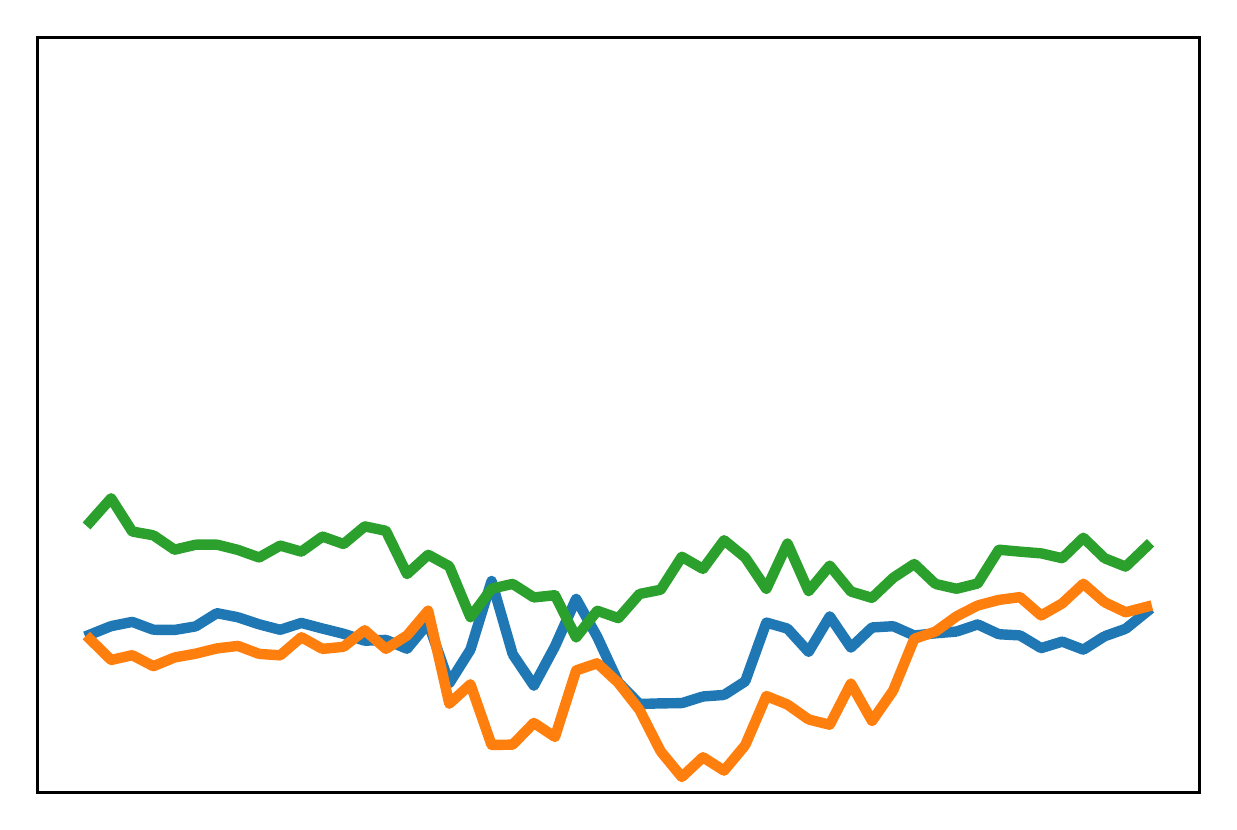}
	\end{subfigure}&
	\begin{subfigure}[b]{0.18\linewidth}
	\includegraphics[width=\linewidth]{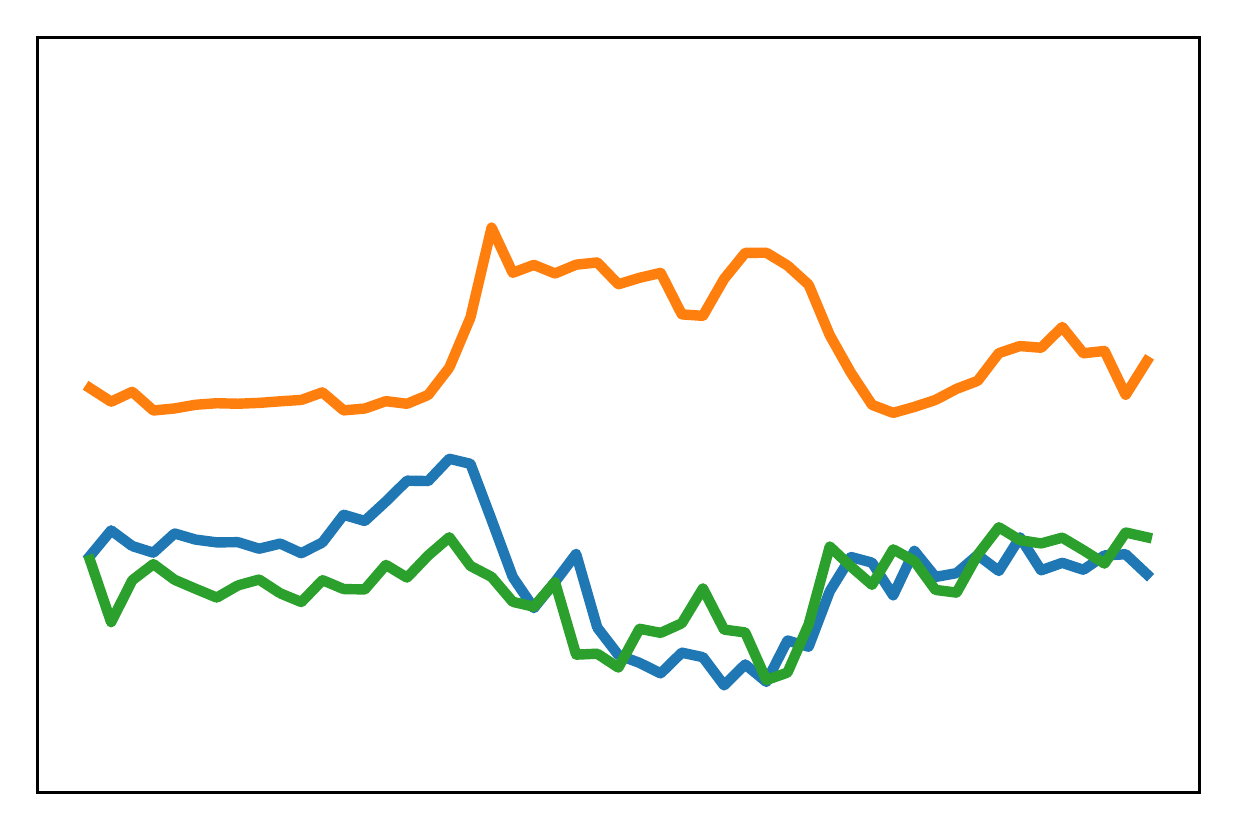}
	\end{subfigure}\\
	\begin{subfigure}[b]{0.18\linewidth}
	\includegraphics[width=\linewidth]{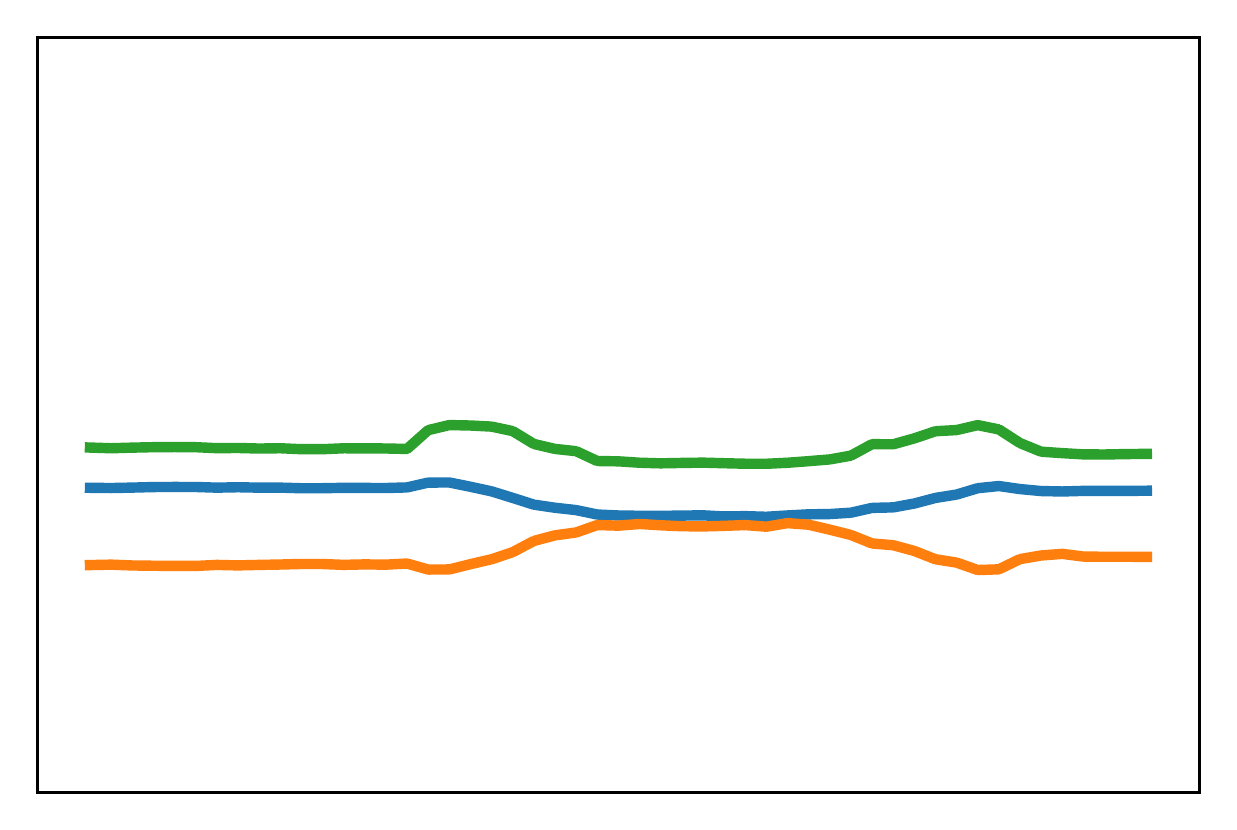}
	\end{subfigure}&
	\begin{subfigure}[b]{0.18\linewidth}
	\includegraphics[width=\linewidth]{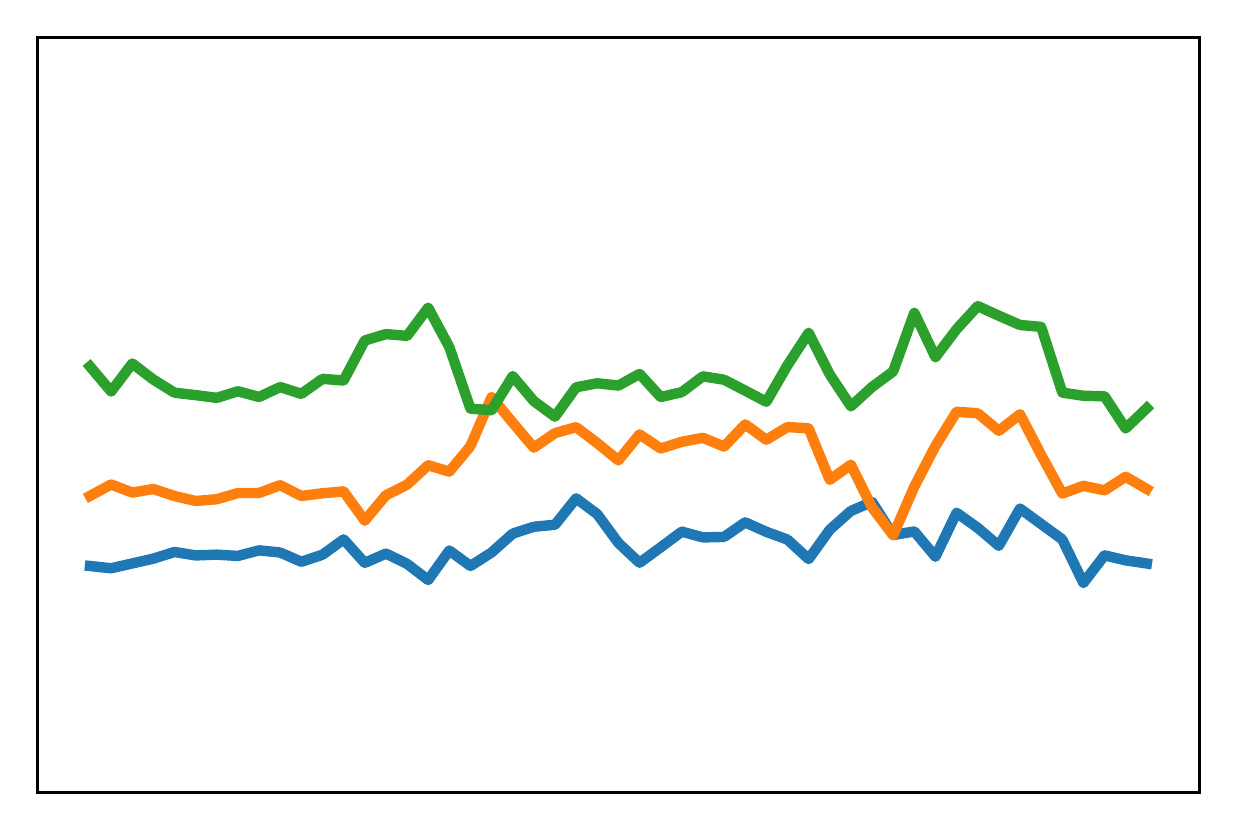}
	\end{subfigure}&
	\begin{subfigure}[b]{0.18\linewidth}
	\includegraphics[width=\linewidth]{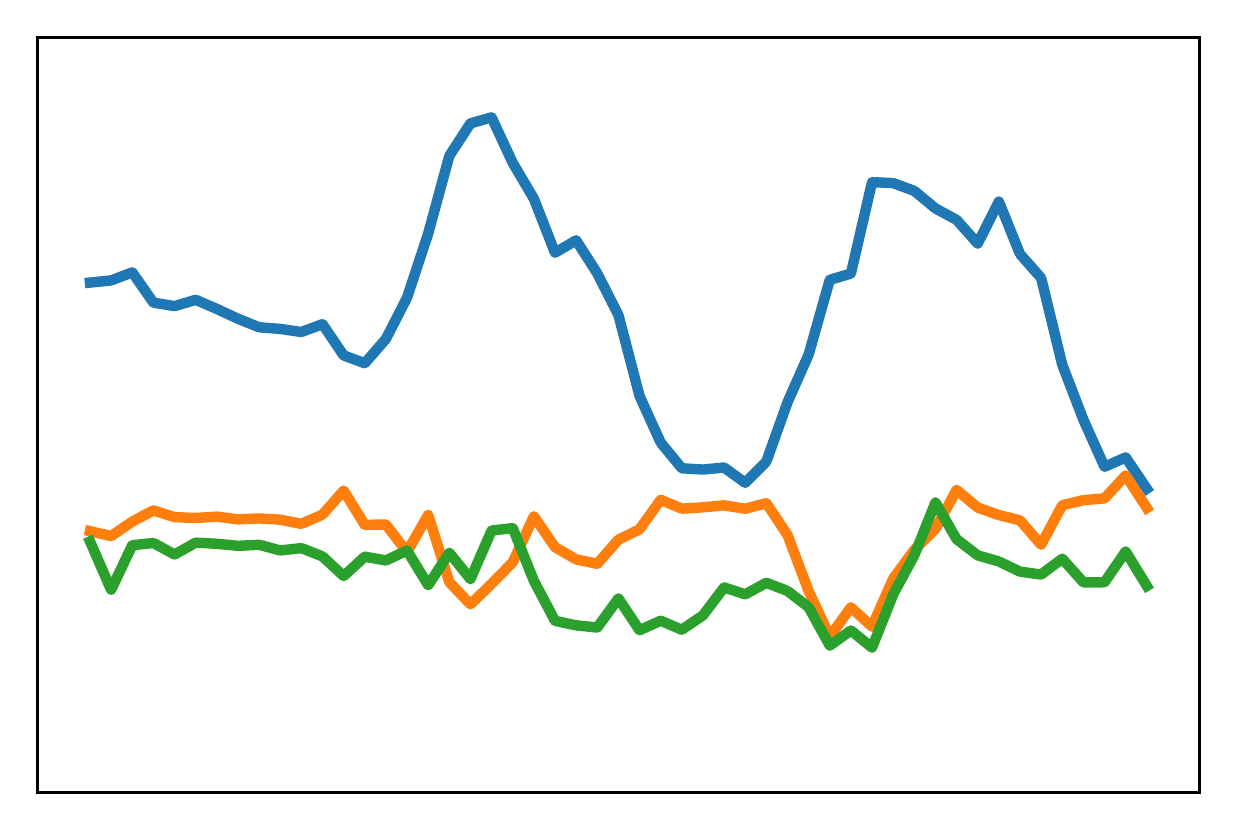}
	\end{subfigure}&
	\begin{subfigure}[b]{0.18\linewidth}
	\includegraphics[width=\linewidth]{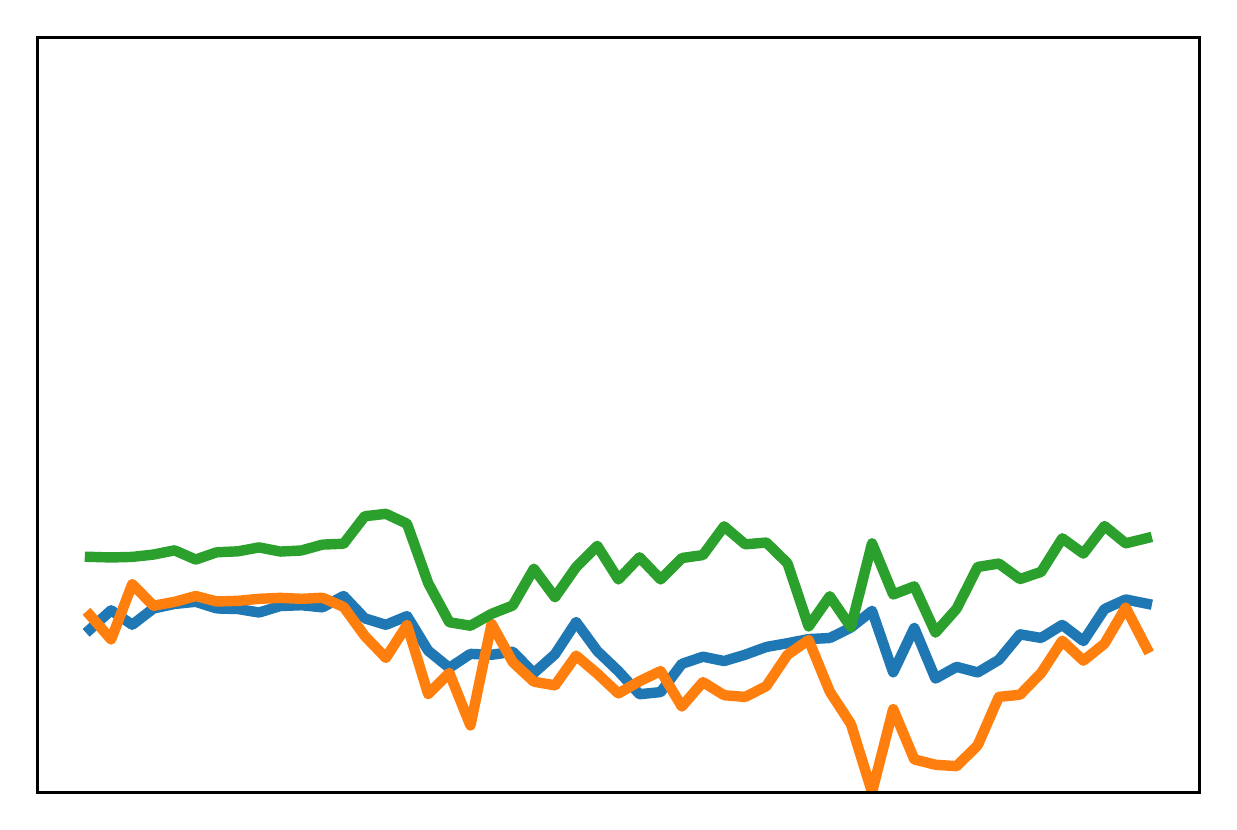}
	\end{subfigure}&
	\begin{subfigure}[b]{0.18\linewidth}
	\includegraphics[width=\linewidth]{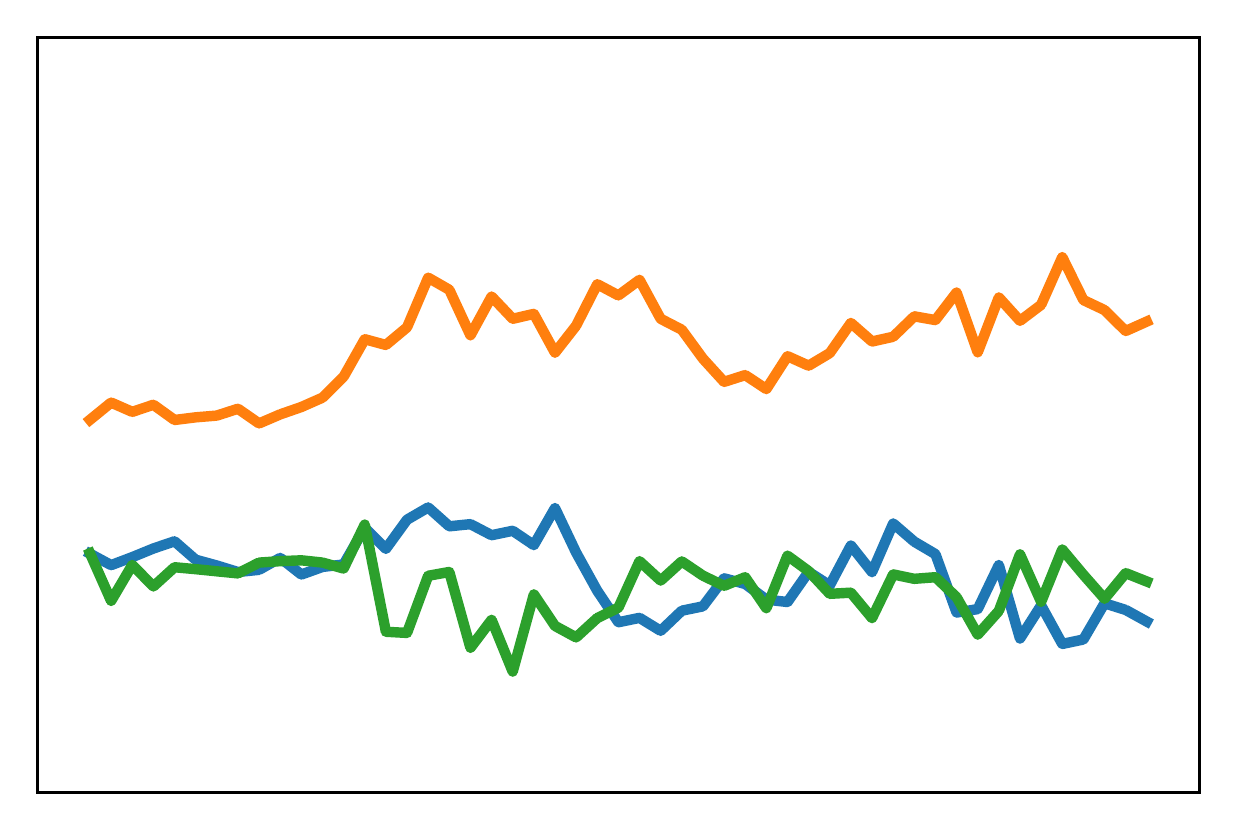}
	\end{subfigure}\\
	\begin{subfigure}[b]{0.18\linewidth}
	\includegraphics[width=\linewidth]{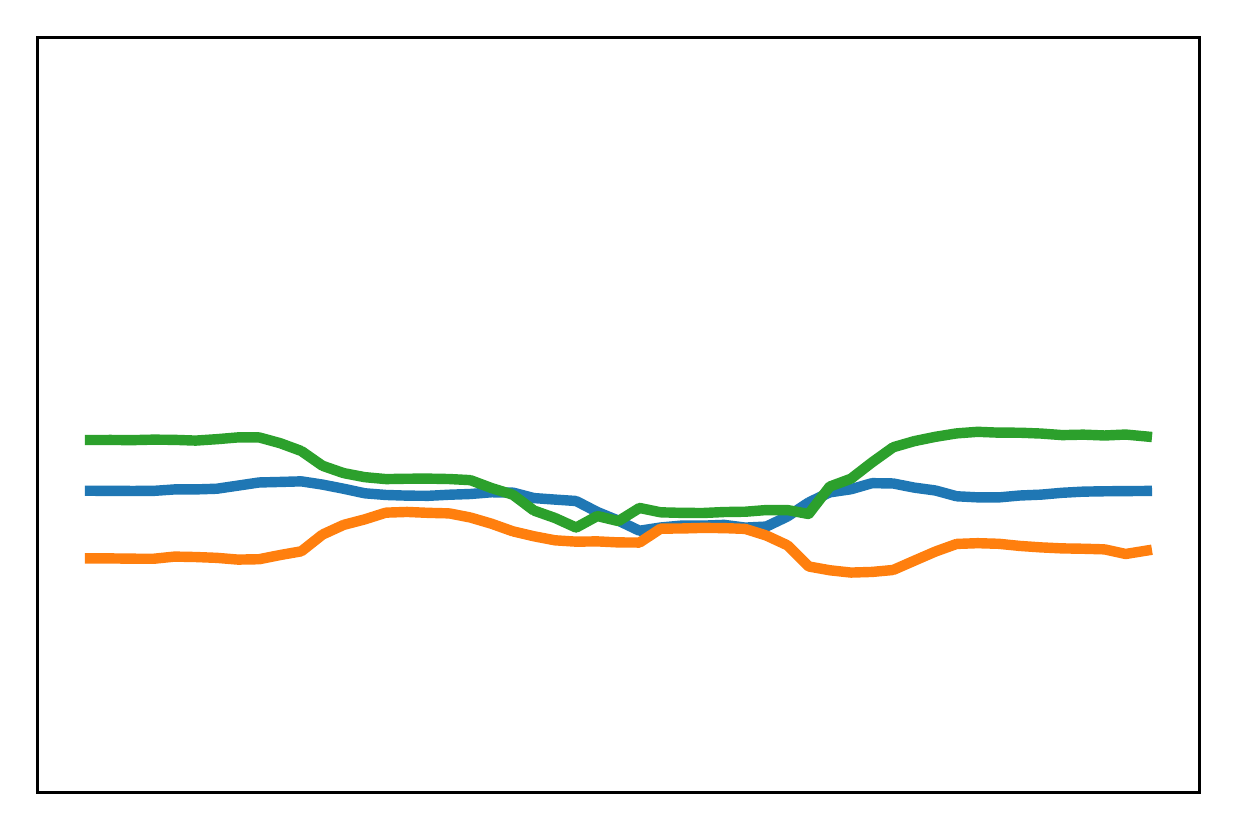}
	\end{subfigure}&
	\begin{subfigure}[b]{0.18\linewidth}
	\includegraphics[width=\linewidth]{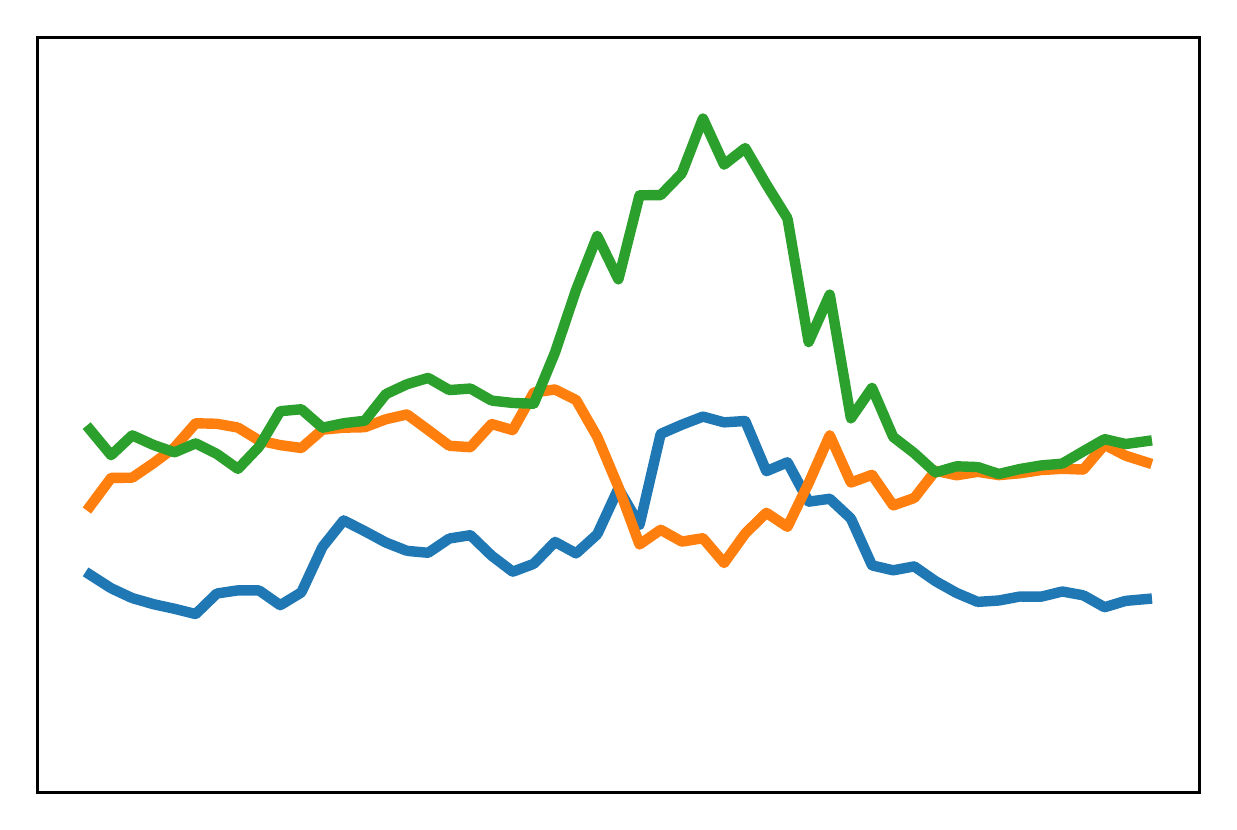}
	\end{subfigure}&
	\begin{subfigure}[b]{0.18\linewidth}
	\includegraphics[width=\linewidth]{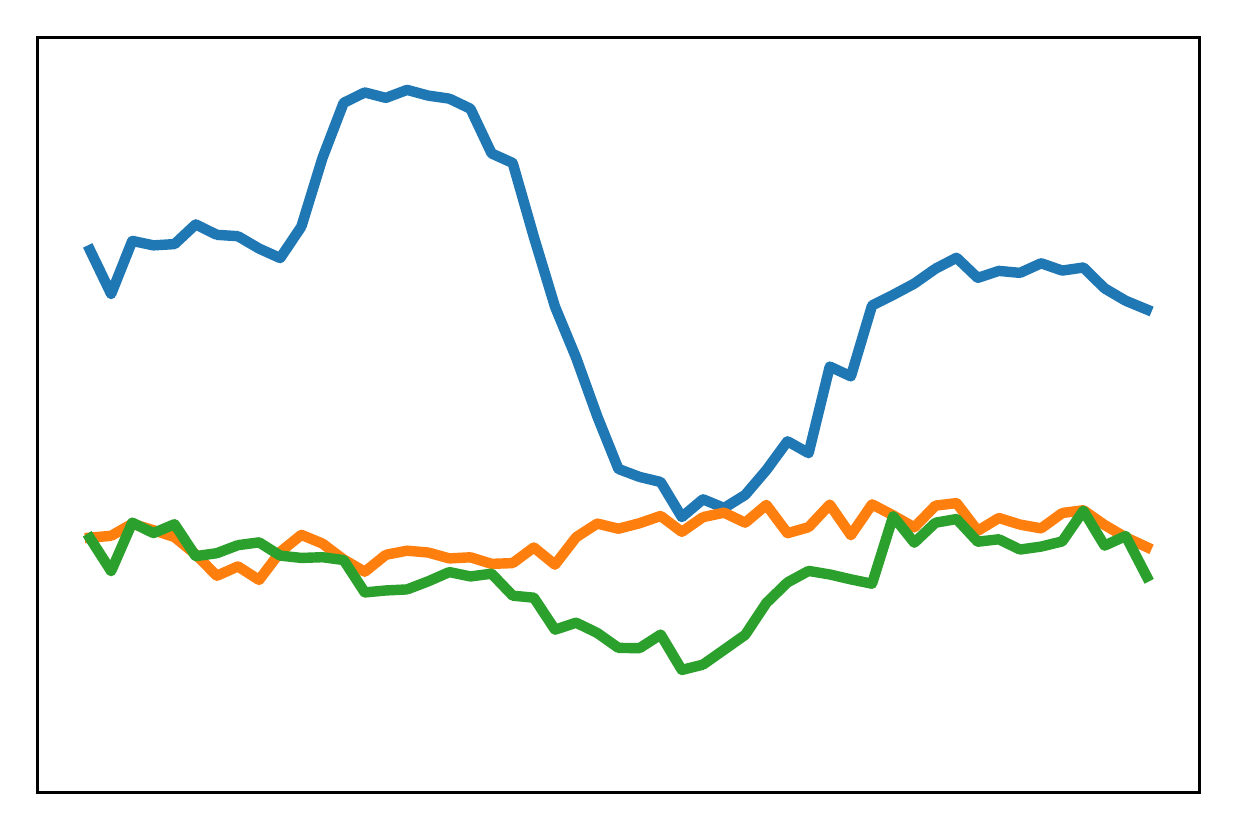}
	\end{subfigure}&
	\begin{subfigure}[b]{0.18\linewidth}
	\includegraphics[width=\linewidth]{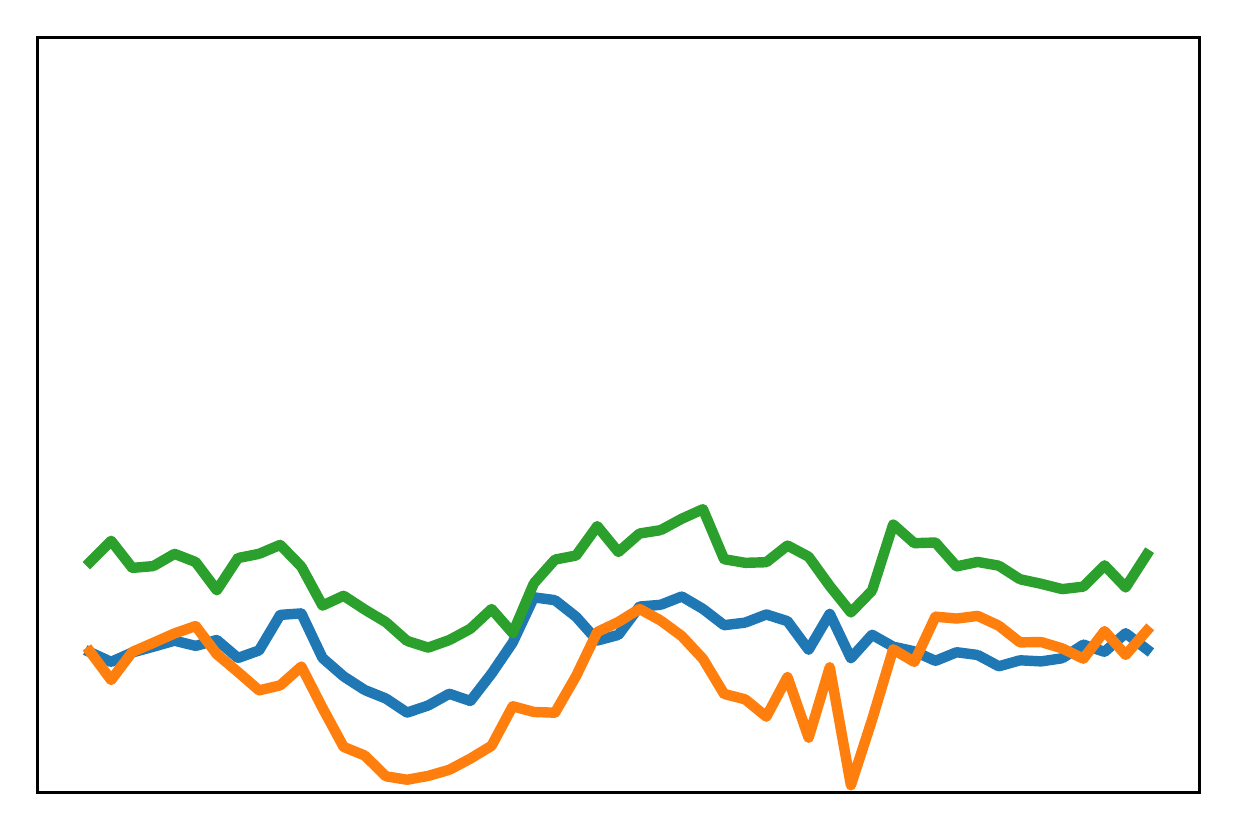}
	\end{subfigure}&
	\begin{subfigure}[b]{0.18\linewidth}
	\includegraphics[width=\linewidth]{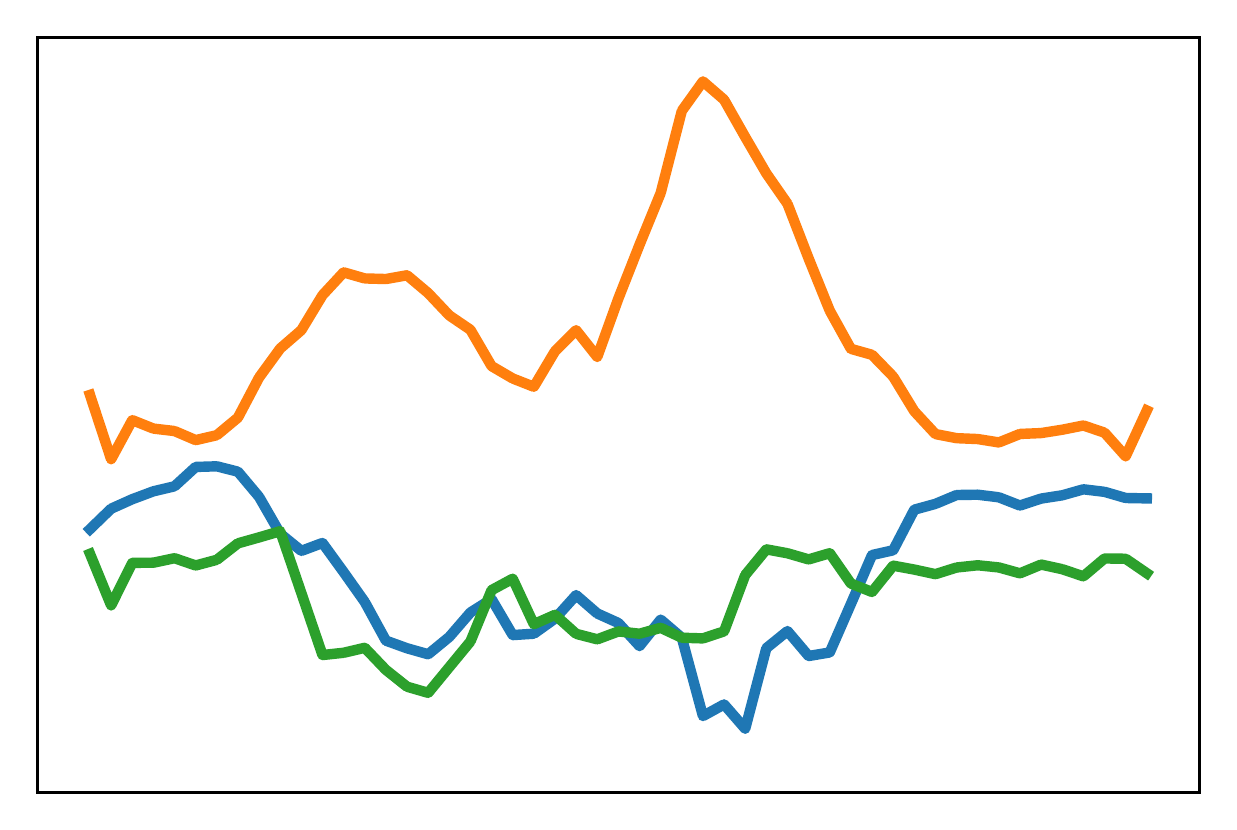}
	\end{subfigure}\\	
	\begin{subfigure}[b]{0.18\linewidth}
	\includegraphics[width=\linewidth]{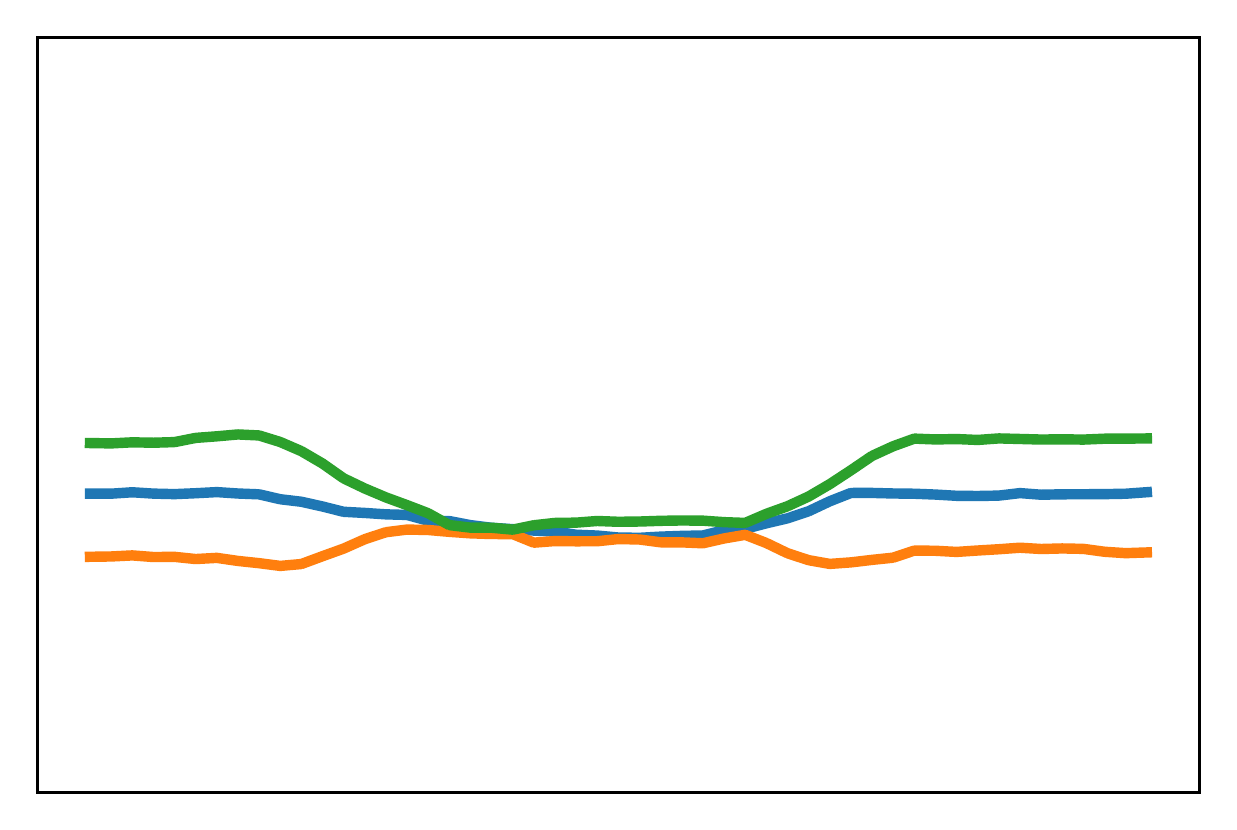}
	\end{subfigure}&
	\begin{subfigure}[b]{0.18\linewidth}
	\includegraphics[width=\linewidth]{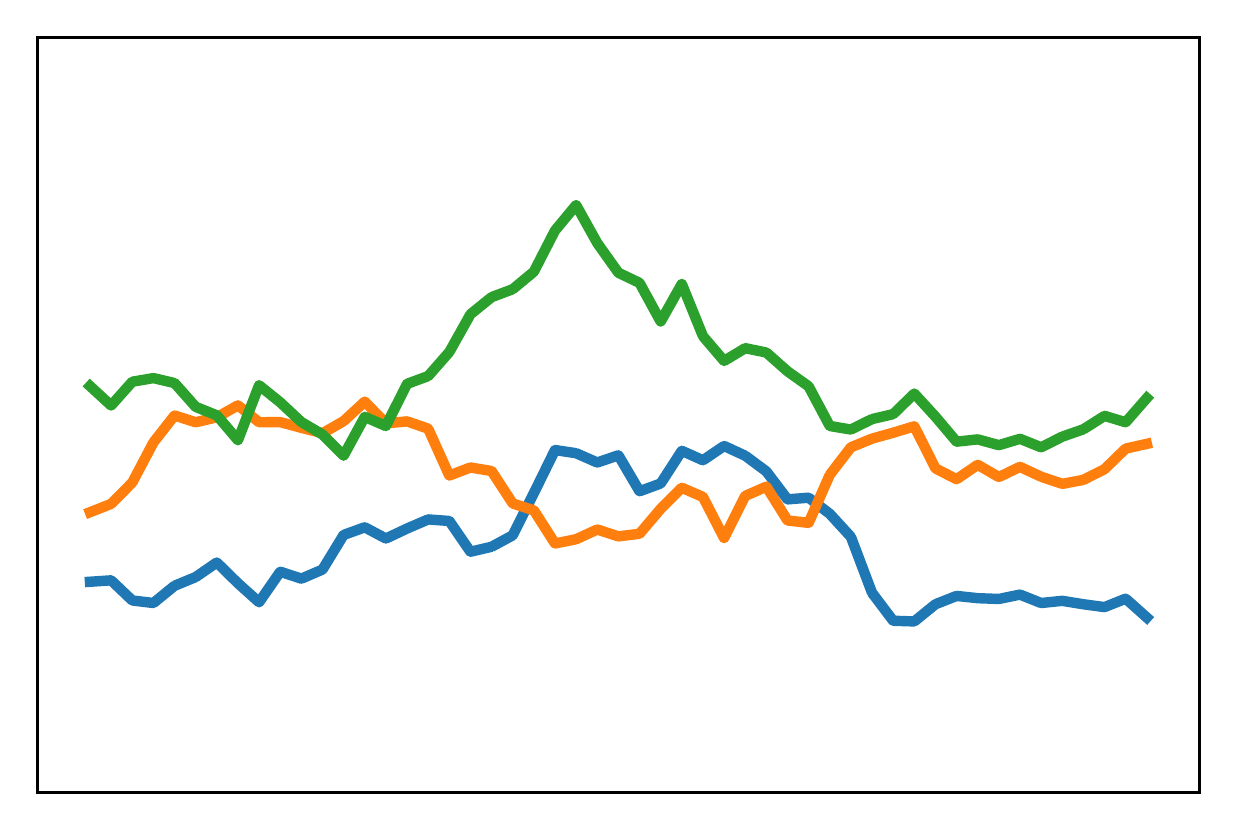}
	\end{subfigure}&
	\begin{subfigure}[b]{0.18\linewidth}
	\includegraphics[width=\linewidth]{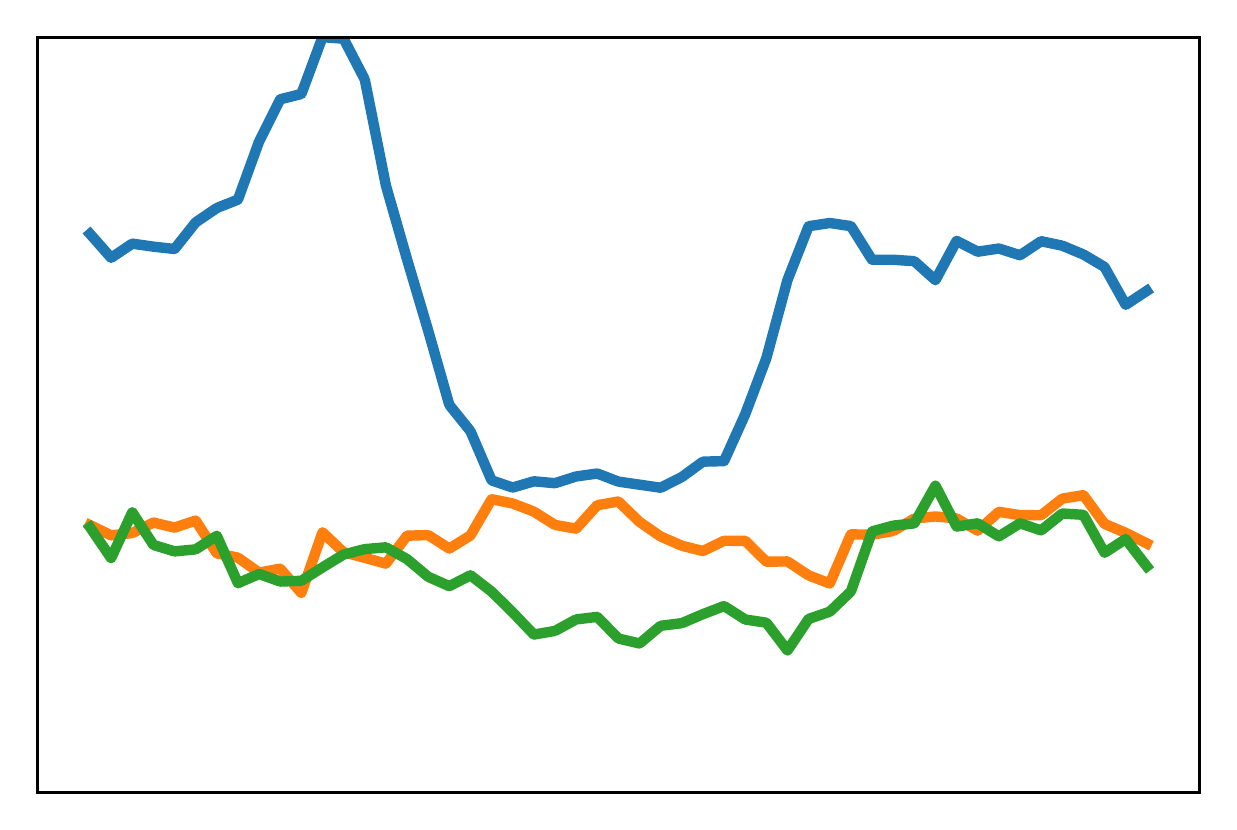}
	\end{subfigure}&
	\begin{subfigure}[b]{0.18\linewidth}
	\includegraphics[width=\linewidth]{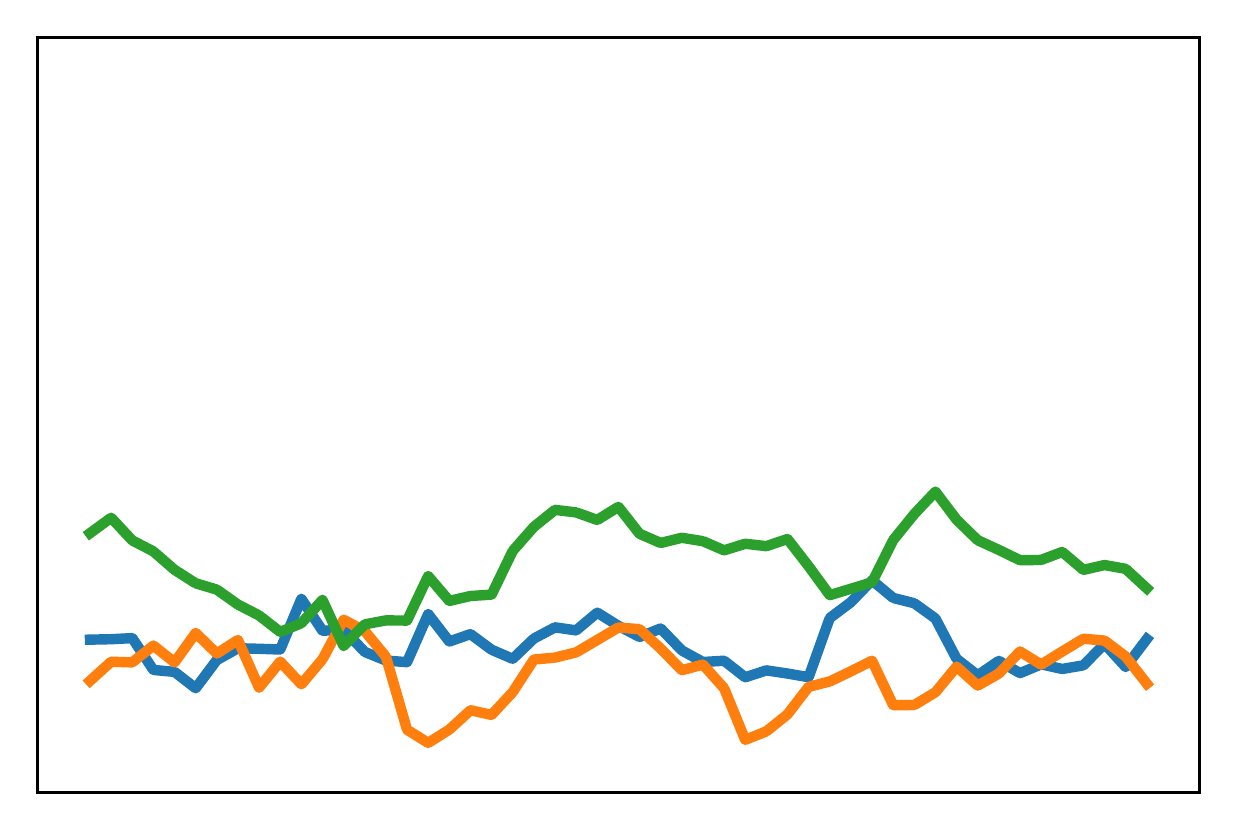}
	\end{subfigure}&
	\begin{subfigure}[b]{0.18\linewidth}
	\includegraphics[width=\linewidth]{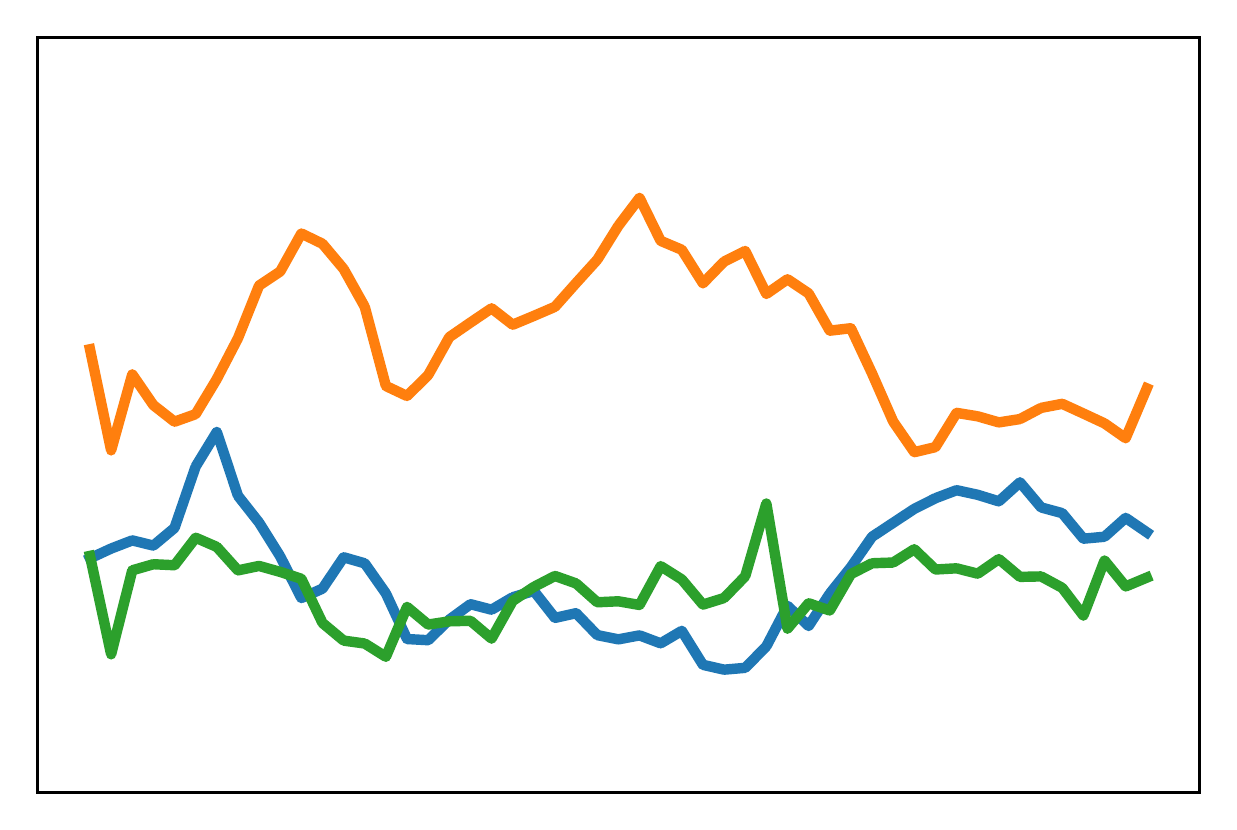}
	\end{subfigure}
	\end{tabular}
	\caption{The learned transformations ($T(x) = M(x)$) on \gls{natops}. The first row: the learned transformations of one given example of the normal class. The rest rows: the learned transformations of one given example of each abnormal class. }
		\label{fig:na_Ts}
\end{figure}

\subsection{Results for tabular data}
\begin{figure}[ht]
	\captionsetup[subfigure]{labelformat=empty}
	\centering
	\begin{tabular}{@{}c@{}c@{}c@{}c@{}}
	normal data&anomaly 1&anomaly 2&anomaly 3\\
	\begin{subfigure}[b]{0.2\linewidth}
	\begin{overpic}[width=\linewidth]{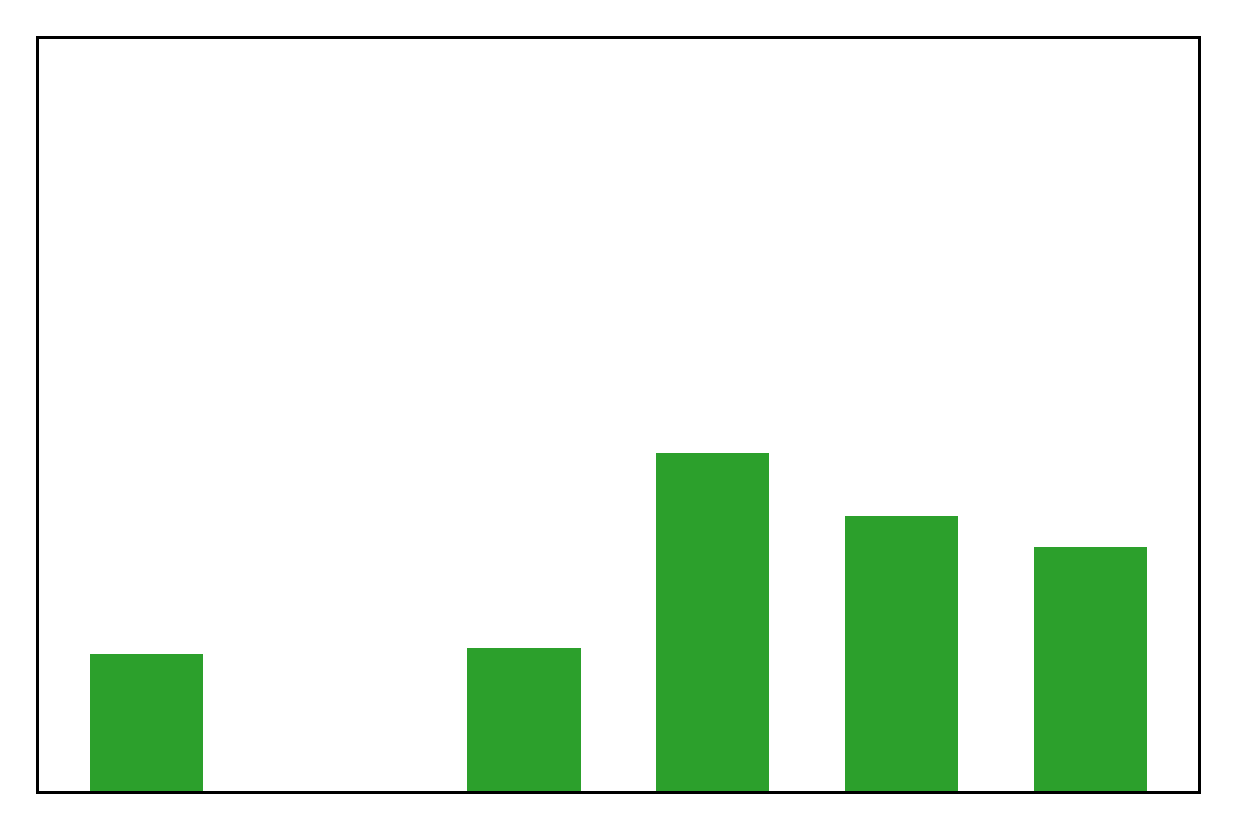}
 \put (48,50) {$\displaystyle x$}
\end{overpic}
	\end{subfigure}&
	\begin{subfigure}[b]{0.2\linewidth}
	\includegraphics[width=\linewidth]{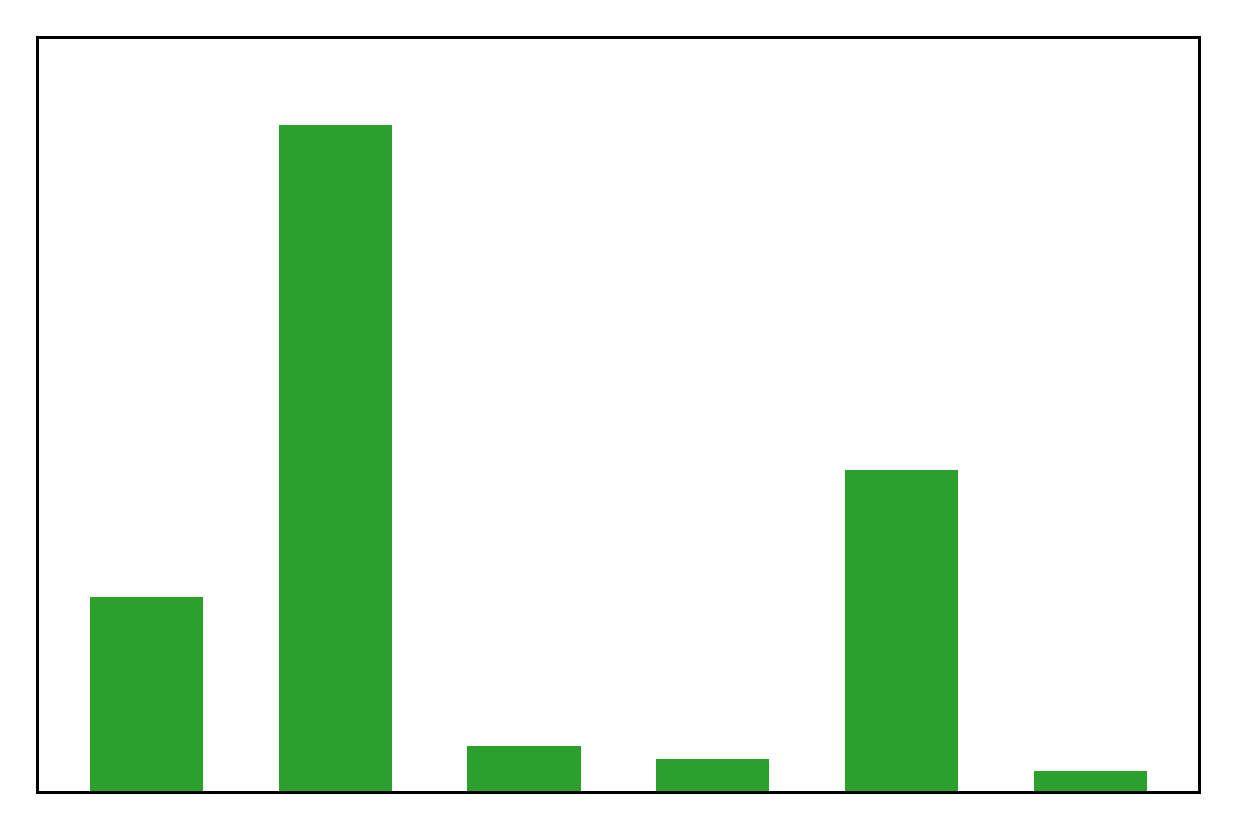}
	\end{subfigure}&
	\begin{subfigure}[b]{0.2\linewidth}
	\includegraphics[width=\linewidth]{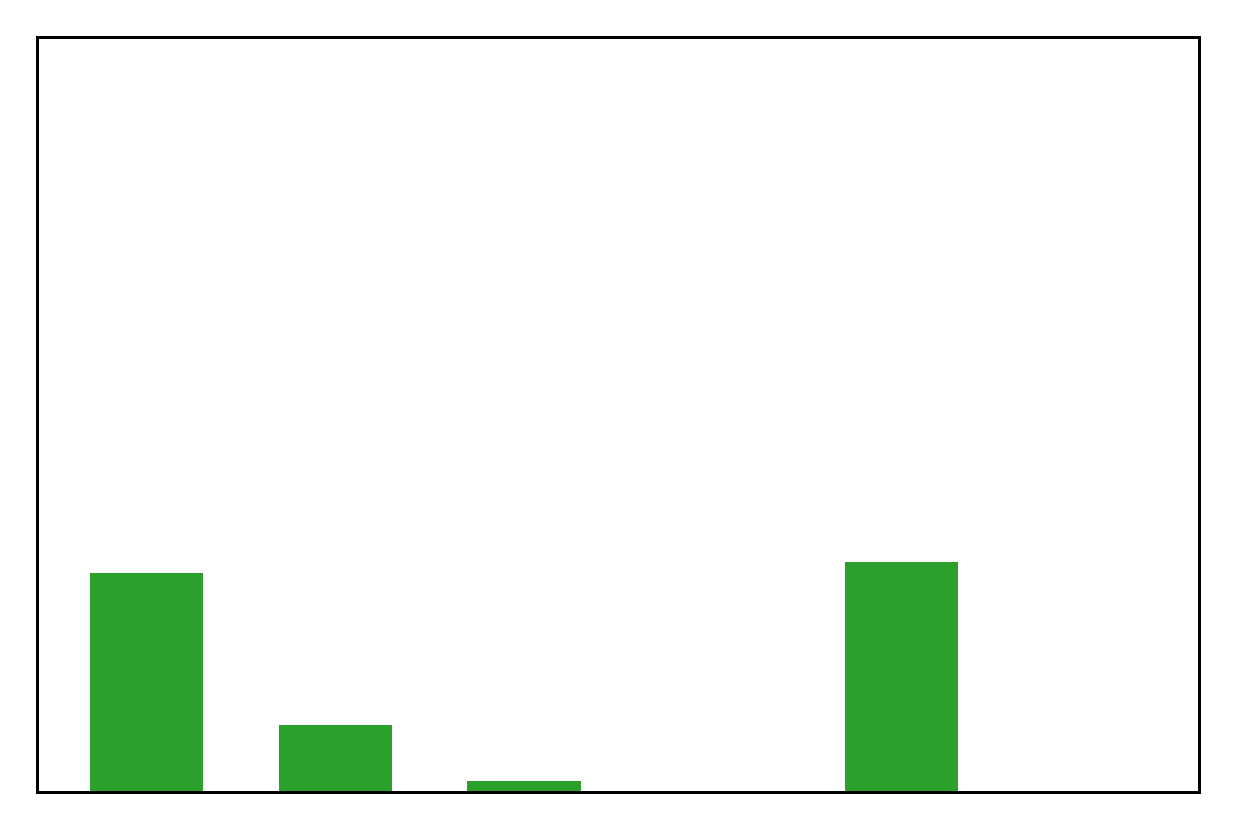}
	\end{subfigure}&
	\begin{subfigure}[b]{0.2\linewidth}
	\includegraphics[width=\linewidth]{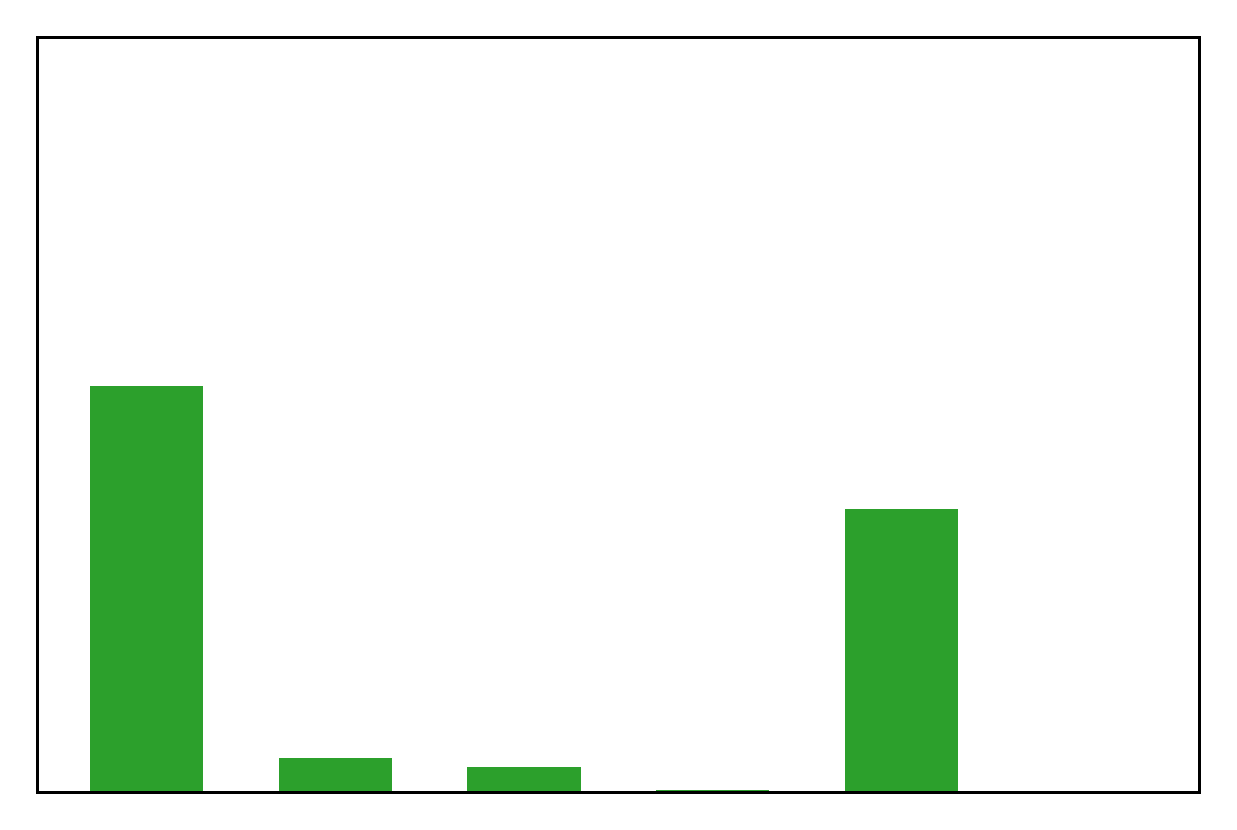}
	\end{subfigure}\\

	\begin{subfigure}[b]{0.2\linewidth}
\begin{overpic}[width=\linewidth]{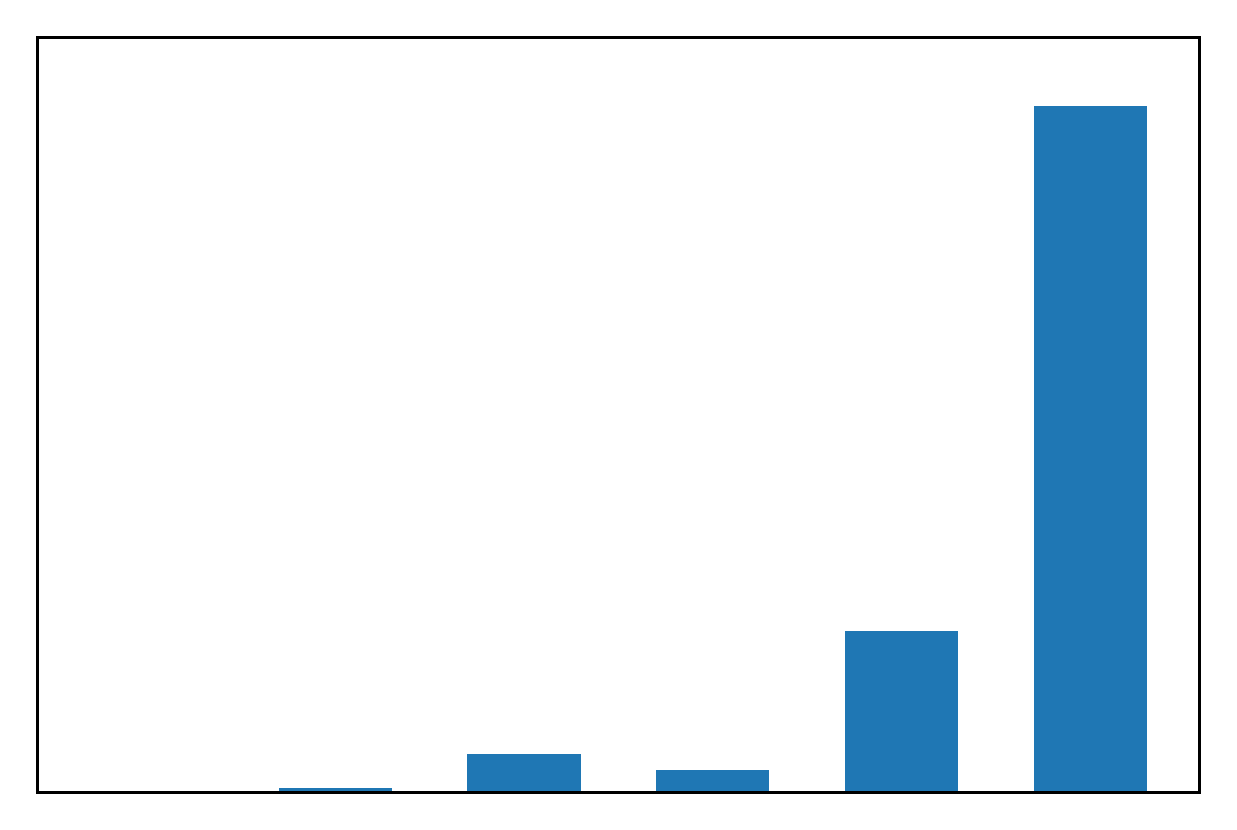}
 \put (35,50) {$\displaystyle M_1(x)$}
\end{overpic}
	\end{subfigure}&
	\begin{subfigure}[b]{0.2\linewidth}
	\includegraphics[width=\linewidth]{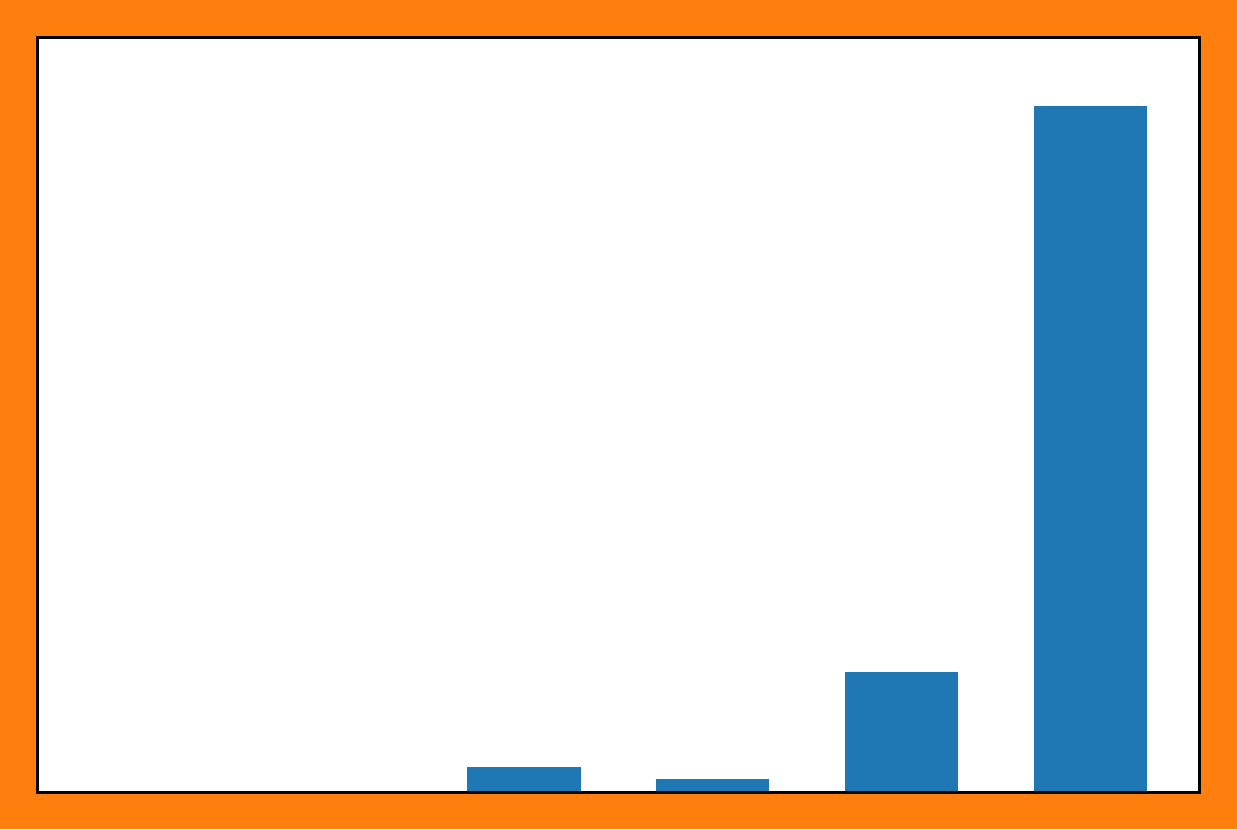}
	\end{subfigure}&
	\begin{subfigure}[b]{0.2\linewidth}
	\includegraphics[width=\linewidth]{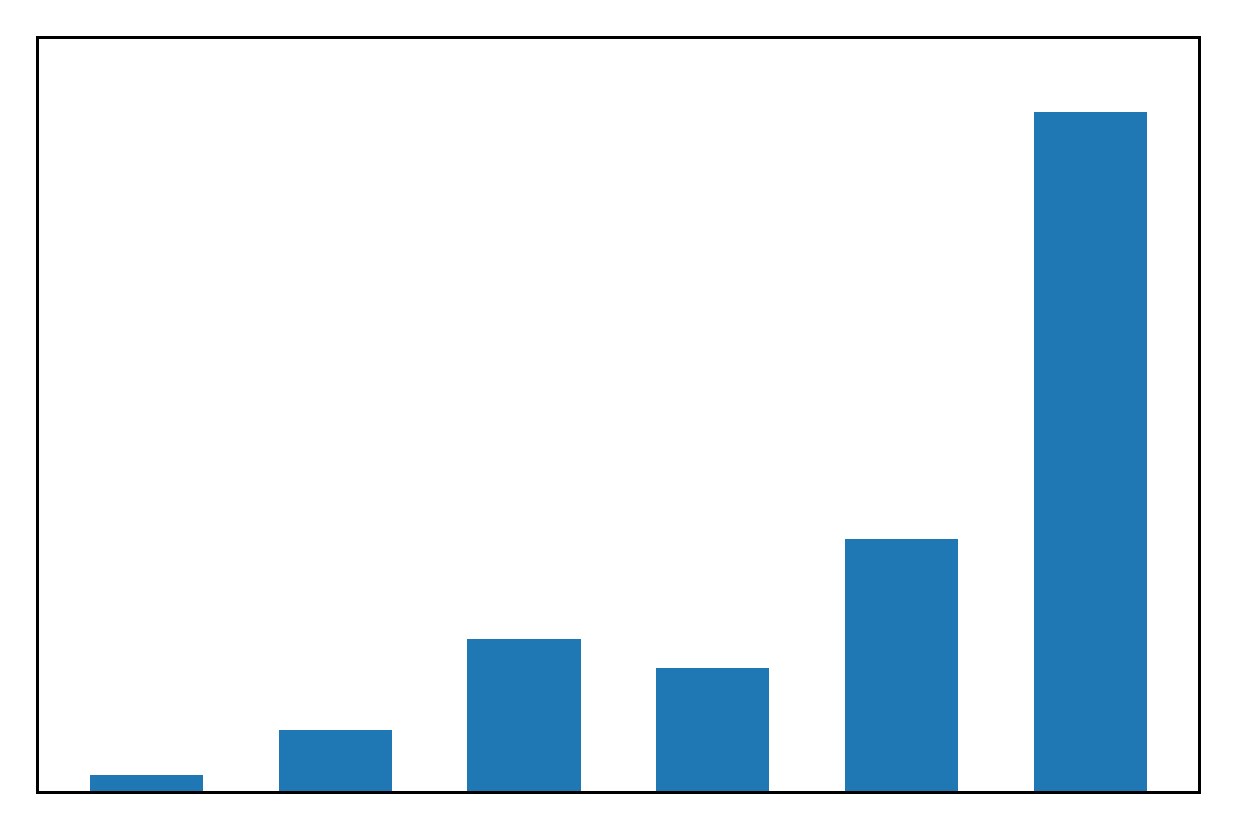}
	\end{subfigure}&
	\begin{subfigure}[b]{0.2\linewidth}
	\includegraphics[width=\linewidth]{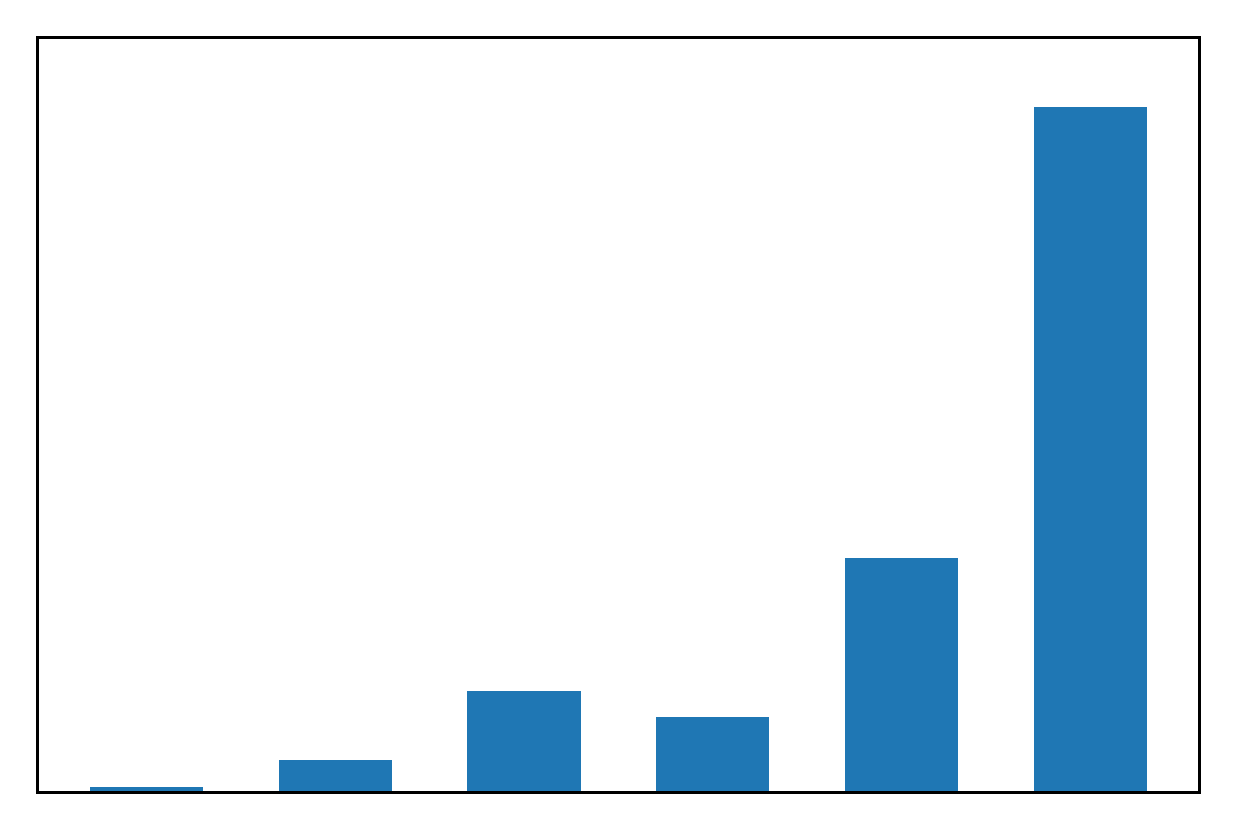}
	\end{subfigure}\\
	\begin{subfigure}[b]{0.2\linewidth}
	\begin{overpic}[width=\linewidth]{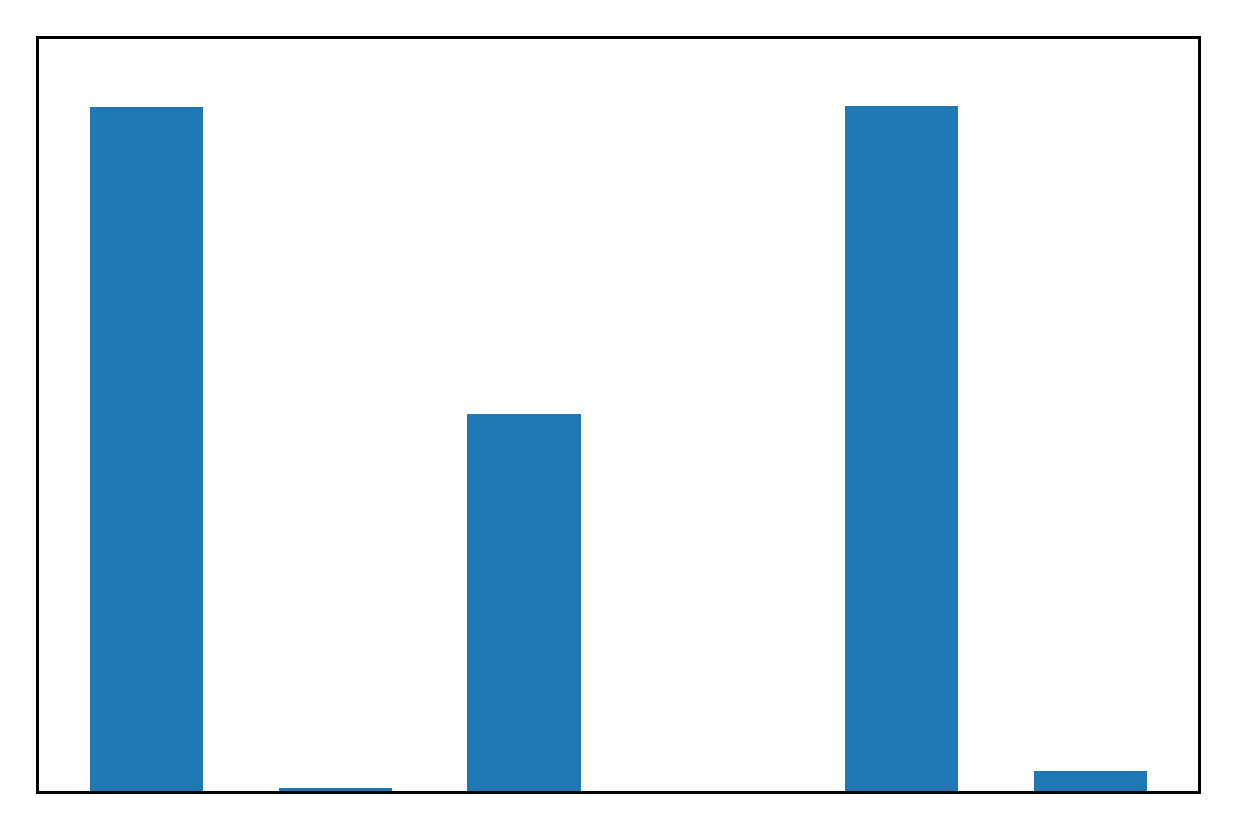}
 \put (35,50) {$\displaystyle M_2(x)$}
\end{overpic}
	\end{subfigure}&
	\begin{subfigure}[b]{0.2\linewidth}
	\includegraphics[width=\linewidth]{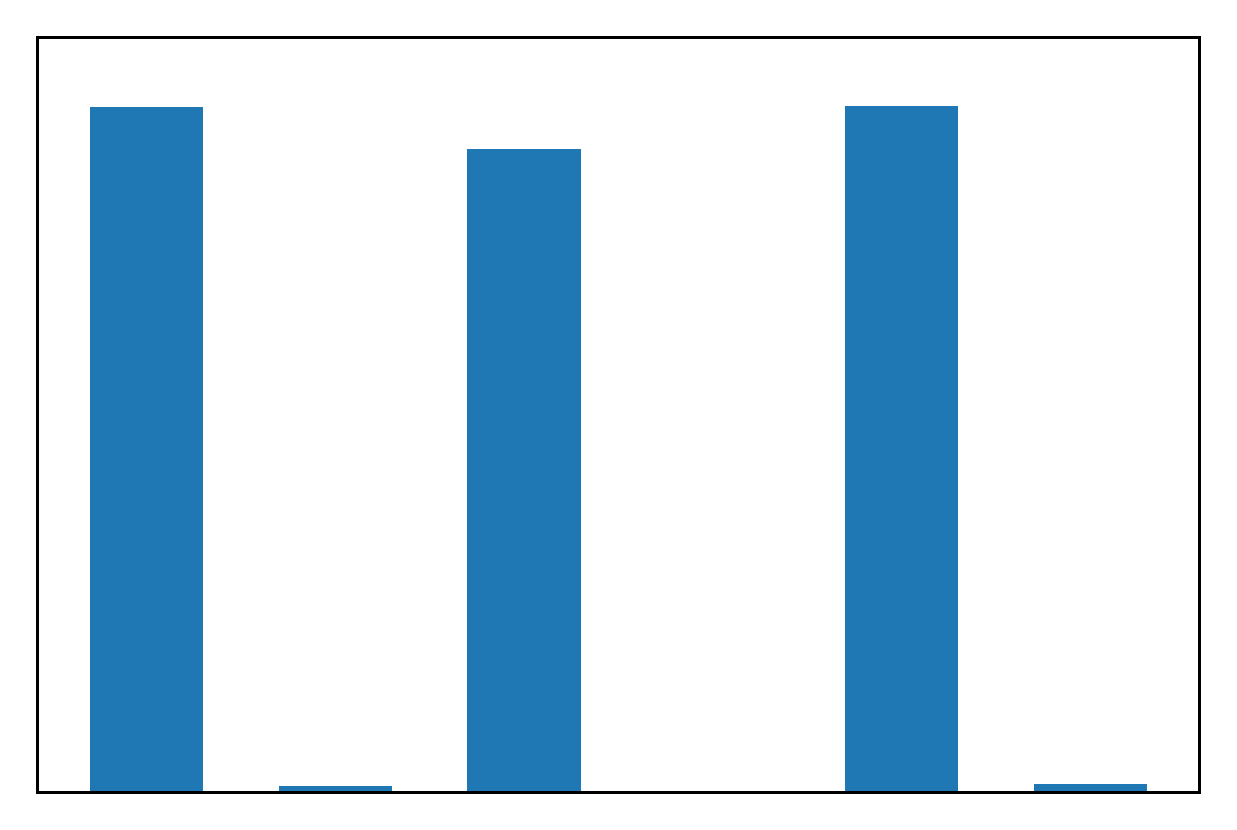}
	\end{subfigure}&
	\begin{subfigure}[b]{0.2\linewidth}
	\includegraphics[width=\linewidth]{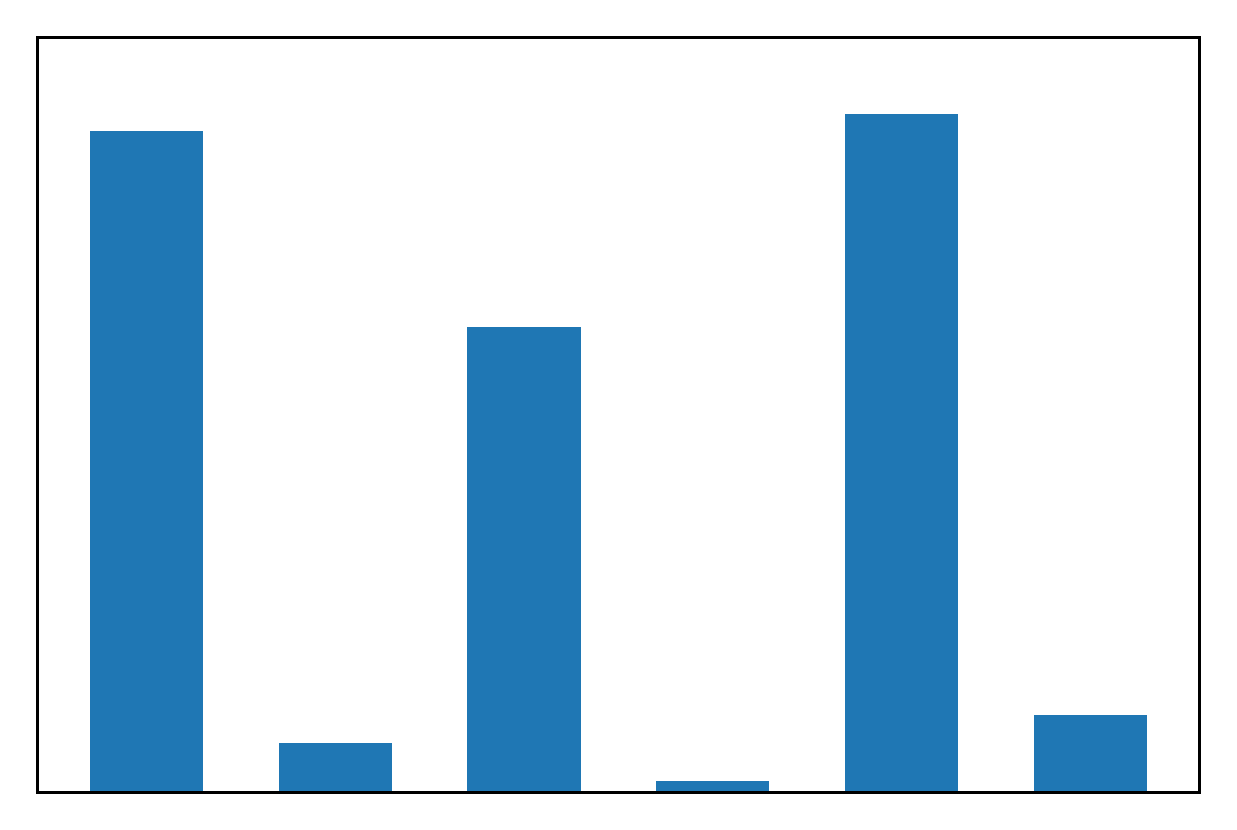}
	\end{subfigure}&
	\begin{subfigure}[b]{0.2\linewidth}
	\includegraphics[width=\linewidth]{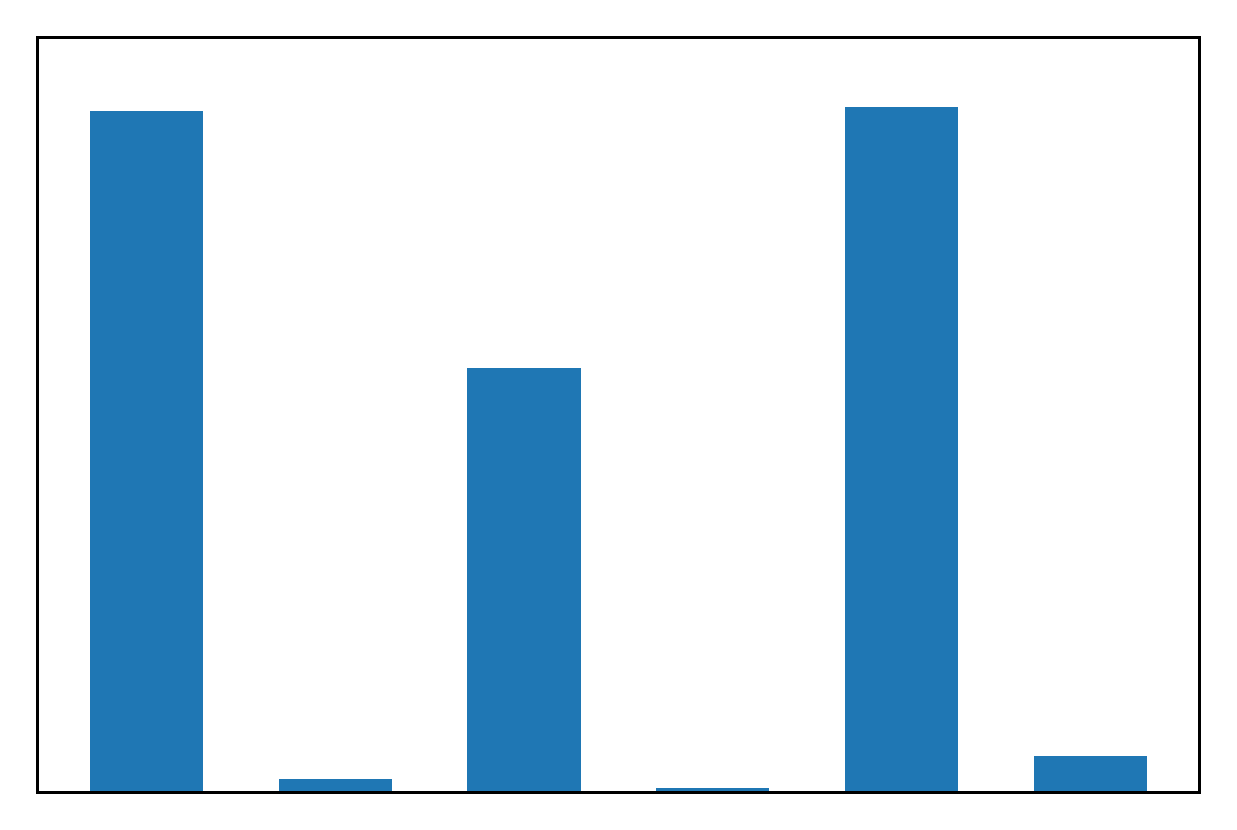}
	\end{subfigure}\\
	\begin{subfigure}[b]{0.2\linewidth}
	\begin{overpic}[width=\linewidth]{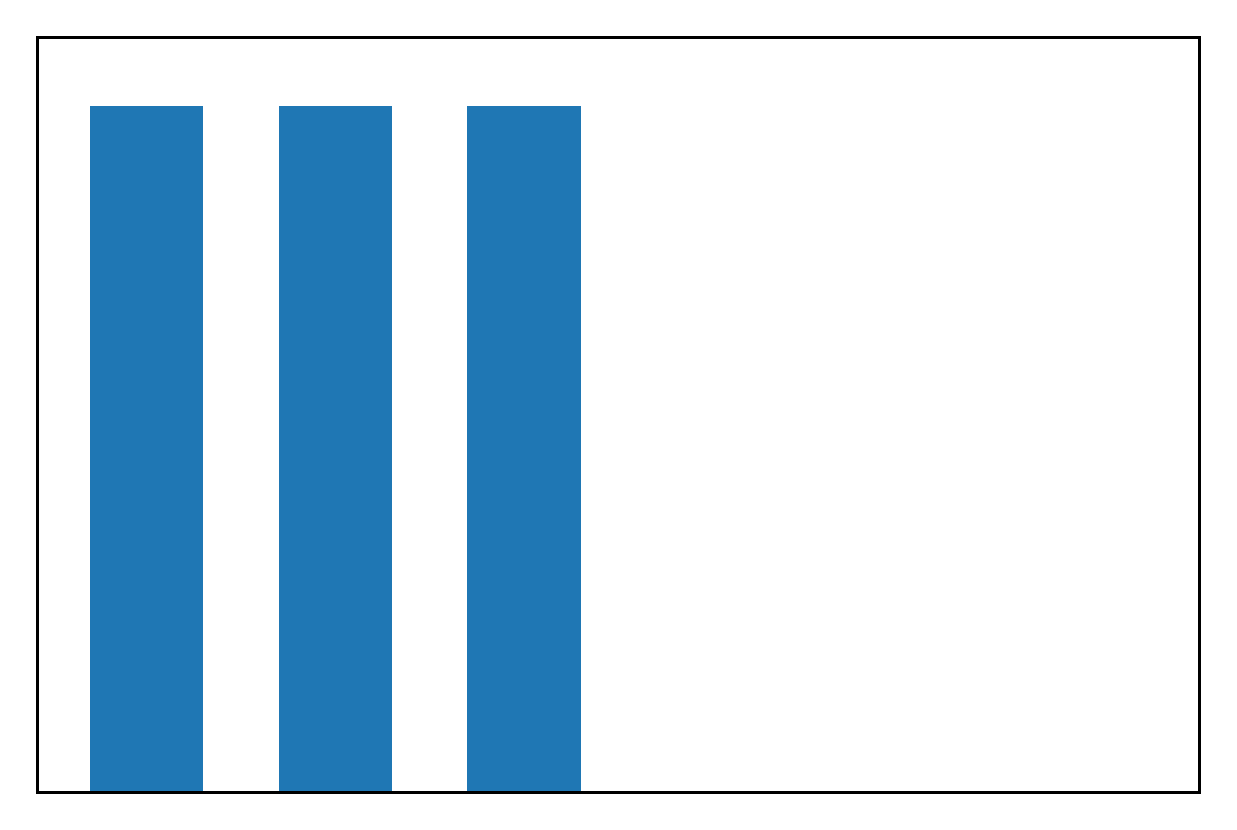}
 \put (65,50) {$\displaystyle M_3(x)$}
\end{overpic}
	\end{subfigure}&
	\begin{subfigure}[b]{0.2\linewidth}
	\includegraphics[width=\linewidth]{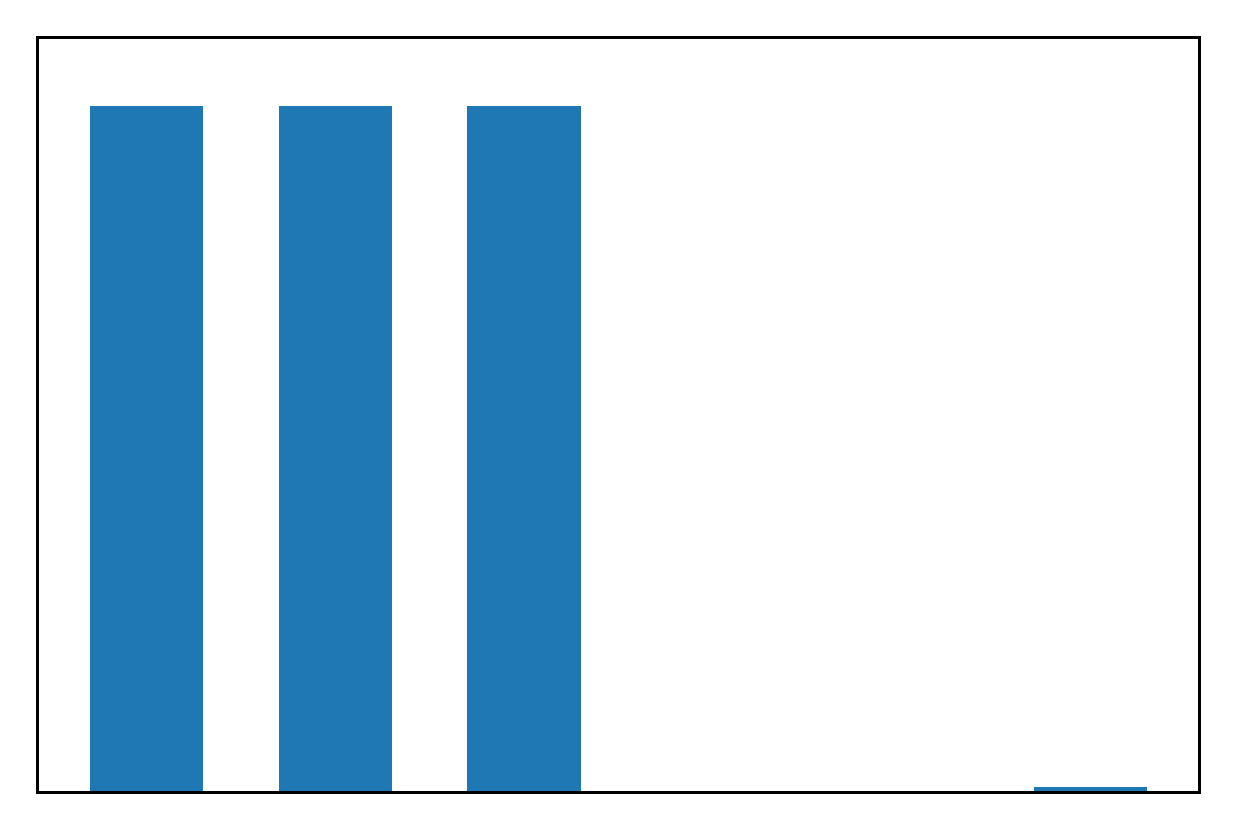}
	\end{subfigure}&
	\begin{subfigure}[b]{0.2\linewidth}
	\includegraphics[width=\linewidth]{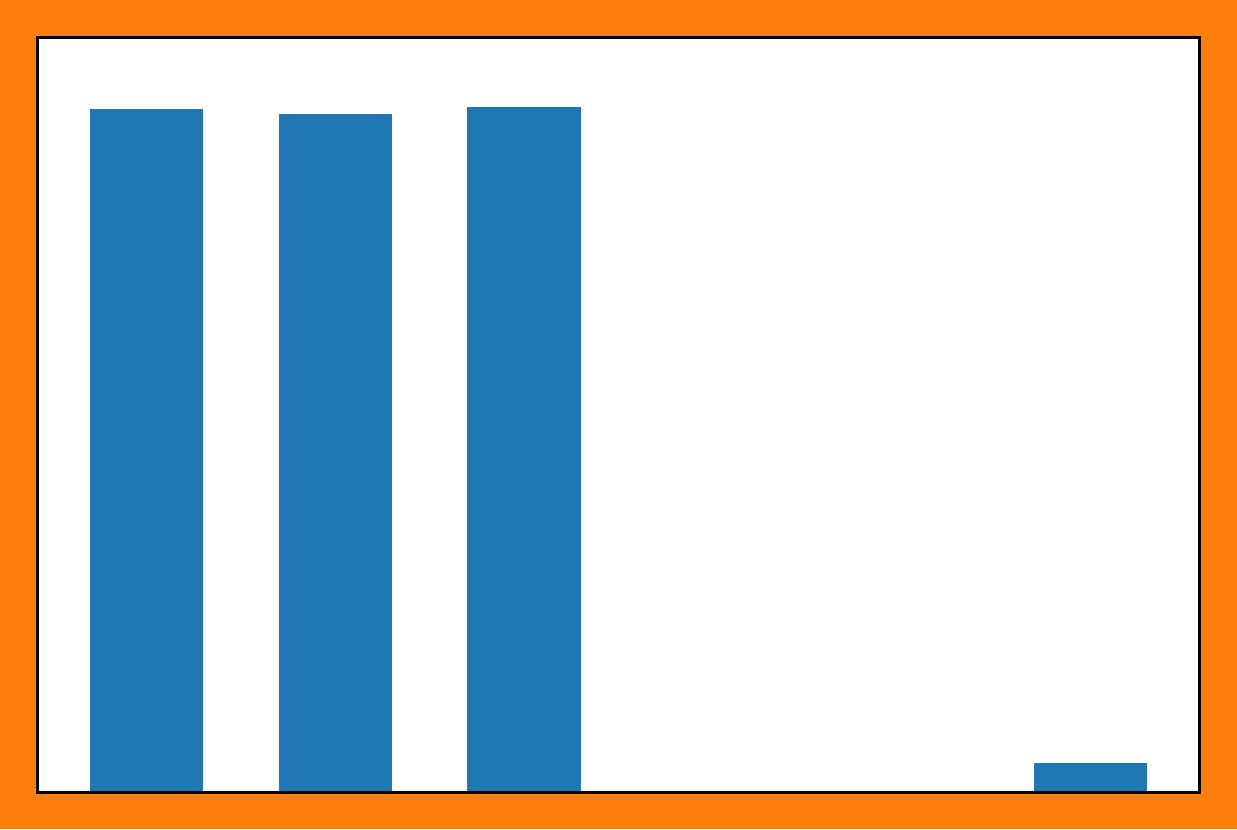}
	\end{subfigure}&
	\begin{subfigure}[b]{0.2\linewidth}
	\includegraphics[width=\linewidth]{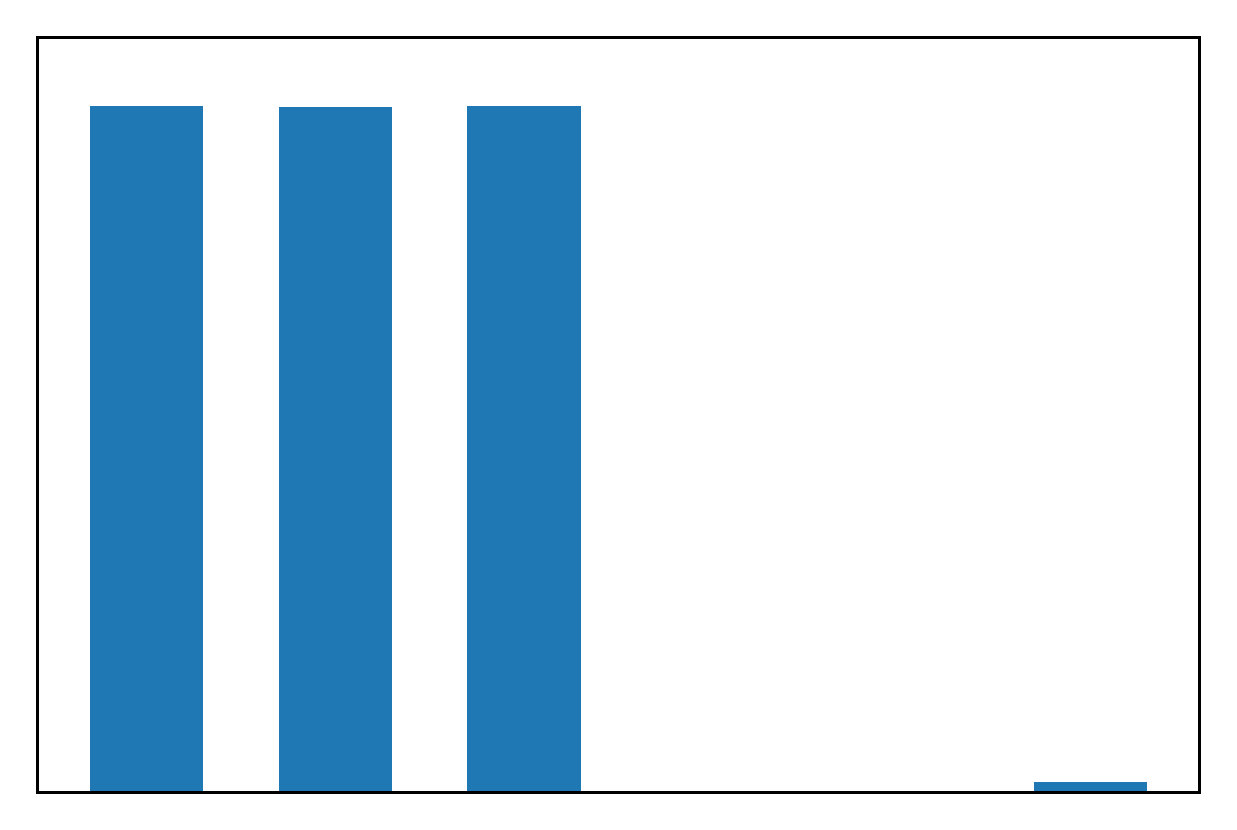}
	\end{subfigure}\\
		\begin{subfigure}[b]{0.2\linewidth}
		\begin{overpic}[width=\linewidth]{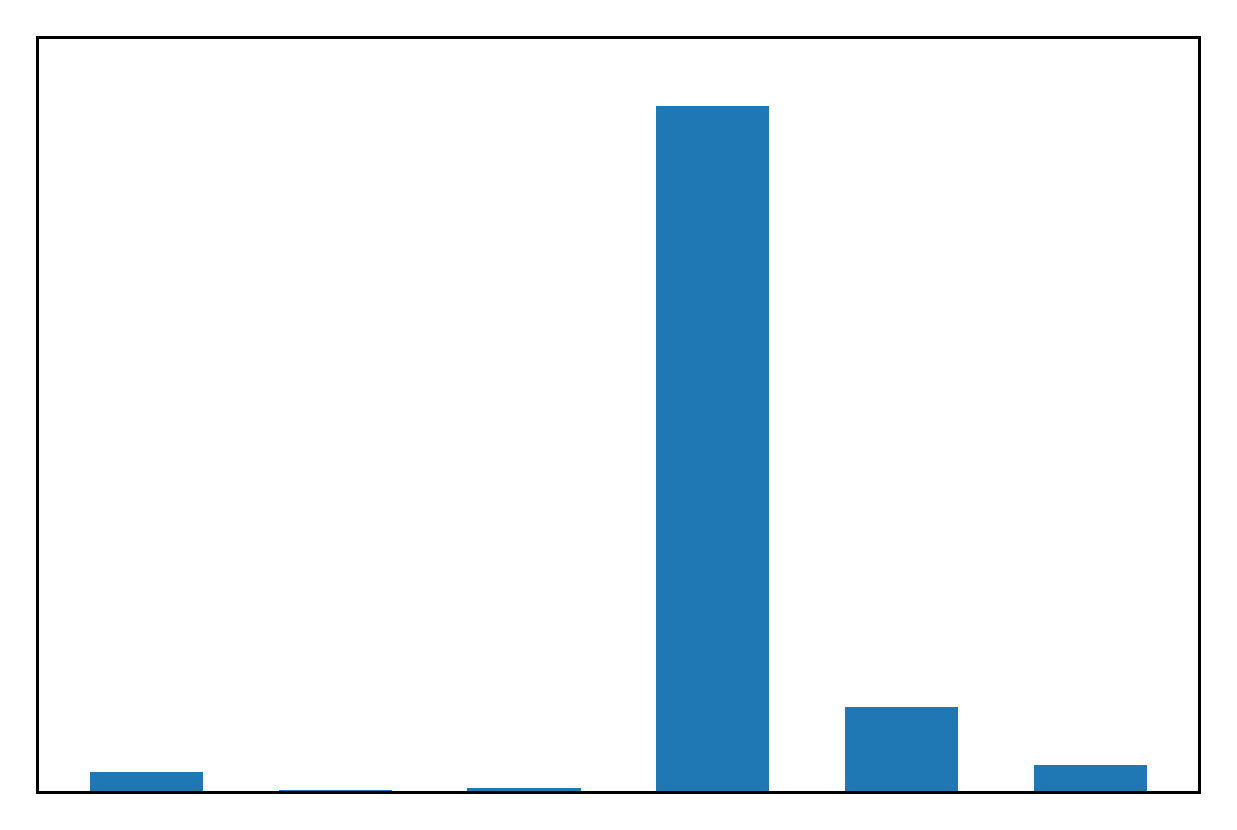}
 \put (65,50) {$\displaystyle M_4(x)$}
\end{overpic}
	\end{subfigure}&
	\begin{subfigure}[b]{0.2\linewidth}
	\includegraphics[width=\linewidth]{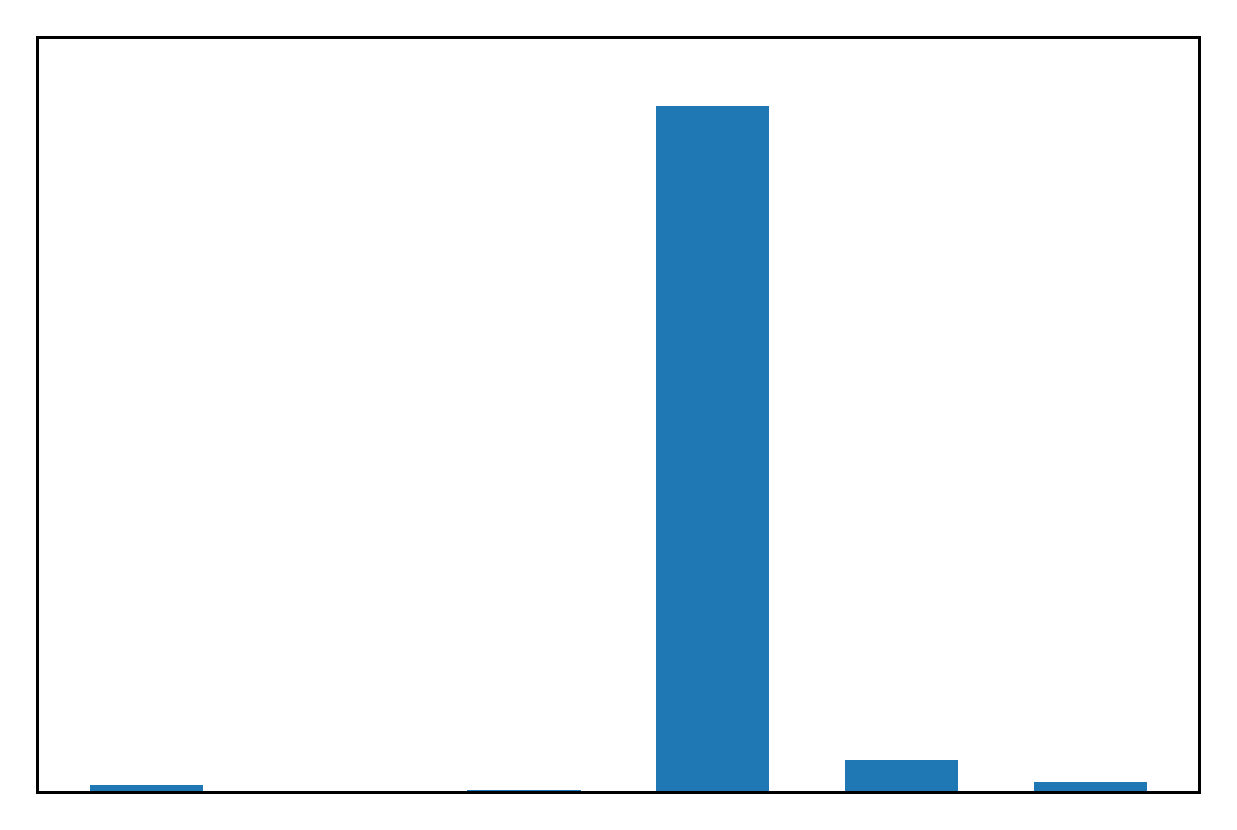}
	\end{subfigure}&
	\begin{subfigure}[b]{0.2\linewidth}
	\includegraphics[width=\linewidth]{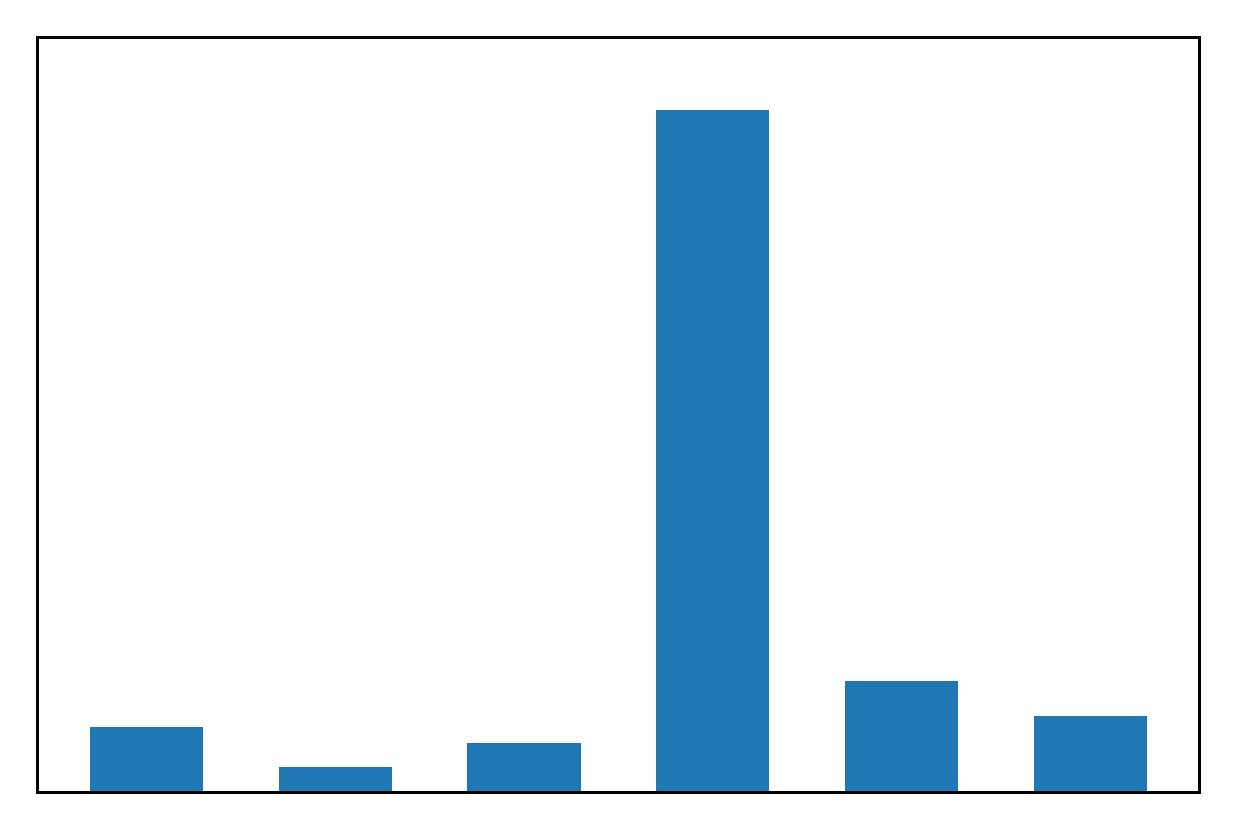}
	\end{subfigure}&
	\begin{subfigure}[b]{0.2\linewidth}
	\includegraphics[width=\linewidth]{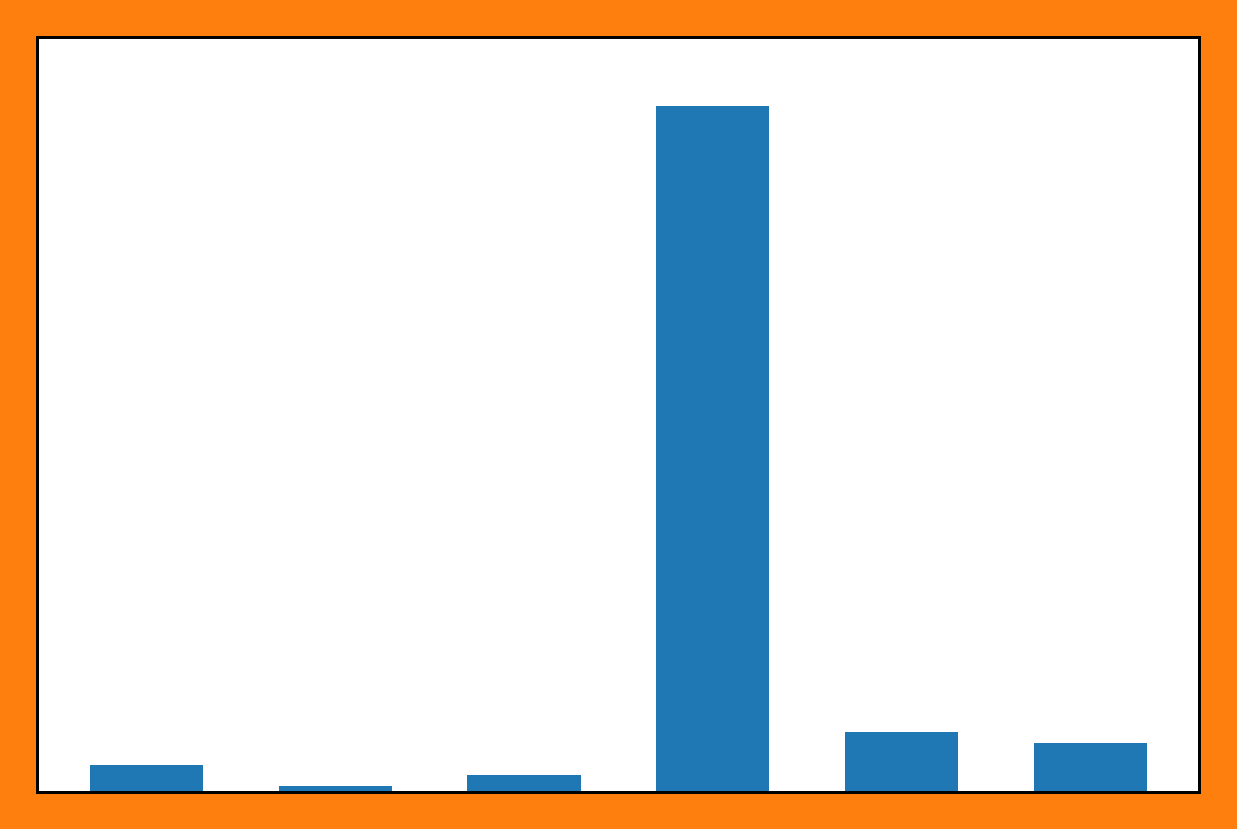}
	\end{subfigure}\\
		\begin{subfigure}[b]{0.2\linewidth}
\begin{overpic}[width=\linewidth]{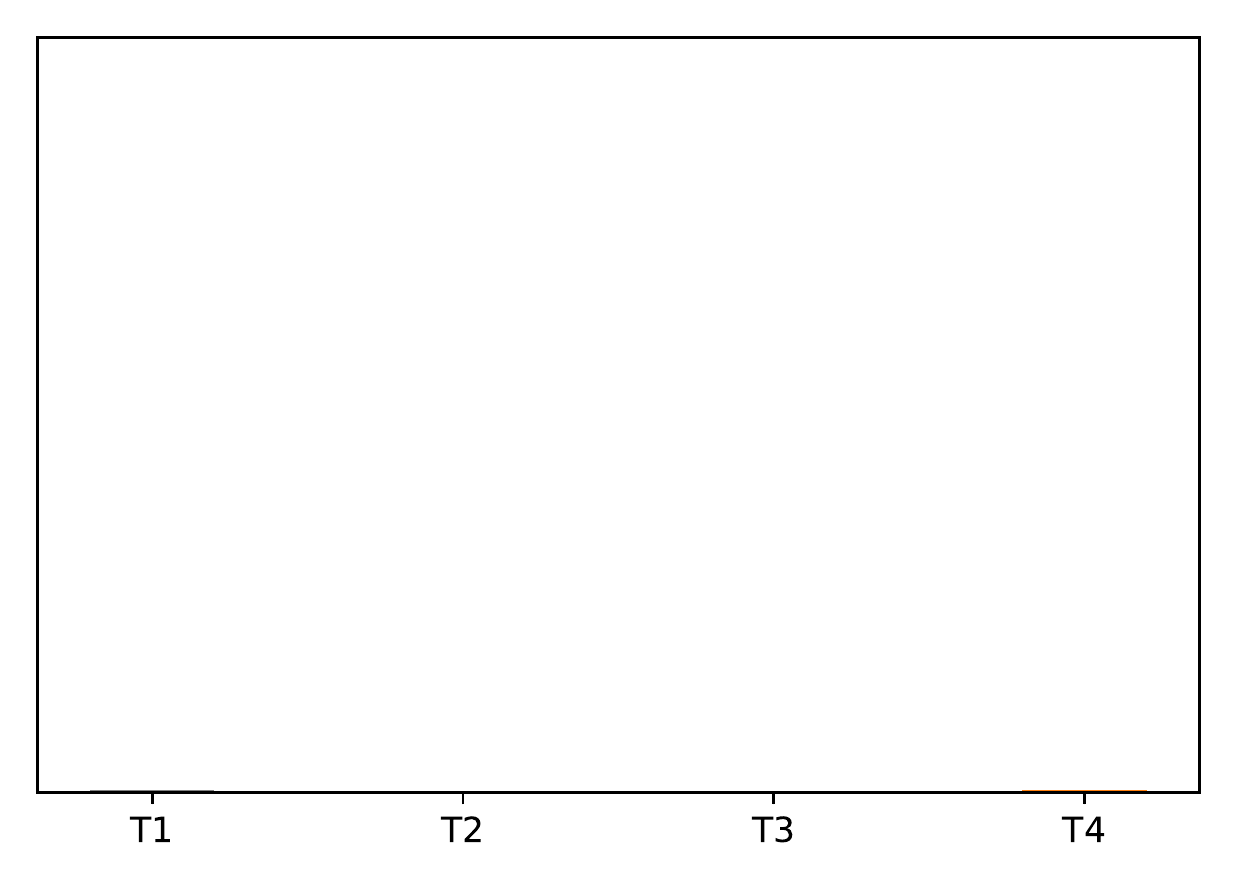}
 \put (38,50) {$\displaystyle S(x)$}
\end{overpic}
	\end{subfigure}&
	\begin{subfigure}[b]{0.2\linewidth}
	\includegraphics[width=\linewidth]{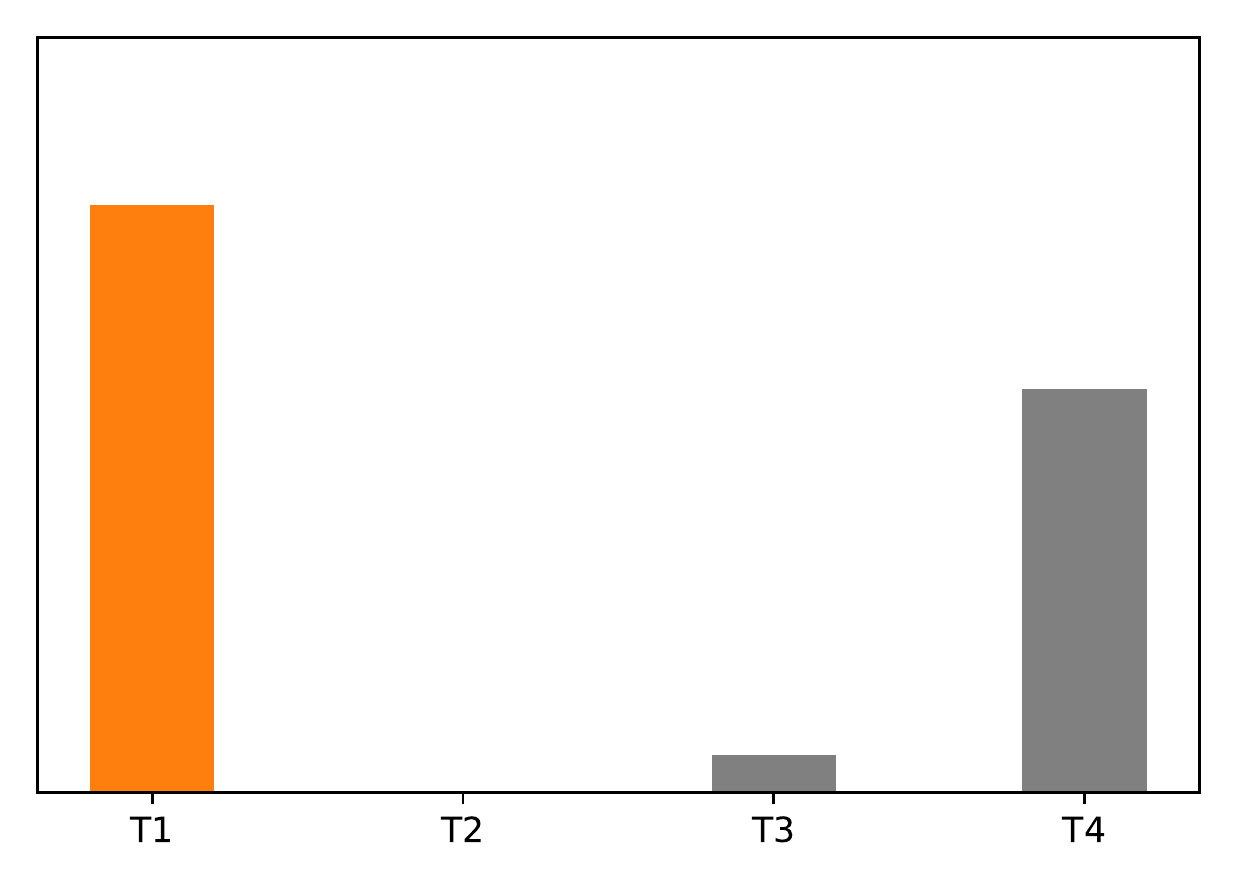}
	\end{subfigure}&
	\begin{subfigure}[b]{0.2\linewidth}
	\includegraphics[width=\linewidth]{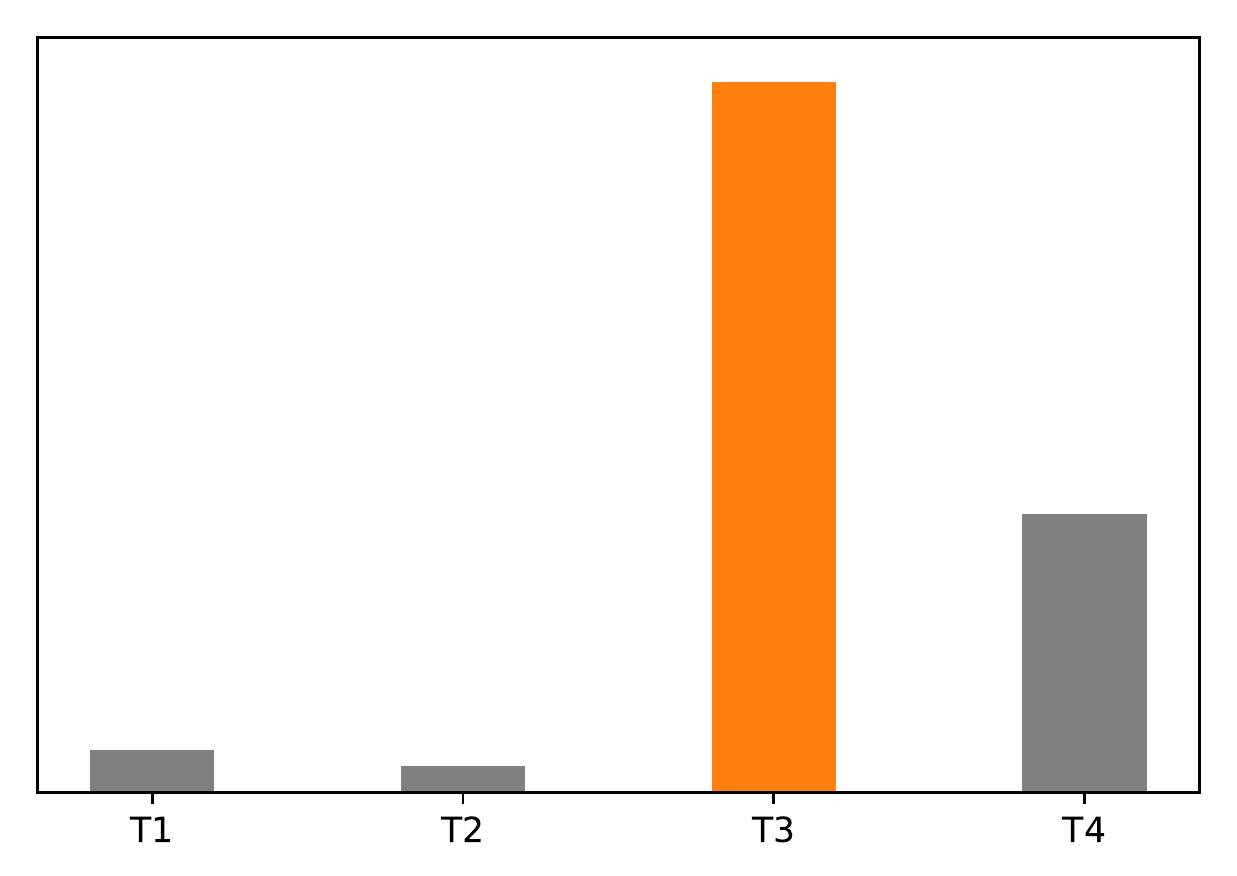}
	\end{subfigure}&
	\begin{subfigure}[b]{0.2\linewidth}
	\includegraphics[width=\linewidth]{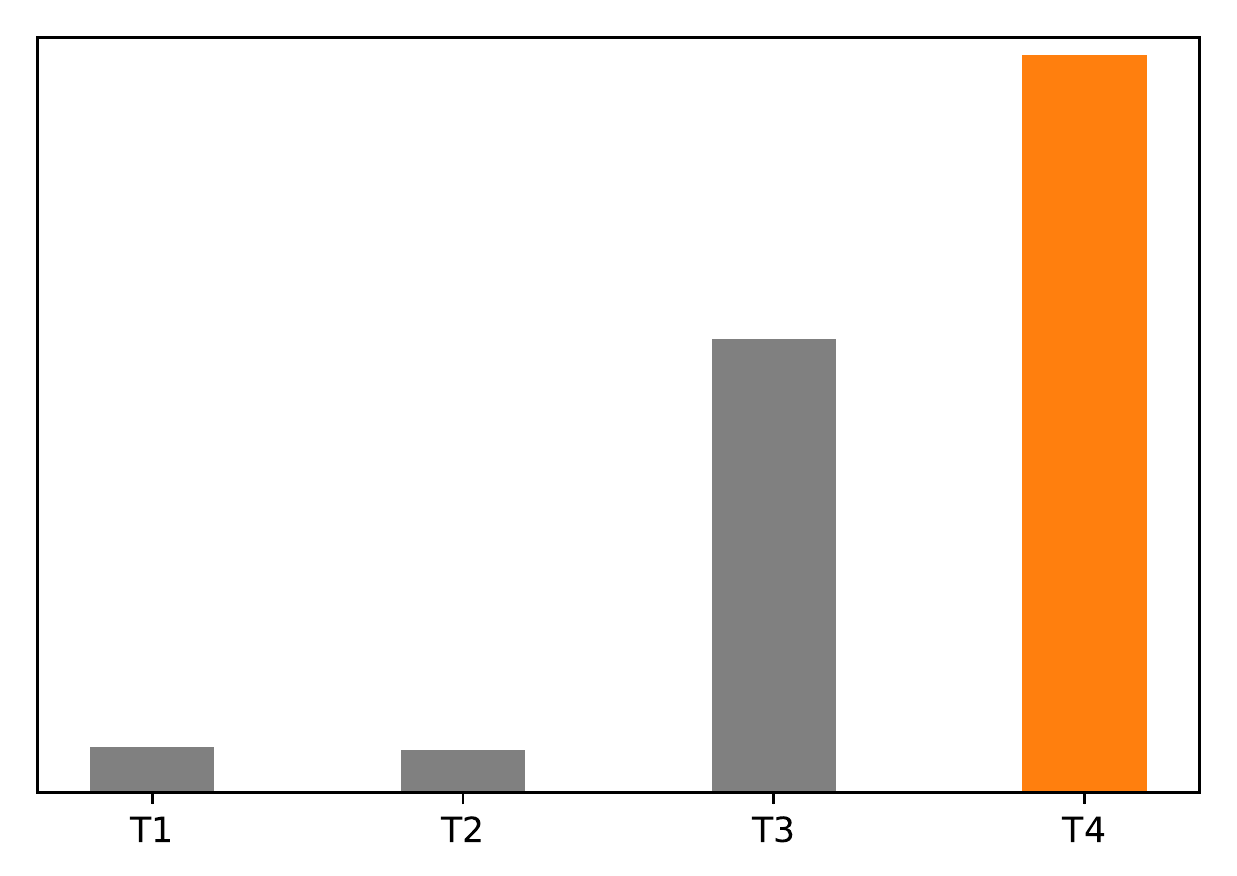}
	\end{subfigure}
	\end{tabular}
	\caption{Visualizations of thyroid. The first row: one normal example and three abnormal examples. The second to the fourth rows: the learned four masks of them. The fifth row: the scores contributed by each transformation of them, where the highest term is colored by orange. The plots on each row are under the same scale.}
	\label{fig:tab_Ts}
\end{figure}
The learned transformations of thyroid, which are visualized in \Cref{fig:tab_Ts}, offer us the possible explanations of why a data instance is an anomaly. We illustrate one normal example and three anomalies in the first row. Since the anomaly score \Cref{eqn:anomaly_score} is the sum of terms caused by each transformation. Each score shown in the last row has four bars indicating the terms caused by each transformation. The score of the normal example is very low, and its bars are invisible from the plot. The scores of three anomalies are mainly contributed by different terms (colored with orange). 
The four learned masks are colored blue and listed in four rows. $M_4$ focuses on checking the value of the fourth attribute and contributes high values to the scores of all listed anomalies. In comparison, $M_2$ is less useful for anomaly detection. \gls{ntl} is able to learn diverse transformations but is not guaranteed to learn transformations that are useful for anomaly detection, since no label is included in the training. 

We project the score terms of test data contributed by $T_1$, $T_3$, and $T_4$ to a simplex to visualize which transformation dominates the anomaly score in \Cref{fig:tab_score}. From the left subplot, we can see, the scores of normal data (blue) are not dominated by any single transformation, while the scores of anomalies are mainly dominated by $T_3$ and $T_4$. In the right subplot, we visualize the magnitudes of scores via transparency. We can see, the score magnitudes of normal data are clearly lower than the score magnitudes of anomalies.
\begin{figure}[ht]
	\centering
	\begin{subfigure}[b]{0.4\linewidth}
		\includegraphics[width=\linewidth]{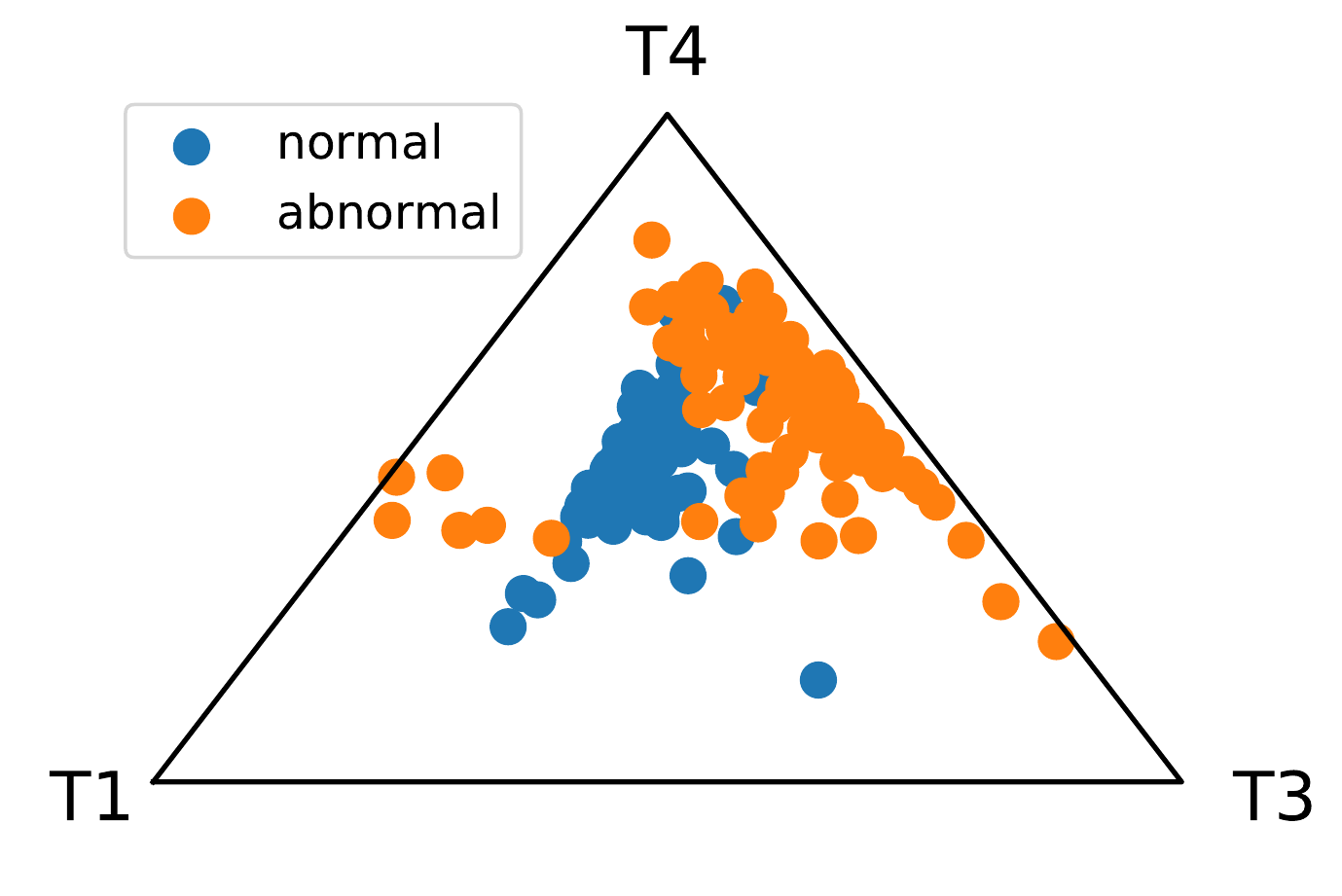}

	\end{subfigure}
	\begin{subfigure}[b]{0.4\linewidth}
		\includegraphics[width=\linewidth]{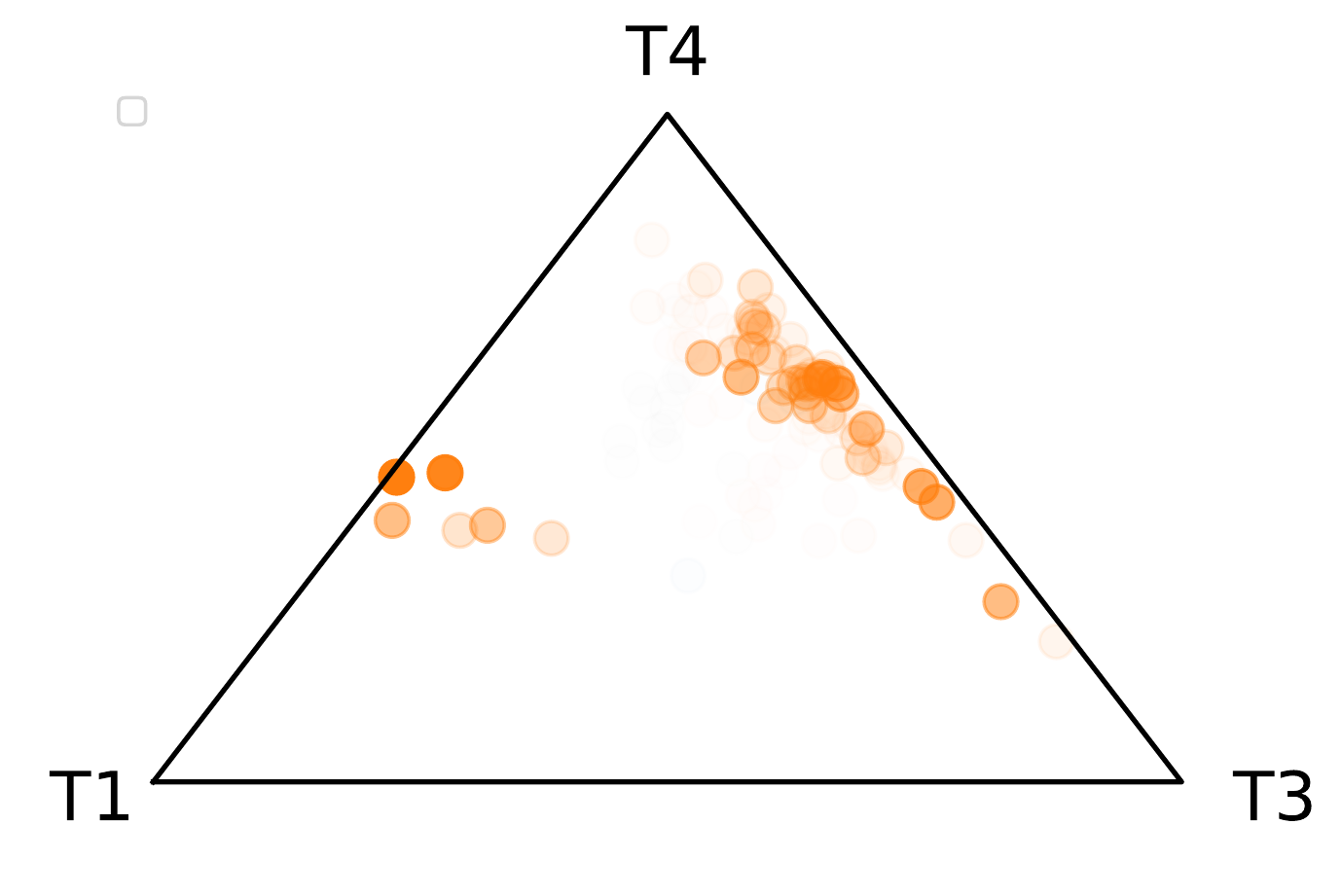}

	\end{subfigure}
	\caption{Visualizations of thyroid. We project the scores contributed by $T_1$, $T_3$, and $T_4$ to a simplex. The subplot on the left visualizes which transformation dominates the score. The subplot on the right visualizes the scores via transparency.}
	\label{fig:tab_score}
\end{figure}

\end{document}